\tikzset{
	->, 
	every state/.style={thick, fill=gray!10}, 
	initial text=$ $, 
    strike through/.append style={
        decoration={markings, mark=at position 0.5 with {
        \draw[-] ++ (0pt,-5pt) -- (0pt,5pt);}
      },postaction={decorate}}
}
\theoremstyle{plain}
\newtheorem{theorem}{Theorem}[section]
\newtheorem{proposition}[theorem]{Proposition}
\newtheorem{lemma}[theorem]{Lemma}
\theoremstyle{definition}
\newtheorem{definition}[theorem]{Definition}
\theoremstyle{remark}
\newtheorem{remark}[theorem]{Remark}
\icmltitlerunning{Agent-Specific Effects: A Causal Effect Propagation Analysis in Multi-Agent MDPs}
\begin{document}

\twocolumn[
\icmltitle{Agent-Specific Effects:\\ A Causal Effect Propagation Analysis in Multi-Agent MDPs}



\icmlsetsymbol{equal}{*}
\icmlsetsymbol{cross}{$^{\dagger}$}

\begin{icmlauthorlist}
\icmlauthor{Stelios Triantafyllou}{mpi}
\icmlauthor{Aleksa Sukovic}{mpi,uds}
\icmlauthor{Debmalya Mandal}{cross,ww}
\icmlauthor{Goran Radanovic}{mpi}
\end{icmlauthorlist}

\icmlaffiliation{mpi}{Max Planck Institute for Software Systems, Saarbrücken, Germany}
\icmlaffiliation{uds}{Saarland University, Saarbrücken, Germany}
\icmlaffiliation{ww}{University of Warwick, Department of Computer Science, UK}

\icmlcorrespondingauthor{Stelios Triantafyllou}{strianta@mpi-sws.org}

\icmlkeywords{Causal Effects, Counterfactual Reasoning, Multi-Agent Markov Decision Processes, Accountability}

\vskip 0.3in
]



\printAffiliationsAndNotice{\icmlPrevAffiliation}  

\begin{abstract}
Establishing causal relationships between actions and outcomes is fundamental for accountable multi-agent decision-making. However, interpreting and quantifying agents' contributions to such relationships pose significant challenges. These challenges are particularly prominent in the context of multi-agent sequential decision-making, where the causal effect of an agent's action on the outcome depends on how other agents respond to that action. In this paper, our objective is to present a systematic approach for attributing the causal effects of agents' actions to the influence they exert on other agents. Focusing on multi-agent Markov decision processes, we introduce agent-specific effects (ASE), a novel causal quantity that measures the effect of an agent's action on the outcome that propagates through other agents. We then turn to the counterfactual counterpart of ASE (cf-ASE), provide a sufficient set of conditions for identifying cf-ASE, and propose a practical sampling-based algorithm for estimating it. Finally, we experimentally evaluate the utility of cf-ASE through a simulation-based testbed, which includes a sepsis management environment.
\end{abstract}



\section{Introduction}\label{sec.intro}

Conducting post-hoc analysis on decisions and identifying their causal contributions to the realized outcome represents a cornerstone of accountable decision making. 
Such analysis can be particularly crucial in safety-critical applications, where accountability is paramount.
Often, these applications involve multiple decision makers or agents, who interact over a period of time, and make decisions or actions that are interdependent. 
This, in turn, implies that the impact of an agent's action on the outcome depends on how other agents respond to that action in subsequent periods.
Thus, the interdependence between the agents' actions poses challenges in quantifying their causal contributions to the outcome, since their effects are interleaved rather than isolated.

To illustrate these challenges, consider a canonical problem setting where accountability is paramount: the sepsis treatment problem~\cite{Komorowski2018}. In a multi-agent variant of this problem, a clinician and an AI agent decide on a series of treatments for an ICU patient over a period of time. For example, the AI can function as the primary decision maker, suggesting treatments that can be overridden by the clinician, who assumes a supervisory role. A simplified, one-shot decision making instance of this problem setting 
is depicted in Fig. \ref{fig: worlds} (left) through a causal graph.
In this setting, the AI agent starts by observing the patient state $S_0$ and suggesting treatment $A_0$. Subsequently, the clinician observes both $S_0$ and $A_0$, and decides whether to override the AI’s treatment or not. 
If the clinician overrides the AI, the patient outcome $Y$ is determined by $S_0$ and the alternative treatment $H_0$ suggested by the clinician. Otherwise, $Y$ is determined by state $S_0$ and the AI's treatment $A_0$.

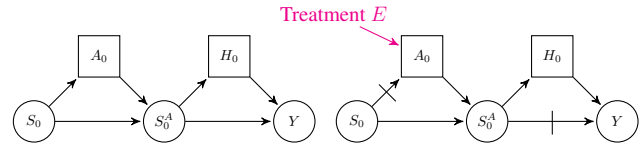
\begin{figure}[H]
\centering
\begin{tikzpicture}[
    >=stealth', 
    shorten >=1pt, 
    auto,
    node distance=0.45cm, 
    scale=.55, 
    transform shape, 
    align=center, 
    state/.style={circle, draw, minimum size=1cm}]


\node[state] (S-0) at (0,0) {$S_0$};

\node[state] (A-0) [rectangle, above right=1cm of S-0] {$A_0$};
\path (S-0) edge (A-0);

\node[state] (SA-0) [below right=1cm of A-0] {$S^A_0$};
\path (S-0) edge (SA-0);
\path (A-0) edge (SA-0);

\node[state] (H-0) [above right=1cm of SA-0, rectangle] {$H_0$};
\path (SA-0) edge (H-0);

\node[state] (Y) [below right=1cm of H-0] {$Y$};
\path (SA-0) edge (Y);
\path (H-0) edge (Y);


\node[state] (S'-0) [right=0.5cm of Y] {$S_0$};

\node[state] (A'-0) [rectangle, above right=1cm of S'-0] {$A_0$};
\path (S'-0) edge [strike through] (A'-0);

\node[] (int') [above left=0.3cm of A'-0] {\Large \textcolor{magenta}{Treatment} $\color{magenta} E$};
\path (int') edge[magenta] (A'-0);

\node[state] (SA'-0) [below right=1cm of A'-0] {$S^A_0$};
\path (S'-0) edge (SA'-0);
\path (A'-0) edge (SA'-0);

\node[state] (H'-0) [above right=1cm of SA'-0, rectangle] {$H_0$};
\path (SA'-0) edge (H'-0);

\node[state] (Y') [below right=1cm of H'-0] {$Y$};
\path (SA'-0) edge [strike through] (Y');
\path (H'-0) edge (Y');

\end{tikzpicture}

\captionsetup{type=figure}
\caption{
The figure illustrates the sepsis treatment problem setting over $1$ 
time-step. 
Squares denote agents' actions, $A$ for AI and $H$ for clinician. 
Circles $S$ are patient states, while $S^A$ include both $S$ and $A$, i.e., $S^A = (S, A)$. 
$Y$ is the outcome. 
Edges that are striked through represent deactivated edges.
Exogenous arrows (magenta) represent interventions on $A_0$ that fix its value to the action indicated.
The graph on the right depicts a path-specific effect of treatment $E$ on $Y$.
}
\label{fig: worlds}
\end{figure}

Given the causal graph, we can utilize standard causal concepts to quantify the impact of the agents' actions on the outcome $Y$.
Arguably, the most straightforward way of doing so would be through the notion of \textit{total counterfactual effects}. Given, for example, a scenario where the AI suggests treatment $C$ and an undesirable outcome is realized, the total counterfactual effect can be used to measure the impact that an alternative treatment $E$ would have on the outcome. 
Evaluating this effect involves intervening on the AI's action, setting it to treatment $E$, and measuring the probability that the undesirable outcome would have been avoided.
However, this analysis may be misleading for assessing the accountability of the AI, as it is unclear to what degree its action affects the outcome through the clinician. 
It is possible that treatment $E$ yields a high total counterfactual effect on the outcome, because it triggers a different response from the clinician, rather than being genuinely better for the patient than treatment $C$.

To quantify the effect of the AI's action that propagates through the clinician, we can utilize the concept of \textit{path-specific effects} \cite{pearl2001direct}.
In particular, we can measure the total counterfactual effect of treatment $E$ on outcome $Y$ in a modified model where the edge between $S_0^A$ and $Y$ in Fig. \ref{fig: worlds} (right) is deactivated.
The latter can be interpreted as if the outcome will be realized based on the factual action of the AI, i.e., treatment $C$, and the action taken by the clinician, who sees the alternative action, i.e., treatment $E$.

A natural generalization to multiple time steps is depicted in Fig. \ref{fig: pse}.
To ensure that action $A_1$ of the AI is the same as the factual one, and hence that the
AI's response in the next time-step does not affect the counterfactual outcome,
we could additionally deactivate the edge between $S_1$ and $A_1$.
While intuitively appealing, this effect does not capture higher-order dependencies between the agents' actions. 
Namely, the clinician's policy depends on the AI's policy, so by fixing the AI's subsequent actions to the ones originally taken, we inadvertently remove such a dependency from the analysis. 

\begin{figure}[h]
\centering
\begin{tikzpicture}[
    >=stealth', 
    shorten >=1pt, 
    auto,
    node distance=0.5cm, 
    scale=.55, 
    transform shape, 
    align=center, 
    state/.style={circle, draw, minimum size=1cm}]


\node[state] (S-0) at (0,0) {$S_0$};

\node[state] (A-0) [rectangle, above right=1cm of S-0] {$A_0$};
\path (S-0) edge [strike through] (A-0);

\node[] (int) [above left=0.3cm of A-0] {\Large \textcolor{magenta}{Treatment} $\color{magenta} E$};
\path (int) edge[magenta] (A-0);

\node[state] (SA-0) [below right=1cm of A-0] {$S^A_0$};
\path (S-0) edge (SA-0);
\path (A-0) edge (SA-0);

\node[state] (H-0) [above right=1cm of SA-0, rectangle] {$H_0$};
\path (SA-0) edge (H-0);

\node[state] (S-1) [below right=1cm of H-0] {$S_1$};
\path (SA-0) edge [strike through] (S-1);
\path (H-0) edge (S-1);

\node[state] (A-1) [thick, cyan, rectangle, above right=1cm of S-1] {$A_1$};
\path (S-1) edge [strike through] (A-1);

\node[state] (SA-1) [below right=1cm of A-1] {$S^A_1$};
\path (S-1) edge (SA-1);
\path (A-1) edge (SA-1);

\node[state] (H-1) [above right=1cm of SA-1, rectangle] {$H_1$};
\path (SA-1) edge (H-1);

\node[state] (Y) [below right=1cm of H-1] {$Y$};
\path (SA-1) edge (Y);
\path (H-1) edge (Y);
\end{tikzpicture}

\captionsetup{type=figure}
\caption{
The causal graph depicts a path-specific effect of treatment $E$ on outcome $Y$ in the sepsis treatment problem setting over 2 time-steps. It aims to capture the counterfactual effect of $E$ that propagates through the clinician.  A cyan colored node signifies that the node is set to the action that the agent took in the factual scenario, i.e., under treatment $C$.}
\label{fig: pse}
\end{figure}
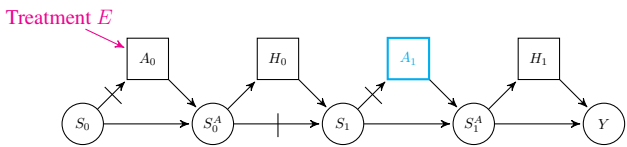

Our objective is to capture these dependencies through a novel causal notion termed \textit{agent-specific effects}.
At a high level, 
we contend that the clinician should behave as if the AI's actions are not fixed, but rather responsive to the clinician's decisions.
However, the effect should be measured using the factual actions of the AI. 
We show that analyzing this effect necessitates counterfactual reasoning across three distinct scenarios, and hence it cannot be expressed through path-specific effects. Our contributions are as follows.

\textbf{Agent-Specific Effects.} (Section \ref{sec.def}) 
Focusing on multi-agent Markov decision processes (MMDPs), we introduce \textit{(counterfactual) agent-specific effects}, a novel causal quantity that measures the effect of an agent's action on the MMDP outcome through a selected subset of agents.
To enable counterfactual reasoning we represent MMDPs by structural causal models (Section \ref{sec.framework}).

\textbf{Identifiability.} (Section \ref{sec.id})  We delve into the identifiability of agent-specific effects, acknowledging their general non-identifiability. To overcome this challenge, we consider an existing causal property, which we refer to as \textit{noise monotonicity}. We show that assuming noise monotonicity suffices for the identification of counterfactual agent-specific effects. 
Notably, compared to closely related identifiability results \cite{lu2020sample, nasr2023counterfactual}, ours does not rely on the assumption of model bijectiveness, and hence is more general.
Lastly, we show that noise monotonicity does not reduce the expressivity of distributions, and that it strictly implies \textit{monotonicity} \cite{pearl1999probabilities} in binary models.

\textbf{Algorithm.} (Section \ref{sec.alg}) We propose a practical sampling-based algorithm for the estimation of counterfactual agent-specific effects. 
We show that this algorithm outputs unbiased estimates provided that noise monotonicity holds.

\textbf{Connections to PSE.} (Section \ref{sec.def} and \ref{sec.connections})
We discuss in detail the importance of agent-specific effects and their conceptual benefits relative to path-specific effects.
We introduce a generalized notion of the latter, which allows us to establish a formal connection between agent- and path-specific effects.

\textbf{Experiments.} (Section \ref{sec.exp})
We conduct extensive experiments on two test-beds, \textit{Graph} and \textit{Sepsis}, evaluating the robustness and practicality of our approach.\footnote{Code to reproduce our experiments is available at \href{https://github.com/stelios30/agent-specific-effects.git}{https://github.com/stelios30/agent-specific-effects.git}.}
Our results indicate that the effect of an agent's action 
on the outcome can be frequently and to a great extent attributed to the behavior of other agents in the system.
This deduction is supported by our analysis of agent-specific effects.
Remarkably, our analysis yields valuable insights, even in scenarios where our theoretical assumptions do not hold, such as revealing aspects of the clinician's trust in the AI.


\subsection{Related Work}\label{sec.related_work}

This paper is related to works on \textit{mediation} or \textit{path analysis} with a single 
\cite{pearl2001direct, zhang2018non, zhang2018fairness}
or multiple
\cite{daniel2015causal, steen2017flexible, avin2005identifiability, singal2021flow, chiappa2019path} mediators.
Unlike this line of work, we do not study the effect of an exposure variable (in our case action) on a response variable (in our case outcome) that propagates through some variable or path in a general causal model. Instead, we focus on MMDPs, and aim to quantify and reason about effects that propagate through a subset of decision-making agents, who control multiple sequentially dependent actions.

Our work also relates to the area of \textit{causal contributions}, which studies the problem of attributing to multiple causes a contribution to a target effect \cite{jung2022measuring}, e.g., to explain the effect of different model features on predictions \cite{heskes2020causal}. Even though we do not develop here such attribution methods, we view this as a natural next step.

Close to our work are also papers that look into the problem of measuring the \textit{influence} between actions in multi-agent decision making systems \cite{jaques2019social, pieroth2022detecting}. The main distinction between this type of works and ours is that we care about the influence of an action on the outcome through the actions of other agents, rather than merely the influence of one action on an other. 

From a technical point of view, this paper is close to works which combine SCMs with sequential decision making frameworks similar to MMDPs, e.g., MDPs \cite{tsirtsis2024finding}, POMDPs \cite{buesing2018woulda} and Dec-POMDPs \cite{triantafyllou2023towards}. 

Furthermore, the multi-agent variant of the sepsis treatment setting we consider in this paper, presented in Sections \ref{sec.intro} and \ref{sec.exp}, is aligned with the extensive literature on \textit{human-in-the-loop} (HITL) decision making. HITL calls for human experts to monitor the AI's decisions during training \cite{saunders2018trial} and/or execution \cite{lynn2019artificial}, in order to ensure safety \cite{xu2022look} and public trust \cite{beil2019ethical}. Within healthcare, clinicians are tasked with overseeing the decision-making process and intervene by overriding the AI's decisions, whenever necessary \cite{liu2022medical}. Support for human oversight has been also recognized by the European Union as one of the main requirements for trustworthy AI \cite{ethicsEU}.

Finally, this paper is related to works that utilize causal tools to study accountability or related concepts in decision making.
We consider works on \textit{responsibility} in multi-agent one-shot
\cite{chockler2004responsibility}
or sequential
\cite{triantafyllou2022actual}
decision making,
as well as works that focus on
\textit{explainability}
\cite{madumal2020explainable},
\textit{incentives}
\cite{everitt2021agent, farquhar2022path},
\textit{harm}
\cite{richens2022counterfactual}, 
\textit{manipulation} 
\cite{carroll2023characterizing} 
and \textit{intention} 
\cite{halpern2018towards}.
Our view is that agent-specific effects can be integrated into all aforementioned works enhancing their causal reasoning, e.g., by modifying current responsibility attribution methods to account for \textit{actual causes} whose effects propagate through other agents, or by defining agent-specific counterfactual harm to measure the harm caused by an action through a set of agents.

More related work can be found in Appendix \ref{app.related_work}.


\section{Framework and Preliminaries}\label{sec.framework}

In this section, we introduce and establish a connection between
the two main parts of our formal framework, 
Structural Causal Models (SCMs) 
\cite{pearl2009causality} 
and Multi-Agent Markov Decision Processes (MMDPs) 
\cite{boutilier1996planning}.

\subsection{Graphs}

We begin by introducing some necessary graph notation.
For a directed graph $G$, we denote with $\mathbf{V}(G)$ and $\mathbf{E}(G)$ the sets of vertices and edges in $G$. With $\mathbf{E}_+^{V^i}(G)$ or $\mathbf{E}_+^i(G)$ for short, we denote the set of outgoing edges of node $V^i$ in $G$.
Furthermore, we use $Pa^i(G)$ to denote the parent set of $V^i$ in $G$.
We call graph $g$ an \textit{edge-subgraph} of $G$ if $\mathbf{V}(g) = \mathbf{V}(G)$ and $\mathbf{E}(g) \subseteq \mathbf{E}(G)$. We also denote with $\overline{g}$ the edge-subgraph of $G$ with $\mathbf{E}(\overline{g}) = \mathbf{E}(G) \backslash \mathbf{E}(g)$.

\subsection{Structural Causal Models}

An SCM $M$ is a tuple $\langle\mathbf{U}, \mathbf{V}, \mathcal{F}, P(\mathbf{u})\rangle$, where:
$\mathbf{U}$ is a set of unobservable noise variables with joint probability distribution $P(\mathbf{u})$;
$\mathbf{V}$ is a set of observable random variables;
$\mathcal{F}$ is a collection of functions. Each random variable $V^i \in \mathbf{V}$ corresponds to one noise variable $U^i \in \mathbf{U}$ and one function $f^i \in \mathcal{F}$.
The value of $V^i$ is determined by the \textit{structural equation} $V^i := f^i(\mathbf{PA}^i, U^i)$, where $\mathbf{PA}^i \subseteq \mathbf{V} \backslash V^i$.
Moreover, SCM $M$ induces a \textit{causal graph} $G$, where $\mathbf{V}$ constitutes the set of nodes $\mathbf{V}(G)$, and for every node $V^i \in \mathbf{V}(G)$ set $\mathbf{PA}^i$ constitutes its parent set $Pa^i(G)$.

\textbf{Interventions.}
An \textit{intervention} on variable $X \in \mathbf{V}$ in SCM $M$ describes the process of modifying the structural equation which determines the value of $X$ in $M$. 
We consider two types of interventions, \textit{hard interventions} and \textit{natural interventions}.
A \textbf{hard intervention} $do(X := x)$ fixes the value of $X$ in $M$ to a constant $x \in \mathrm{dom}\{X\}$.\footnote{$\mathrm{dom}\{X\}$ here denotes the domain of variable $X$.}
The new SCM that is derived from performing $do(X := x)$ in $M$ is denoted by $M^{do(X := x)}$, and is identical to $M$ except that function $f^X$ is replaced by the constant function.

Now, let $Z \in \mathbf{V}$. We denote with $Z_{do(X := x)}$ or $Z_x$ for short, the random variable which, for some noise $\mathbf{u} \sim P(\mathbf{u})$, takes the value that $Z$ would take in $M^{do(X := x)}$.
A \textbf{natural intervention} $do(Z := Z_x)$ replaces the function $f^Z$ of $M$ with $Z_x$.
The derived SCM is denoted by $M^{do(Z := Z_x)}$.

Hard and natural interventions on a set of variables $\mathbf{W} \subseteq \mathbf{V}$, instead of a singleton, are defined in a similar way.

\textbf{Distributions.}
Joint distribution $P(\mathbf{V})_M$ is referred to as the \textit{observational distribution} of $M$.
Furthermore, distributions of the form $P(Y_x)_M$ and of the form $P(Y_x | \mathbf{v})_M$, where $\mathbf{v} \in \mathrm{dom}\{\mathbf{V}\}$ and $\mathbf{v}(X) \neq x$,\footnote{$\mathbf{v}(X)$ here denotes the value of variable $X$ in $\mathbf{v}$.}
are referred to as \textit{interventional} and \textit{counterfactual distributions}, respectively. 
Subscript $M$ here implies that the probability distribution is defined over SCM $M$. 
Note that we may drop this subscript when the underlying SCM is assumed or obvious. 

\textbf{Identifiability.}
Identifying whether or not an interventional or counterfactual distribution over SCM $M$ can be uniquely estimated from the causal graph and observational distribution of $M$ is a fundamental challenge in causality.
Formally, a distribution $P(\cdot)_M$ is \textit{identifiable}
if $P(\cdot)_{M_1} = P(\cdot)_{M_2}$ for any two SCMs $M_1$ and $M_2$, s.t.~(a) the causal graphs induced by $M_1$ and $M_2$ are identical to that of $M$, and (b) the observational distributions of $M_1$ and $M_2$ agree with $M$'s. Otherwise, we say that $P(\cdot)_M$ is \textit{non-identifiable}.

\textbf{Effects.} The \textit{total causal effect} of intervention $do(X := x)$ on $Y = y$, relative to reference value $x^*$, is defined as
\begin{align*}
    \mathrm{TCE}_{x, x^*}(y)_M = P(y_{x})_M - P(y_{x^*})_M,
\end{align*}
where $P(y_{x})_M$ is short for $P(Y_{x} = y)_M$.

Given a realization $\mathbf{v}$ of $\mathbf{V}$, the \textit{total counterfactual effect} of intervention $do(X := x)$ on $Y = y$ is defined as\footnote{
We consider $x^*$ here to be $\mathbf{v}(X)$, but it could also be anything.
}
\begin{align*}
    \mathrm{TCFE}_{x}(y|\mathbf{v})_M = P(y_{x}|\mathbf{v})_M - P(y|\mathbf{v})_M.
\end{align*}

\textbf{Assumptions.} Throughout this paper, we assume SCMs to be \textit{recursive}, i.e., their causal graphs are acyclic. We further make the \textit{exogeneity} assumption: noise variables are mutually independent, $P(\mathbf{u}) = \prod_{u^i \in \mathbf{u}} P(u^i)$. We also assume that the domains of observable variables are finite and discrete, and that noise variables take numerical values.

\subsection{Multi-Agent Markov Decision Processes}\label{sec.framework_mmdp}

An MMDP is a tuple $\langle \mathcal{S}, \{1, ..., n\}, \mathcal{A}, T, h, \sigma \rangle$, where: $\mathcal{S}$ is the state space; $\{1, ..., n\}$ is the set of agents; $\mathcal{A} = \times_{i = 1}^n \mathcal{A}_i$ is the joint action space, with $\mathcal{A}_i$ being the action space of agent $i$; $T : \mathcal{S} \times \mathcal{A} \times \mathcal{S} \rightarrow [0, 1]$ is the transitions probability function; $h$ is the finite time horizon; $\sigma$ is the initial state distribution.\footnote{For ease of notation, rewards are considered part of the state.}
For example, if the agents take joint action $\mathbf{a}$ in state $s$, then the environment transitions to state $s'$ with probability equal to $T(s'|s, \mathbf{a})$.
We denote with $\pi_i$ the policy of agent $i$ and with $\pi$ the agents' joint policy. As such, the probability of agent $i$ taking action $a_i$ in state $s$ is given by $\pi_i(a_i|s)$.
Furthermore, we call trajectory a realization $\tau$ of the MMDP, i.e., $\tau$ consists of a sequence of state-joint action pairs $\{(s_t, \mathbf{a}_t)\}_{t \in \{0, ..., h - 1\}}$ and a final state $s_h$.

Next, we describe how to model an MMDP coupled with a joint policy $\pi$ as an SCM $M$, henceforth called MMDP-SCM.
First, we define the set of observable variables $\mathbf{V}$ to include all state and action variables of the MMDP.
To each such variable, we assign a noise variable $U$ and a function $f$. 
We then formulate the structural equations of $M$ as
\begin{align}\label{eq.struct_eq}
    & S_0 = f^{S_0}(U^{S_0});
    \quad S_t = f^{S_t}(S_{t-1}, \mathbf{A}_{t-1}, U^{S_t});\nonumber\\
    & A_{i,t} = f^{A_{i, t}}(S^t, U^{A_{i,t}}).
\end{align}
The causal graph of $M$ is included in Appendix \ref{app.causal_graph}.\footnote{
The causal graph in Appendix \ref{app.causal_graph} corresponds to the SCM of a general MMDP with $n$ agents and horizon $h$. 
For demonstration purposes, Fig.~\ref{fig: worlds}-\ref{fig: ase} show causal graphs that correspond to SCMs of Turn-Based MMDPs -- a special case of (simultaneous-moves) MMDPs \cite{jia2019feature, sidford2020solving}.
}
Note that \cite{buesing2018woulda} shows that such parameterization of a partially observable MDP to an SCM is always possible. Their result can be directly extended to MMDPs.


\section{Formalizing Agent-Specific Effects}\label{sec.def}

In this section, we introduce the notion of \textit{agent-specific effects} (ASE) and its counterfactual counterpart (cf-ASE).  For a given MMDP-SCM $M$, the purpose of ASE and cf-ASE is to quantify the total causal and counterfactual effect, respectively, of an action $A_{i,t} = a_{i, t}$ on a response variable $Y$, only through a selected set of agents, also called the \textit{effect agents}.
To that end, our definitions consider \textbf{three alternative worlds}. In the first world, action $a_{i, t}$ is fixed. In the second world, a reference $a^*_{i, t}$ (for ASE) or realized $\tau(A_{i, t})$ (for cf-ASE) action is fixed instead. In the third world, the reference or realized action is fixed, and additionally for every time-step $t' > t$ the effect agents are forced to take the actions that they would naturally take in the first world, while all other agents, also called the \textit{non-effect agents}, are forced to take the actions that they would naturally take in the second world.
Note that response variable $Y$ can be any state variable in $M$ that chronologically succeeds $A_{i,t}$.
We formally define ASE and cf-ASE based on the language of natural interventions as follows.\footnote{We note that our main theoretical and experimental results are w.r.t.~cf-ASE. However, we also provide the definition of ASE for completeness and in order to enable a direct comparison with related causal quantities (see Section \ref{sec.connections}).}
\begin{definition}[ASE]\label{def.ase}
Given an MMDP-SCM $M$ and a non-empty subset of agents $\mathbf{N}$ in $M$, the $\mathbf{N}$-specific effect of intervention $do(A_{i,t} := a_{i,t})$ on $Y = y$, relative to reference action $a^*_{i,t}$, is defined as
\begin{align*}
    \mathrm{ASE}^{\mathbf{N}}_{a_{i,t}, a^*_{i,t}}(y)_M = P(y_{a^*_{i,t}})_{M^{do(I)}} - P(y_{a^*_{i,t}})_M,
\end{align*}
where $I = \{Z := Z_{a^*_{i,t}} | Z \in \{A_{i',t'}\}_{i' \notin \mathbf{N}, t' > t}\} \cup \{Z := Z_{a_{i,t}} | Z \in \{A_{i',t'}\}_{i' \in \mathbf{N}, t' > t}\}$.
\end{definition}
\begin{definition}[cf-ASE]\label{def.cf-ase}
Given an MMDP-SCM $M$, a non-empty subset of agents $\mathbf{N}$ in $M$ and a trajectory $\tau$ of $M$, 
the counterfactual $\mathbf{N}$-specific effect of intervention ${do(A_{i,t}:= a_{i,t})}$ on $Y = y$ is defined as
\begin{align*}
    \mathrm{cf\text{-}ASE}^{\mathbf{N}}_{a_{i,t}}(y|\tau)_M = &P(y_{\tau(A_{i,t})}|\tau;M)_{M^{do(I)}} - P(y|\tau)_M,
\end{align*}
where $I = \{A_{i',t'} := \tau(A_{i',t'})\}_{i' \notin \mathbf{N}, t' > t}
\cup \{Z := Z_{a_{i,t}} | Z \in \{A_{i',t'}\}_{i' \in \mathbf{N}, t' > t}\}$. 
\end{definition}

Going back to the sepsis problem from Section \ref{sec.intro}, the clinician here takes the role of the effect agent ($i' \in \mathbf{N}$) and the AI that of the non-effect agent ($i' \notin \mathbf{N}$). The  (counterfactual) clinician-specific effect of intervention $do(A_0 := E)$ on $Y$ is illustrated in Fig. \ref{fig: ase}.
In this figure, we see that the AI takes its factual actions, while the clinician takes the actions that it would naturally take under intervention $do(A_0 := E)$. This should be contrasted with the path-specific effect in Fig.~\ref{fig: pse}, where the clinician behaves as if the AI agent takes its factual action in $A_1$. 
In Appendix \ref{app.example}, we provide the formal expression of Definition \ref{def.cf-ase} to this example.

To demonstrate the importance of this distinction, consider a clinician who does not account for AI's actions when deciding whether or not to override them, but instead bases its decisions solely on the patient's state, e.g., assuming full control whenever the patient is in critical condition. 
Under this behavior, the path-specifics effect in Fig.~\ref{fig: pse} always evaluates to $0$.
Namely, the intervention on $A_0$ does not impact the realization of $S_1$: the clinician does not respond to the AI's action, while the edge between $S_0^A$ and $S_1$ is deactivated. Hence, the realized states and actions in later periods are also the same. 
We deem this to be problematic since we block the influence that the intervention on $A_0$ may have on the clinician's actions in later periods (e.g., $H_1$ in Fig. \ref{fig: pse}) through state $S_1$.
On the other hand, for the agent-specific effect in Fig.~\ref{fig: ase}, the clinician acts according to the states that naturally realize under intervention $do(A_0 := E)$, so this influence is not blocked. 
For example, if the initial state is such that the clinician does not decide to override, the realization of state $S_1$ under intervention $do(A_0 := E)$ could be possibly different than the one in the factual world.  

While it suffices to reactivate the edge between $S_0^A$ and $S_1$ in Fig.~\ref{fig: pse} to obtain a path-specific effect that alleviates the aforementioned issue, this modified version can lead to a different type of counter-intuitive results. Namely, in a scenario where the clinician never overrides the AI's actions, this measure might have a positive evaluation.
Our experiments in Section \ref{sec.exp} demonstrate this tendency. 
In contrast, the agent-specific effect in Fig. \ref{fig: ase} does not have this issue since we intervene on $A_0$ with $do(A_0 := C)$.

Despite these distinctions, we can formalize both ASE and PSE using a generalized notion of path-specific effects, which we introduce in Section \ref{sec.connections}.

\begin{figure}[h]
\centering
\begin{tikzpicture}[
    >=stealth', 
    shorten >=1pt, 
    auto,
    node distance=0.5cm, 
    scale=.55, 
    transform shape, 
    align=center, 
    state/.style={circle, draw, minimum size=1cm}]


\node[state] (S'-0) [right=1cm of Y] {$S_0$};

\node[state] (A'-0) [rectangle, above right=1cm of S'-0] {$A_0$};
\path (S'-0) edge [strike through] (A'-0);

\node[] (int') [above left=0.3cm of A'-0] {\Large \textcolor{cyan}{Treatment} $\color{cyan} C$};
\path (int') edge[cyan] (A'-0);

\node[state] (SA'-0) [below right=1cm of A'-0] {$S^A_0$};
\path (S'-0) edge (SA'-0);
\path (A'-0) edge (SA'-0);

\node[state] (H'-0) [thick, magenta, above right=1cm of SA'-0, rectangle] {$H_0$};
\path (SA'-0) edge [strike through] (H'-0);

\node[state] (S'-1) [below right=1cm of H'-0] {$S_1$};
\path (SA'-0) edge (S'-1);
\path (H'-0) edge (S'-1);

\node[state] (A'-1) [thick, cyan, rectangle, above right=1cm of S'-1] {$A_1$};
\path (S'-1) edge [strike through] (A'-1);

\node[state] (SA'-1) [below right=1cm of A'-1] {$S^A_1$};
\path (S'-1) edge (SA'-1);
\path (A'-1) edge (SA'-1);

\node[state] (H'-1) [thick, magenta, above right=1cm of SA'-1, rectangle] {$H_1$};
\path (SA'-1) edge [strike through] (H'-1);

\node[state] (Y') [below right=1cm of H'-1] {$Y$};
\path (SA'-1) edge (Y');
\path (H'-1) edge (Y');
\end{tikzpicture}

\captionsetup{type=figure}
\caption{
The figure depicts the agent-specific effect of treatment $E$ on outcome $Y$ in the sepsis problem from the introduction. A magenta (resp. cyan) colored node signifies that the node is set to the action that the agent would naturally take under intervention $E$ (resp. the action that the agent took in the factual scenario, i.e., under treatment $C$). }
\label{fig: ase}
\end{figure}
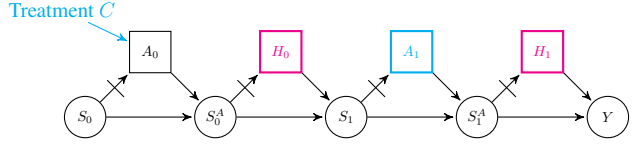


\section{Identifiability Results}\label{sec.id}

Note that $P(y|\tau)_M$ from Definition \ref{def.cf-ase} is $1$ if $y = \tau(Y)$, and $0$ otherwise. Thus, $P(y|\tau)_M$ is identifiable.
In contrast, $P(y_{\tau(A_{i,t})}|\tau;M)_{M^{do(I)}}$ is non-identifiable, since evidence $\mathbf{V} = \tau$ is contradictory to intervention $do(I)$ \cite{shpitser2008complete}. Subsequently, $\mathrm{cf\text{-}ASE}^{\mathbf{N}}_{a_{i,t}}(y|\tau)_M$ is also non-identifiable under our current set of assumptions.

Theorem \ref{thrm.ase_nonid} states that $\mathrm{ASE}^{\mathbf{N}}_{a_{i,t}, a^*_{i,t}}(y)_M$, which is defined over interventional probabilities instead of counterfactual ones, is also non-identifiable.\footnote{Note that the same non-identifiability results would also hold even if we had access to interventional data.} The proof is based on the \textit{recanting witness criterion} \cite{avin2005identifiability}.
\begin{theorem}\label{thrm.ase_nonid}
    Let $M$ be an MMDP-SCM, and $\mathbf{N}$ be a non-empty subset of agents in $M$. It holds that the $\mathbf{N}$-specific effect $\mathrm{ASE}^{\mathbf{N}}_{a_{i,t}, a^*_{i,t}}(y)_M$ is non-identifiable if $t_Y > t + 1$, where $t_Y$ denotes the time-step of variable $Y$.   
\end{theorem}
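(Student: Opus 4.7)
The plan is to follow the recanting-witness template of Avin et al.~(2005) and show non-identifiability by exhibiting two MMDP-SCMs $M_1$ and $M_2$ that share the same causal graph $G$ and observational distribution $P(\mathbf{V})$ yet yield different values of $\mathrm{ASE}^{\mathbf{N}}_{a_{i,t}, a^*_{i,t}}(y)$. The second term $P(y_{a^*_{i,t}})_M$ in Definition~\ref{def.ase} is a standard $do$-query in a Markovian SCM and is therefore identifiable from $G$ and $P(\mathbf{V})$; it agrees across the two models, so the whole argument reduces to making the first term $P(y_{a^*_{i,t}})_{M^{do(I)}}$ differ.

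The key observation, made possible by $t_Y > t+1$, is that the state $S_{t+1}$ is a recanting witness: by Eq.~\eqref{eq.struct_eq} it is simultaneously a parent of an effect-agent action at time $t+1$ (which $I$ replaces by its counterfactual value $Z_{a_{i,t}}$) and of the subsequent state $S_{t+2}$ (whose function in $M^{do(I)}$ remains unchanged and hence continues to operate in the $a^*_{i,t}$ world). Consequently $P(y_{a^*_{i,t}})_{M^{do(I)}}$ depends on the joint law of the counterfactual pair $(S_{t+1}^{a_{i,t}}, S_{t+1}^{a^*_{i,t}})$, which is not pinned down by $P(\mathbf{V})$. I would then specialize to a minimal horizon-$(t{+}2)$ binary MMDP with two agents, agent~$1 \in \mathbf{N}$, trivial time-$t$ dynamics for the other agent, policy $A_{1,t+1} := S_{t+1}$, and outcome $Y := S_{t+1} \oplus A_{1,t+1}$. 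In $M_1$ set $S_{t+1} := A_{i,t} \oplus U$ with $U \sim \mathrm{Bernoulli}(1/2)$, coupling the counterfactual pair deterministically; in $M_2$ let $S_{t+1} := U_{A_{i,t}}$ with independent $U_0, U_1 \sim \mathrm{Bernoulli}(1/2)$, which makes the pair independent. Both choices produce the same conditional $P(S_{t+1} \mid A_{i,t})$ and hence the same observational distribution, while sharing the graph $G$; a short computation then gives that $P(y_{a^*_{i,t}})_{M_1^{do(I)}}$ places all its mass on $y = a_{i,t} \oplus a^*_{i,t}$, whereas $P(y_{a^*_{i,t}})_{M_2^{do(I)}} = 1/2$, so the two ASE values disagree whenever $a_{i,t} \neq a^*_{i,t}$.

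The main obstacle I expect is ensuring the argument is uniform across all non-empty $\mathbf{N}$. The construction above uses only an agent inside $\mathbf{N}$ together with the direct edge $S_{t+1} \to S_{t+2}$, so it requires no non-effect agent and in particular covers the corner case $\mathbf{N} = \{1,\dots,n\}$. What remains is to verify that $M_1$ and $M_2$ indeed induce a common MMDP causal graph satisfying $t_Y > t+1$, which follows directly from the choice $t_Y = t+2$ and the fact that the two SCMs differ only in the noise encoding of the single mechanism $f^{S_{t+1}}$.
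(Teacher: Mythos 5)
Your proposal is correct, and your counterexample checks out: in $M_1$ the pair $(S_{t+1,a_{i,t}},S_{t+1,a^*_{i,t}})$ is perfectly coupled so that $Y = S_{t+1,a^*_{i,t}} \oplus A_{1,t+1,a_{i,t}} = S_{t+1,a^*_{i,t}} \oplus S_{t+1,a_{i,t}} = a_{i,t}\oplus a^*_{i,t}$ deterministically, while in $M_2$ it is a fair coin; both models agree on the graph and on $P(\mathbf{V})$, and the subtracted term $P(y_{a^*_{i,t}})_M$ is identifiable by truncated factorization under exogeneity, so the two ASE values genuinely disagree. However, your route differs from the paper's. The paper does not construct explicit models: it first argues that the non-effect-agent part of $I$ is vacuous when the outer intervention is $do(A_{i,t}:=a^*_{i,t})$, then shows that the resulting natural-intervention model $M^{do(I')}$ coincides with the path-deactivated model $M_g$ of Definition~\ref{def.pse} (with the edges $S_{t+1}\to A_{i',t+1}$, $i'\in\mathbf{N}$, removed), and finally invokes the recanting witness criterion of Avin et al.\ as a black box on the paths $A_{i,t}\to S_{t+1}\to S_{t+2}$ (in $g$) versus $A_{i,t}\to S_{t+1}\to A_{j,t+1}\to S_{t+2}$ (not in $g$). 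Your proof is essentially an unpacking of that criterion: it is more self-contained and elementary (no reduction to PSE, no external non-identifiability theorem), at the cost of having to verify the XOR computation and the equality of observational distributions by hand; the paper's reduction buys brevity and makes the structural reason for failure (the recanting witness $S_{t+1}$) citable, but leans on the correctness of the $M^{do(I')}=M_g$ identification. One small gap on your side: you only treat $t_Y=t+2$, whereas the theorem asserts non-identifiability for all $t_Y>t+1$. The paper dispatches this with one sentence (any later state causally depends on $S_{t+2}$); in your framework the clean fix is to pad the counterexample with identity transitions $S_{t'}:=S_{t'-1}$ for $t'>t+2$, under which the same computation gives the same pair of disagreeing values. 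You should also make explicit that the four-valued noise $(U_0,U_1)$ of $M_2$ is encoded as a single numerical noise variable for $S_{t+1}$, so that $M_2$ is a bona fide SCM in the paper's sense.
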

To overcome the aforementioned non-identifiability challenges, we consider the following property, which was first introduced in \cite{pearl2009causality} [p.~295] for binary SCMs.
\begin{definition}[Noise Monotonicity]
\label{def.nm}
Given an SCM $M$ with causal graph $G$, we say that variable $V^i  \in \mathbf{V}$ is noise-monotonic in $M$ w.r.t.~a total ordering $\leq_i$ on $\mathrm{dom}\{V^i\}$, if for any $pa^i \in \mathrm{dom}\{Pa^i(G)\}$ and $u^i_1, u^i_2 \sim P(U^i)$ s.t. $u^i_1 < u^i_2$, it holds that
$
f^i(pa^i, u^i_1) \leq_i f^i(pa^i, u^i_2)
$.
\end{definition}
Assuming \textit{noise monotonicity} means that functions $f^i$ of the underlying SCM, or MMDP-SCM in our case, are monotonic w.r.t.~noise variables $U^i$. The next theorem states that noise monotonicity suffices to render cf-ASE identifiable.\footnote{Even though we do not explicitly prove it in this paper, it follows that the same result also holds for ASE.}
\begin{theorem}\label{thrm.id_ase_cf}
Let $M$ be an MMDP-SCM, $\mathbf{N}$ be a non-empty subset of agents in $M$, and $\tau$ be a trajectory of $M$. The counterfactual $\mathbf{N}$-specific effect $\mathrm{cf\text{-}ASE}^{\mathbf{N}}_{a_{i,t}}(y|\tau)_M$ is identifiable, if for every variable $Z \in \mathbf{V}$ there is a total ordering $\leq_Z$ on $\mathrm{dom}\{Z\}$ w.r.t.~which $Z$ is noise-monotonic in $M$.
\end{theorem}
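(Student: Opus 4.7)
The plan is to establish identifiability by showing that any two MMDP-SCMs $M_1$ and $M_2$ sharing the same causal graph, the same observational distribution, and both satisfying the noise-monotonicity hypothesis must assign the same value to $\mathrm{cf\text{-}ASE}^{\mathbf{N}}_{a_{i,t}}(y|\tau)_M$. Since $P(y|\tau)_M = \mathds{1}[y = \tau(Y)]$ is trivially identifiable, the work concentrates on the nested counterfactual $P(y_{\tau(A_{i,t})}|\tau;M)_{M^{do(I)}}$, which I would tackle via the standard abduction--action--prediction decomposition. The technical core is abduction.

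First I would reduce to a canonical form: for each variable $V^i$, re-parameterize $U^i$ by its own CDF so that $\widetilde{U}^i \sim \mathrm{Uniform}[0,1]$. This transformation preserves noise monotonicity (it composes $f^i$ with a monotone map) and leaves the observational and all counterfactual distributions invariant. In the canonical form, noise monotonicity forces each $\widetilde{f}^i(pa^i,\cdot)$ to be a non-decreasing step function whose jump locations are pinned down by the observational conditional CDF $F^i(v|pa^i) := P_M(V^i \leq_i v \mid Pa^i = pa^i)$: namely $\widetilde{f}^i(pa^i, \widetilde{u}^i) = v$ iff $\widetilde{u}^i \in [F^i(v^-|pa^i),\, F^i(v|pa^i))$, where $v^-$ denotes the predecessor of $v$ under $\leq_i$. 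Thus the canonical structural equations are uniquely determined by the graph and observational distribution.

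Abduction then becomes explicit: the noise posterior $P(\widetilde{\mathbf{U}}|\tau)$ factorizes as a product of independent uniforms on the intervals $[F^i(\tau(V^i)^-|\tau(Pa^i)),\, F^i(\tau(V^i)|\tau(Pa^i)))$, which depends only on the observational distribution. For the action and prediction steps, I would apply the combined intervention $do(I) \cup \{do(A_{i,t} := \tau(A_{i,t}))\}$ to the canonical SCM; evaluating the natural-intervention components $Z := Z_{a_{i,t}}$ on a fixed noise realization requires simulating a second "what-if" world under $do(A_{i,t} := a_{i,t})$ using the same noise, but both that parallel world and the outcome world are deterministic functions of the canonical SCM and the noise. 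Marginalizing $\mathds{1}[Y = y]$ against the identified posterior then yields $P(y_{\tau(A_{i,t})}|\tau;M)_{M^{do(I)}}$ in terms of observational quantities alone.

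The main obstacle I anticipate is the multi-world nature of cf-ASE: the intervention set $I$ couples the outcome trajectory to values computed in a parallel trajectory under $do(A_{i,t} := a_{i,t})$ sharing the same noise realization. I expect the bulk of the care to go into formalizing that, once the canonical structural equations and the noise posterior are identified, every such parallel world is a deterministic function of them, so no assumption beyond noise monotonicity is needed. A minor subtlety is justifying the CDF reparameterization when the original $U^i$ is itself discrete or degenerate; this can be handled by the standard quantile transform with uniform tie-breaking, which leaves both observational and counterfactual distributions unchanged.
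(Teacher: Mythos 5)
Your argument is correct in substance but follows a genuinely different route from the paper. The paper first rewrites $\mathrm{cf\text{-}ASE}$ as a counterfactual \emph{fixed path-specific effect} (Lemma~\ref{lemma.cf_fpse_ase}, an induction over time-steps showing that $M^{do(I)}$ and the edge-deactivated model $M_q$ agree on $Y$ for every noise realization), and then identifies that quantity by symbolically unrolling the nested counterfactual into \emph{counterfactual factors} using the composition axiom and the exclusion/independence restrictions rules (Algorithm~\ref{alg.unroll}, Lemmas~\ref{lemma.comp} and~\ref{lemma.supp_algorithm}, and the proof of Lemma~\ref{lemma.id_cf_fpse}); each factor $P(\land_i x^i_{pa^i})$ is then given in closed form by the $\min/\max$ formula of Lemma~\ref{lem.pns_nm}. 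You instead reduce every noise-monotonic SCM to a canonical quantile representation with $\mathrm{Uniform}[0,1]$ noise -- essentially the construction of Lemma~\ref{lemma.nm_mmdp} -- argue that this canonical form is determined up to null sets by the observational conditionals and the orderings, and evaluate the three-world counterfactual by explicit abduction--action--prediction in that canonical model. The mathematical core is the same (under noise monotonicity the level sets of $f^i(pa^i,\cdot)$ are consecutive intervals whose endpoints are observational conditional CDF values), but your packaging is shorter and more conceptual, proves identifiability of \emph{all} counterfactuals at once, and directly yields the correctness of Algorithm~\ref{alg.cf-ase}; what it does not produce is the explicit closed-form identification formula the paper obtains by combining Eq.~\eqref{eq.fpse_ind} with Eq.~\eqref{eq.ctf_formula}, which the paper advertises as of independent interest. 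Two points need explicit care in a full write-up: (i) the randomized quantile transform must give a single coupling under which $U^i = Q_{U^i}(\widetilde{U}^i)$ holds almost surely \emph{simultaneously for all parent configurations}, since counterfactual equivalence requires joint, not marginal, agreement of the cross-world variables $\{f^i(pa^i,U^i)\}_{pa^i}$; and (ii) as in the paper, identifiability is relative to the class of SCMs that are noise-monotonic with respect to the \emph{same} total orderings -- the canonical form, and hence the counterfactual, changes with the orderings -- so the orderings must be treated as given side information in your ``any two SCMs $M_1$, $M_2$'' formulation.
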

Theorem \ref{thrm.id_ase_cf} is closely related to Theorem $1$ from \cite{lu2020sample} and Theorem $5.1$ from \cite{nasr2023counterfactual}, where they show that under \textbf{strong} noise monotonicity the counterfactual effect of an action on its subsequent state is identifiable in the class of bijective causal models.
Note that in contrast to their work, here we assume \textbf{weak} noise monotonicity, which does not imply model bijectiveness, i.e., functions $f^i(pa^i, \cdot)$ are not necessarily invertible. Hence, our result is more general.

Our proof leverages a novel identification formula, Eq.~\eqref{eq.ctf_formula}, that allows us to express \textit{counterfactual factors} \cite{correa2021nested} in terms of the observational distribution.
As such, Lemma \ref{lem.pns_nm} can be utilized for deriving the identification formula of any counterfactual probability (under exogeneity and noise monotonicity), 
and thus is of independent interest. Appendix \ref{app.ctf_remark} provides a more thorough explanation.
\begin{lemma}\label{lem.pns_nm}
Let $M$ be an SCM with causal graph $G$, and $\leq_X$ be a total ordering on $\mathrm{dom}\{X\}$ for some $X \in \mathbf{V}$.
If $X$ is noise-monotonic in $M$ w.r.t.~$\leq_X$, then for any $x^1, ..., x^k \in \mbox{dom}\{X\}$ and $pa^1, ..., pa^k \in \mbox{dom}\{Pa^X(G)\}$ it holds that
\begin{align}\label{eq.ctf_formula}
    &P(\land_{i \in \{1, ..., k\}} x^i_{pa^i}) = 
    \max\big(0, \min_{i \in \{1, ..., k\}} P(X \leq_X x^i | pa^i)\nonumber \\&- \max_{i \in \{1, ..., k\}}P(X <_X x^i | pa^i) \big).
\end{align}
\end{lemma}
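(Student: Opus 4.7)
The plan is to express the joint counterfactual event $\wedge_{i=1}^k \{X_{pa^i}=x^i\}$ purely in terms of the single exogenous noise $U^X$, and then use noise monotonicity to reduce the computation to the measure of a suitably described interval of $U^X$-values.

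First I would invoke the SCM semantics in which every counterfactual $X_{pa^i}$ is computed through the \emph{same} noise $U^X$ via $X_{pa^i} = f^X(pa^i, U^X)$, so that
\begin{align*}
P\big(\wedge_{i=1}^k x^i_{pa^i}\big) = P\big(U^X \in \textstyle\bigcap_{i=1}^k I_i\big),
\end{align*}
where $I_i := \{u : f^X(pa^i,u) = x^i\}$. Noise monotonicity implies that $A_i := \{u : f^X(pa^i,u) \leq_X x^i\}$ is downward closed in $u$, hence an interval $(-\infty, a_i]$ (or $(-\infty, a_i)$); similarly $B_i := \{u : f^X(pa^i,u) <_X x^i\}$ is a downward-closed interval with $B_i \subseteq A_i$, and $I_i = A_i \setminus B_i$. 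Therefore $\bigcap_i I_i = (\bigcap_i A_i) \setminus (\bigcup_i B_i)$, which is itself an interval, empty exactly when $\max_i b_i \geq \min_i a_i$.

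Next I would compute its probability. Exogeneity gives that $U^X$ is independent of $Pa^X$, so $P(U^X \in A_i) = P(X \leq_X x^i \mid pa^i)$ and $P(U^X \in B_i) = P(X <_X x^i \mid pa^i)$. Because the $A_i$'s (resp.~$B_i$'s) are nested downward-closed intervals, $P(U^X \in \bigcap_i A_i) = \min_i P(U^X \in A_i)$ and $P(U^X \in \bigcup_i B_i) = \max_i P(U^X \in B_i)$. Taking the difference and truncating at $0$ to cover the empty-intersection case yields exactly Eq.~\eqref{eq.ctf_formula}.

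The step I expect to be the main obstacle is the boundary behavior at the endpoints $a_i, b_i$ when $U^X$ has atoms or $f^X(pa^i,\cdot)$ is a step function: the nested-intervals identities above could fail if the minimizing $A_i$ is closed while another is open at the shared endpoint, or vice versa. The cleanest fix is a standard reparameterization of $f^X$ driven by an atomless uniform $[0,1]$ noise (preserving the same conditional distribution $P(X\mid pa)$ and the same monotonicity order), after which the CDF of $U^X$ is continuous and both min/max identities follow from plain monotonicity of the CDF.
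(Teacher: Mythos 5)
Your argument is correct and is essentially the paper's proof: the paper likewise uses noise monotonicity to turn each preimage $\{u : f^X(pa^i,u)=x^i\}$ into an order-convex set (written there as $[\min_u N^i_{x^i}, \max_u N^i_{x^i}]$ rather than your $A_i\setminus B_i$), applies exogeneity to identify $P(U^X\in A_i)$ with $P(X\leq_X x^i\mid pa^i)$, and exploits the nesting of these sets to obtain the $\min$/$\max$/truncation formula. The boundary issue you flag at the end is not actually an obstacle: $P\big(\bigcap_i A_i\big)=\min_i P(U^X\in A_i)$ and $P\big(\bigcup_i B_i\big)=\max_i P(U^X\in B_i)$ hold exactly because finitely many downward-closed subsets of a totally ordered set form a chain under inclusion (so the intersection and union are each literally one of the sets), independently of whether endpoints are attained or of atoms of $U^X$ --- hence the uniform reparameterization is unnecessary, and invoking it would anyway require a separate argument that it preserves the cross-world quantity $P(\land_i x^i_{pa^i})$ rather than only the conditionals $P(X\mid pa)$.
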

Next, we present novel findings to enhance the characterization of noise monotonicity. 
\cite{buesing2018woulda} show how we can represent any partially observable MDP by an SCM. In that direction, we show next how to represent any MMDP by an MMDP-SCM that satisfies noise monotonicity.

\begin{lemma}\label{lemma.nm_mmdp}
    Let $\mathbf{X} = \{X^1, ..., X^N\}$ represent the set of state and action variables of an MMDP, and $P(\mathbf{X})$ be the joint distribution induced by the MMDP under a joint policy $\mathbf{\pi}$. Additionally, let $\leq_1, ..., \leq_N$ be total orderings on $\mathrm{dom}\{X^1\}, ..., \mathrm{dom}\{X^N\}$, respectively. We can construct an MMDP-SCM $M = \langle\mathbf{U}, \mathbf{V}, \mathcal{F}, P(\mathbf{u})\rangle$ with causal graph $G$, where $\mathbf{V} = \{V^1, ..., V^N\}$, $\mathbf{U} = \{U^1, ..., U^N\}$, $P(U^i) = \mathrm{Uniform}[0,1]$, and functions in $\mathcal{F}$ are defined as
\begin{align*}
    f^i(Pa^i(G), U^i) = \inf_{v^i}\{P(X^i \leq_i v^i | Pa^i(G))_M \geq U^i\},
\end{align*}
such that $P(\mathbf{X}) = P(\mathbf{V})$ and every variable $V^i \in \mathbf{V}$ is noise-monotonic in $M$ w.r.t.~$\leq_i$.
\end{lemma}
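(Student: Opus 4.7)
The plan is to verify that the proposed construction is essentially the classical inverse-CDF (quantile) trick, adapted to conditional discrete distributions and arranged to respect the MMDP's causal dependency structure. The graph $G$ of $M$ is taken to mirror the MMDP-SCM template from Section \ref{sec.framework_mmdp}, i.e., $S_0$ has no parents, $S_t$'s parents are $S_{t-1}$ and $\mathbf{A}_{t-1}$, and $A_{i,t}$'s parents are $S^t$. With this graph fixed, I only need to check two things: (a) each $V^i$ is noise-monotonic w.r.t.~$\leq_i$, and (b) the induced joint distribution $P(\mathbf{V})_M$ coincides with the MMDP joint $P(\mathbf{X})$.

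For noise monotonicity, observe that since $\mathrm{dom}\{X^i\}$ is finite the infimum in the definition of $f^i$ is actually a minimum (taken under $\leq_i$). Fix any $pa^i \in \mathrm{dom}\{Pa^i(G)\}$ and take $u^i_1 < u^i_2$ in $[0,1]$. If $v$ satisfies $P(X^i \leq_i v \mid pa^i) \geq u^i_2$, it also satisfies $P(X^i \leq_i v \mid pa^i) \geq u^i_1$, so the set realizing $f^i(pa^i, u^i_2)$ is contained in the set realizing $f^i(pa^i, u^i_1)$. Taking the $\leq_i$-infimum over a larger set can only yield a $\leq_i$-smaller element, giving $f^i(pa^i, u^i_1) \leq_i f^i(pa^i, u^i_2)$, as required by Definition \ref{def.nm}.

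For distributional equivalence I would use induction along a topological ordering of $G$ to show that the joint distribution over any prefix agrees between $M$ and the MMDP. The core calculation is the conditional one: given $Pa^i(G) = pa^i$, the exogeneity assumption makes $U^i \sim \mathrm{Uniform}[0,1]$ independent of $pa^i$, so for any $v^i \in \mathrm{dom}\{X^i\}$,
\begin{align*}
P(V^i \leq_i v^i \mid pa^i)_M &= P\bigl(f^i(pa^i, U^i) \leq_i v^i\bigr) \\
&= P\bigl(U^i \leq P(X^i \leq_i v^i \mid pa^i)\bigr) \\
&= P(X^i \leq_i v^i \mid pa^i).
\end{align*}
Because $\leq_i$ is a total order on a finite set, matching conditional CDFs implies matching conditional pmfs, i.e., $P(V^i \mid Pa^i(G))_M = P(X^i \mid Pa^i(G))$. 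Since both the MMDP joint (through its transition and policy factors) and the SCM joint factorize along $G$ in the same way, combining the inductive hypothesis with the equality of conditionals yields $P(\mathbf{V})_M = P(\mathbf{X})$.

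The only real subtlety is the infimum step: it must genuinely deliver a member of $\mathrm{dom}\{X^i\}$ and behave as a proper quantile function for the discrete conditional law. This is exactly where finiteness of $\mathrm{dom}\{X^i\}$ and total-orderedness of $\leq_i$ are used, and it is what lets both the monotonicity argument and the CDF identity above go through without measure-theoretic caveats. Everything else is a routine factorization/induction, so I expect this discreteness bookkeeping to be the main (and essentially only) obstacle.
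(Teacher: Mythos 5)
Your proposal is correct and follows essentially the same route as the paper's proof: the inverse-CDF identity $P(f^i(pa^i,U^i)\leq_i v^i)=P(U^i\leq P(X^i\leq_i v^i\mid pa^i))$ is exactly the ``reproduction property'' of discrete quantile functions that the paper isolates and proves as a standalone sub-lemma (Lemma \ref{lemma.quant_prop}), and your monotonicity argument via nested superlevel sets of the conditional CDF matches the paper's. The only difference is presentational: you flag the reproduction step as the main bookkeeping obstacle without writing out both directions, whereas the paper spells them out using finiteness of $\mathrm{dom}\{X^i\}$ to ensure the infimum is attained.
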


\cite{oberst2019counterfactual} recently proposed another property for categorical SCMs, \textit{counterfactual stability}. 
Interestingly, they show that their property implies the well-known property of
\textit{monotonicity} \cite{pearl1999probabilities} in binary SCMs. Note, that counterfactual stability does not suffice for identifiability. Proposition \ref{prop.nm_m} states that noise monotonicity also implies monotonicity in binary SCMs.
\begin{proposition}\label{prop.nm_m}
Let $M$ be an SCM consisting of two binary variables $X$ and $Y$ with $Y = f^Y(X, U^Y)$. If $Y$ is noise-monotonic in $M$ w.r.t.~the numerical ordering, then $Y$ is monotonic relative to $X$ in $M$. However, if $Y$ is monotonic relative to $X$ in $M$, then $Y$ is not necessarily noise-monotonic in $M$ w.r.t.~any total ordering on $\mathrm{dom}\{X\}$.
\end{proposition}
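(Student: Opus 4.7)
The plan is to attack the two halves of the proposition separately. For the forward implication, I would exploit that with a binary response, noise monotonicity collapses $f^Y(x,\cdot)$ into a threshold function of $u$, after which the two step functions corresponding to $x=0$ and $x=1$ are automatically comparable. For the converse (failure), I would exhibit an SCM with three-valued noise whose structural function is monotone in $x$ but oscillates in $u$, ruling out noise monotonicity under either of the two total orderings available on $\{0,1\}$.

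In more detail, for the forward direction I fix the numerical ordering $0 <_Y 1$ on $\mathrm{dom}\{Y\}$. Noise monotonicity makes $u \mapsto f^Y(x,u)$ non-decreasing and $\{0,1\}$-valued, so for each $x$ there is a threshold $t_x \in [-\infty,+\infty]$ such that $f^Y(x,u) = 1$ exactly when $u \geq t_x$. Since $t_0$ and $t_1$ are comparable as extended reals, assume without loss of generality $t_0 \leq t_1$; then $f^Y(1,u) = 1$ forces $u \geq t_1 \geq t_0$, hence $f^Y(0,u) = 1$. Thus $f^Y(0,u) \geq f^Y(1,u)$ for every $u$, yielding $Y_{X=0} \geq Y_{X=1}$ almost surely, which is Pearl's monotonicity of $Y$ relative to $X$ (in the decreasing direction; the symmetric case $t_1 \leq t_0$ gives the increasing one). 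The degenerate cases $t_x \in \{-\infty,+\infty\}$ (constantly $0$ or $1$) require only routine bookkeeping.

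For the converse, I would construct an explicit counterexample. Let $U^Y$ be supported on three numerical values $u_1 < u_2 < u_3$, each with positive probability, and set $f^Y(0,u_1) = f^Y(0,u_2) = 0$, $f^Y(0,u_3) = 1$, together with $f^Y(1,u_1) = 1$, $f^Y(1,u_2) = 0$, $f^Y(1,u_3) = 1$. Since $f^Y(1,u) \geq f^Y(0,u)$ holds at each noise value, $Y_{X=1} \geq Y_{X=0}$ almost surely and Pearl's monotonicity is satisfied. However, the image of $f^Y(1,\cdot)$ is the sequence $(1,0,1)$, which is non-monotone. The domain $\mathrm{dom}\{Y\} = \{0,1\}$ admits only two total orderings, and under either of them the pattern $(1,0,1)$ would force $1 \leq_Y 0 \leq_Y 1$ and hence $0 = 1$; so noise monotonicity of $Y$ fails under every admissible ordering.

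The main obstacle is mostly expository: one has to verify the counterexample against \emph{both} candidate orderings of $\{0,1\}$ rather than only the numerical one, and to address the degenerate constant-function cases in the forward direction. Neither is technically demanding once the reduction of binary noise monotonicity to threshold form is made explicit.
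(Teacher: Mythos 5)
Your proof is correct and follows essentially the same route as the paper's: the counterexample is the same three-valued-noise construction with the $(1,0,1)$ pattern ruling out both total orderings on $\mathrm{dom}\{Y\}$, and your forward direction rests on the same structural fact the paper exploits, namely that noise monotonicity makes the sets $\{u : f^Y(x,u)=1\}$ upward closed and hence nested, which you argue directly where the paper argues by contradiction via the resulting strict inequality $\mathds{E}[Y\,|\,x_1] > \mathds{E}[Y\,|\,x_2]$. One cosmetic caveat: for a general noise distribution the up-set $\{u : f^Y(x,u)=1\}$ need not have the closed form $\{u \geq t_x\}$ (it may be open at its infimum), but your argument only uses that two up-sets of a totally ordered set are comparable under inclusion, which holds regardless.
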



\section{Algorithm}\label{sec.alg}

We now present our algorithm for the estimation of cf-ASE, which follows a standard methodology for counterfactual inference \cite{pearl2009causality}. 
Algorithm \ref{alg.cf-ase} samples noise variables from the posterior $P(\mathbf{u} | \tau)$ (abduction), 
in order to compute cf-ASE (prediction) under interventions (action) specified in Definition \ref{def.cf-ase}, and finally outputs the average value.

\begin{algorithm}
\caption{Estimates $\mathrm{cf\text{-}ASE}^{\mathbf{N}}_{a_{i,t}}(y|\tau)_M$}
\label{alg.cf-ase}
\textbf{Input}: MMDP-SCM $M$, trajectory $\tau$, effect agents $\mathbf{N}$, action variable $A_{i,t}$, action $a_{i,t}$, outcome variable $Y$, outcome $y$, number of posterior samples $H$
\begin{algorithmic}[1] 
\STATE $h \gets 0$
\STATE $c \gets 0$
\WHILE{$h < H$}
\STATE $\mathbf{u}_h \sim P(\mathbf{u} | \tau)$ \COMMENT{Sample noise from the posterior}
\STATE $h \gets h + 1$
\STATE $\tau^h \sim P(\mathbf{V}|\mathbf{u}_h)_{M^{do(A_{i,t} := a_{i,t})}}$ \COMMENT{Sample cf trajectory}
\STATE $I(\mathbf{u}_h) \gets \{A_{i',t'} := \tau(A_{i',t'})\}_{i' \notin \mathbf{N}, t' > t} \cup \{A_{i',t'} := \tau^h(A_{i',t'})\}_{i' \in \mathbf{N}, t' > t}$\;
\STATE $y^h \sim P(Y|\mathbf{u}_h)_{M^{do(I(\mathbf{u}_h))}}$ \COMMENT{Sample cf outcome}
\IF {$y^h = y$}
\STATE $c \gets c + 1$
\ENDIF
\ENDWHILE
\STATE \textbf{return} $\frac{c}{H} - \mathds{1}(\tau(Y) = y)$
\end{algorithmic}
\end{algorithm}
\begin{theorem}\label{thrm.algo_unbias}
Let MMDP-SCMs $M = \langle \mathbf{U}, \mathbf{V}, \mathcal{F}, P(\mathbf{u})\rangle$ and $\hat{M} = \langle \mathbf{U}, \mathbf{V}, \hat{\mathcal{F}}, \hat{P}(\mathbf{u})\rangle$, s.t.~$P(\mathbf{V})_M = P(\mathbf{V})_{\hat{M}}$ and every variable $Z \in \mathbf{V}$ is noise-monotonic in both $M$ and $\hat{M}$ w.r.t.~the same total ordering $\leq_Z$ on $\mathrm{dom}\{Z\}$. It holds that the output of Algorithm \ref{alg.cf-ase}, when it takes as input model $\hat{M}$, is an unbiased estimator of $\mathrm{cf\text{-}ASE}^{\mathbf{N}}_{a_{i,t}}(y|\tau)_M$.
\end{theorem}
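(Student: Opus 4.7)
The plan is to first reduce the claim to an unbiasedness statement about $\hat{M}$ alone via Theorem \ref{thrm.id_ase_cf}, and then verify that Algorithm \ref{alg.cf-ase} faithfully implements the abduction--action--prediction scheme on its input model.

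First I would appeal to Theorem \ref{thrm.id_ase_cf}. Since $M$ and $\hat{M}$ are MMDP-SCMs over the same $\mathbf{V}$, they induce identical causal graphs (the structure is dictated by Eq.~\eqref{eq.struct_eq}). By assumption, they share the observational distribution $P(\mathbf{V})$ and both satisfy noise monotonicity with respect to the same orderings $\leq_Z$. Theorem \ref{thrm.id_ase_cf} therefore forces
\[
\mathrm{cf\text{-}ASE}^{\mathbf{N}}_{a_{i,t}}(y|\tau)_M \;=\; \mathrm{cf\text{-}ASE}^{\mathbf{N}}_{a_{i,t}}(y|\tau)_{\hat{M}},
\]
so it suffices to show that Algorithm \ref{alg.cf-ase} run on $\hat{M}$ is an unbiased estimator of the right-hand side.

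Next I would analyze a single iteration of the while-loop. Given a posterior draw $\mathbf{u}_h \sim \hat{P}(\mathbf{u}|\tau)$, both $\tau^h$ (Line 6) and $y^h$ (Line 8) are deterministic functions of $\mathbf{u}_h$ by exogeneity. Evaluating the structural equations of $\hat{M}^{do(A_{i,t}:=a_{i,t})}$ at $\mathbf{u}_h$ gives $\tau^h(A_{i',t'}) = (A_{i',t',a_{i,t}})(\mathbf{u}_h)$ for every effect agent $i' \in \mathbf{N}$ and every $t' > t$. Consequently, the hard intervention $I(\mathbf{u}_h)$ from Line 7 agrees almost surely with the natural-intervention set $I$ of Definition \ref{def.cf-ase} applied to $\hat{M}$ and evaluated at $\mathbf{u}_h$, so $y^h$ is the deterministic realization of $Y$ in $\hat{M}^{do(I)}$ under that noise. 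Averaging over the $H$ i.i.d.\ draws and applying the law of total probability yields
\[
E\!\left[\tfrac{c}{H}\right] \;=\; \hat{P}\!\left(y_{\tau(A_{i,t})} \,\middle|\, \tau; \hat{M}\right)_{\hat{M}^{do(I)}},
\]
and subtracting the deterministic quantity $\mathds{1}(\tau(Y) = y) = \hat{P}(y|\tau)_{\hat{M}}$ produces exactly $\mathrm{cf\text{-}ASE}^{\mathbf{N}}_{a_{i,t}}(y|\tau)_{\hat{M}}$, which by the first step equals $\mathrm{cf\text{-}ASE}^{\mathbf{N}}_{a_{i,t}}(y|\tau)_M$.

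The main obstacle I expect is the bridging step: carefully justifying that the hard intervention $I(\mathbf{u}_h)$ assembled by the algorithm from one sample of a counterfactual trajectory is pathwise equivalent, for any fixed noise, to the natural intervention $\{Z := Z_{a_{i,t}}\}_{Z \in \{A_{i',t'}\}_{i' \in \mathbf{N}, t' > t}}$ of Definition \ref{def.cf-ase}. This reduces to a consistency property that follows from exogeneity (natural interventions commute with noise conditioning), after which the unbiasedness claim is immediate from identifiability.
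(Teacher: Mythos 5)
Your proposal is correct and takes essentially the same route as the paper's proof: it reduces the claim via Theorem \ref{thrm.id_ase_cf} to unbiasedness with respect to a single noise-monotonic model, observes that for each fixed posterior noise draw the hard intervention $I(\mathbf{u}_h)$ built from the sampled counterfactual trajectory coincides pathwise with the natural intervention $I$ of Definition \ref{def.cf-ase} (so that $P(y\,|\,\mathbf{u})_{M^{do(I)}} = P(y\,|\,\mathbf{u})_{M^{do(I(\mathbf{u}))}}$), and concludes by averaging over the posterior and invoking the consistency axiom. The "main obstacle" you flag at the end is precisely the one-line bridging identity the paper also relies on, so nothing further is needed.
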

If noise-monotonicity holds, then according to Theorem \ref{thrm.algo_unbias}, access to a model $\hat{M}$ suffices to get an unbiased estimate of $\mathrm{cf\text{-}ASE}^{\mathbf{N}}_{a_{i,t}}(y|\tau)_M$. 
Based on Lemma \ref{lemma.nm_mmdp}, such an MMDP-SCM always exists, and it can be derived solely from the observational distribution $P(\mathbf{V})_M$ and the total orderings w.r.t.~which variables in $M$ are noise-monotonic.
%


\section{Connection to Path-Specific Effects}\label{sec.connections}

\textit{Path-specific effects} are a well-known causal concept used to quantify the causal effect of one variable on another along a specified set of paths in the causal graph \cite{pearl2001direct}.

\begin{definition}[PSE]
\label{def.pse}
Given an SCM $M$ with causal graph $G$, and an edge-subgraph $g$ of $G$ (also called the \textit{effect subgraph}), the $g$-specific effect of intervention $do(X := x)$ on $Y = y$, relative to reference value $x^*$, is defined as
\begin{align*}
    \mathrm{PSE}^g_{x, x^*}(y)_M = \mathrm{TCE}_{x, x^*}(y)_{M_g},
\end{align*}
where $M_{g} = \langle\mathbf{U}, \mathbf{V}, \mathcal{F}_g, P(\mathbf{u})\rangle$ is a modified SCM, which induces $g$ and is formed as follows.
Each function $f^i$ in $\mathcal{F}$ is replaced with a new function $f^i_g$ in $\mathcal{F}_g$, s.t.~for every $pa^i(g) \in \mathrm{dom}\{Pa^i(g)\}$ and $\mathbf{u} \sim P(\mathbf{u})$ it holds $f^i_g(pa^i(g), u^i) = f^i(pa^i(g), pa^i(\overline{g})^*, u^i)$.
Here, $pa^i(\overline{g})^*$ denotes the value of $Pa^i(\overline{g})_{x^*}$ in $M$ for noise $\mathbf{u}$.
\end{definition}
As argued in Section \ref{sec.intro}, despite also considering natural interventions, path-specific effects cannot be used to express agent-specific effects. Intuitively, this is because analyzing the latter effects requires reasoning across three alternative worlds, while the definition of the former reasons across only two.
To establish a deeper connection between the two concepts, we define the notion of \textit{fixed path-specific effects} (FPSE) utilizing a similar path-deactivation process as in Definition~\ref{def.pse}.\footnote{We emphasize here that PSE and FPSE are defined over general SCMs, while ASE is defined explicitly over MMDP-SCMs.}
Appendix \ref{app.fpse} includes identifiability results for FPSE and its counterfactual counterpart.
\begin{definition}[FPSE]
\label{def.fpse}
Given an SCM $M$ with causal graph $G$, and two edge-subgraphs of $G$, \textit{effect subgraph} $g$ and \textit{reference subgraph} $g^*$ such that $\mathbf{E}(g) \cap \mathbf{E}(g^*) = \emptyset$, the fixed $g$-specific effect of intervention $do(X := x)$ on $Y = y$, relative to reference value $x^*$ and $g^*$, is defined as
\begin{align*}
    \mathrm{FPSE}^{g, g^*}_{x, x^*}(y)_M = P(y_{x^*})_{M_q} - P(y_{x^*})_M,
\end{align*}
where $q$ is the edge-subgraph of $G$ with 
$\mathbf{E}(q) = \mathbf{E}(G) \backslash \{\mathbf{E}(g), \mathbf{E}(g^*)\}$, and $M_q = \langle\mathbf{U}, \mathbf{V}, \mathcal{F}_q, P(\mathbf{u})\rangle$ is a modified SCM, which induces $q$ and is formed as follows.
Each function $f^i$ in $\mathcal{F}$ is replaced with a new function $f^i_q$ in $\mathcal{F}_q$, s.t.~for every $pa^i(q) \in \mathrm{dom}\{Pa^i(q)\}$ and $\mathbf{u} \sim P(\mathbf{u})$ it holds $f^i_q(pa^i(q), u^i) = f^i(pa^i(q), pa^i(g)^e, pa^i(g^*)^*, u^i)$. Here, $pa^i(g^*)^*$ (resp. $pa^i(g)^e$) denotes the value of $Pa^i(g^*)_{x^*}$ (resp. $Pa^i(g)_x$) in $M$ for noise $\mathbf{u}$.
\end{definition}

Definitions of PSE and FPSE have two main distinctions: FPSE considers one additional edge-subgraph compared to PSE, and it computes the counterfactual probability over its modified model under intervention $do(X := x^*)$ instead of $do(X := x)$.
This additional subgraph allows FPSE to reason across three alternative words, i.e., one more than PSE, which is why it can express agent-specific effects.

\begin{lemma}\label{lemma.fpse_ase}
Let $M$ be an MMDP-SCM with causal graph $G$ and $\mathbf{N}$ be a non-empty subset of agents in $M$. It holds that
\begin{align*}
    &\mathrm{ASE}^{\mathbf{N}}_{a_{i,t}, a^*_{i,t}}(y)_M = \mathrm{FPSE}^{g, g^*}_{a_{i,t}, a^*_{i,t}}(y)_M 
\end{align*}
where $g$ is the edge-subgraph of $G$ with $\mathbf{E}(g) = \bigcup_{A_{i',t'}: i' \in \mathbf{N}, t' > t} \mathbf{E}^{A_{i',t'}}_+(G)$ and 
$g^*$ is the edge-subgraph of $G$ with
$\mathbf{E}(g^*) = \bigcup_{A_{i',t'}: i' \notin \mathbf{N}, t' > t} \mathbf{E}^{A_{i',t'}}_+(G)$.
\end{lemma}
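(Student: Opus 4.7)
The plan is to reduce the claimed distributional equality to a per-noise deterministic equality and then verify the latter by a short induction on the topological order of $G$.

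\textbf{Reduction to per-noise equality.} Both $\mathrm{ASE}$ and $\mathrm{FPSE}$ subtract the same term $P(y_{a^*_{i,t}})_M$, so it suffices to establish $P(y_{a^*_{i,t}})_{M^{do(I)}} = P(y_{a^*_{i,t}})_{M_q}$. Both sides are expectations, under the same distribution $P(\mathbf{u})$, of a deterministic functional of the noise, since natural interventions and the construction of $M_q$ leave $P(\mathbf{u})$ intact. I would therefore fix an arbitrary $\mathbf{u}$ and show that the value of $Y$ obtained by evaluating $M^{do(I)}$ under $do(A_{i,t}:=a^*_{i,t})$ coincides with the value of $Y$ obtained by evaluating $M_q$ under $do(A_{i,t}:=a^*_{i,t})$.

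\textbf{Key structural observation.} From the MMDP-SCM structural equations (\ref{eq.struct_eq}), parents of any action variable $A_{i',t'}$ consist only of state variables; in particular, no incoming edge of an action node lies in $g$ or $g^*$. Consequently, in $M_q$ the structural equations of action nodes are unchanged, and the constant substitutions $pa^i(g)^e$, $pa^i(g^*)^*$ only affect state and outcome nodes at later time-steps.

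\textbf{Induction.} Walking through $G$ in topological order and comparing the ASE-value and FPSE-value of each node: every node at time $\leq t$, as well as $A_{i,t}$ itself, is evaluated by the same unchanged structural equation under the same hard intervention $do(A_{i,t}:=a^*_{i,t})$, so they agree. For a state or outcome node $V$ at a later time, the FPSE-equation feeds each effect-agent action parent the value that action has in $M^{do(A_{i,t}:=a_{i,t})}$ and each non-effect-agent action parent the value it has in $M^{do(A_{i,t}:=a^*_{i,t})}$. These are precisely the values that the natural interventions in $I$ assign to the same action variables in the ASE-world; combined with the inductive hypothesis on $V$'s state parents, the node is produced by the same function on the same arguments in both worlds. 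For an action node at time $t' > t$, the two worlds may assign it different values, but those values are never consumed downstream (in FPSE the downstream nodes use the substituted constants; in ASE the downstream nodes see the natural-intervention values), so the discrepancy is inert and does not propagate to $Y$.

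\textbf{Anticipated main obstacle.} The delicate part is the two-reference-world bookkeeping: FPSE's constants $pa^i(g)^e$ and $pa^i(g^*)^*$ are drawn from two distinct possible worlds, $M^{do(A_{i,t}:=a_{i,t})}$ and $M^{do(A_{i,t}:=a^*_{i,t})}$ respectively, while ASE's natural interventions retarget the action variables themselves. Making the exact correspondence rigorous hinges on the absence of direct action-to-action edges in MMDP-SCMs, which is precisely what lets the downstream constant substitution of FPSE realize the same effect as retargeting the actions in ASE. Once this structural point is isolated, the induction is routine.
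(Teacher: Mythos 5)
Your proposal is correct and follows essentially the same route as the paper's proof: reduce to the per-noise identity $P(y_{a^*_{i,t}})_{M^{do(I)}} = P(y_{a^*_{i,t}})_{M_q}$, then induct along the causal order, matching the constants $pa^i(g)^e$ and $pa^i(g^*)^*$ substituted into the state equations of $M_q$ with the values fixed by the natural interventions in $I$, using that in an MMDP-SCM the effect and reference subgraphs consist only of outgoing edges of action nodes. The paper phrases the induction over the state variables $S_{t'}$ rather than the full topological order, but the argument is the same.
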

Furthermore, by ``not using'' the extra edge-subgraph, FPSE can express path-specific effects.
\begin{proposition}\label{prop.fpse_pse}
Given an SCM $M$ with causal graph $G$, and an edge-subgraph $g$ of $G$, it holds that
\begin{align*}
    \mathrm{PSE}^g_{x, x^*}(y)_M = \mathrm{FPSE}^{\overline{g}, g^\emptyset}_{x^*, x}(y)_M + \mathrm{TCE}_{x, x^*}(y)_M,
\end{align*}
where $g^\emptyset$ is the edge-subgraph of $G$ with $\mathbf{E}(g^\emptyset) = \emptyset$.
\end{proposition}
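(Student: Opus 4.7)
The plan is to unpack both sides, observe that the modified model $M_q$ appearing in the FPSE definition coincides with the modified model $M_g$ appearing in the PSE definition, and then reduce the identity to a single claim about the reference-value counterfactual distribution.

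First, I would instantiate Definition \ref{def.fpse} with effect subgraph $\overline{g}$, reference subgraph $g^\emptyset$, intervention value $x^*$, and reference value $x$. Because $\mathbf{E}(g^\emptyset) = \emptyset$, the subgraph $q$ from that definition satisfies $\mathbf{E}(q) = \mathbf{E}(G) \setminus \mathbf{E}(\overline{g}) = \mathbf{E}(g)$, so $q = g$ as an edge-subgraph. Moreover, $pa^i(g^\emptyset)$ is empty while $pa^i(\overline{g})^e$ denotes the value of $Pa^i(\overline{g})_{x^*}$ in $M$; this makes the replacement function $f^i_q$ from Definition \ref{def.fpse} syntactically identical to $f^i_g$ from Definition \ref{def.pse}. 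Consequently $M_q = M_g$, and hence $\mathrm{FPSE}^{\overline{g}, g^\emptyset}_{x^*, x}(y)_M = P(y_x)_{M_g} - P(y_x)_M$.

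Plugging this into the right-hand side of the claim, together with $\mathrm{TCE}_{x, x^*}(y)_M = P(y_x)_M - P(y_{x^*})_M$ and $\mathrm{PSE}^g_{x, x^*}(y)_M = P(y_x)_{M_g} - P(y_{x^*})_{M_g}$, the proposition reduces to the single identity $P(y_{x^*})_{M_g} = P(y_{x^*})_M$. I would prove this by induction along a topological order of $G$: fixing any noise realization $\mathbf{u}$, I show that for every $V^i$ the value of $V^i_{x^*}$ coincides in $M$ and in $M_g$. The case $V^i = X$ is trivial since both are fixed to $x^*$, and source nodes other than $X$ agree by construction. For the inductive step, the inductive hypothesis makes the values of parents in $g$ agree between the two models, while the values plugged in along edges of $\overline{g}$ agree by construction of $f^i_g$ (they are exactly the $M$-values of those parents under $do(X := x^*)$); so $f^i_g$ and $f^i$ are evaluated on identical arguments and return the same output. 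Marginalizing over $\mathbf{u}$ yields the desired distributional equality.

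The main obstacle is the topological-order induction; everything else is definitional bookkeeping. The only subtlety is that under $do(X := x^*)$ the variable $X$ is replaced with $x^*$ wherever it appears as a parent in either model, so one does not need to split the argument on whether $X$'s outgoing edges lie in $g$ or in $\overline{g}$.
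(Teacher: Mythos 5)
Your proof is correct and follows essentially the same route as the paper: unpack the three definitions, observe that $q = g$ and $M_q = M_g$, and reduce the identity to the single equality $P(y_{x^*})_{M_g} = P(y_{x^*})_M$. The paper merely asserts that last equality as a consequence of the construction of $M_g$, whereas you supply the topological-order induction establishing it noise-realization by noise-realization; that added detail is sound and only makes the argument more complete.
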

%


\section{Experiments}\label{sec.exp}
\begin{figure*}
    \centering
    \begin{subfigure}[c]{0.8\textwidth}
        \centering
        \includegraphics[width=\textwidth,keepaspectratio]{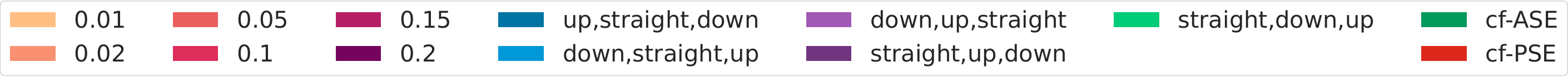}
    \end{subfigure}%
    \hfill%
    \\[1.5ex]
    \begin{subfigure}[b]{0.24\textwidth}
         \centering
         \includegraphics[width=\textwidth]{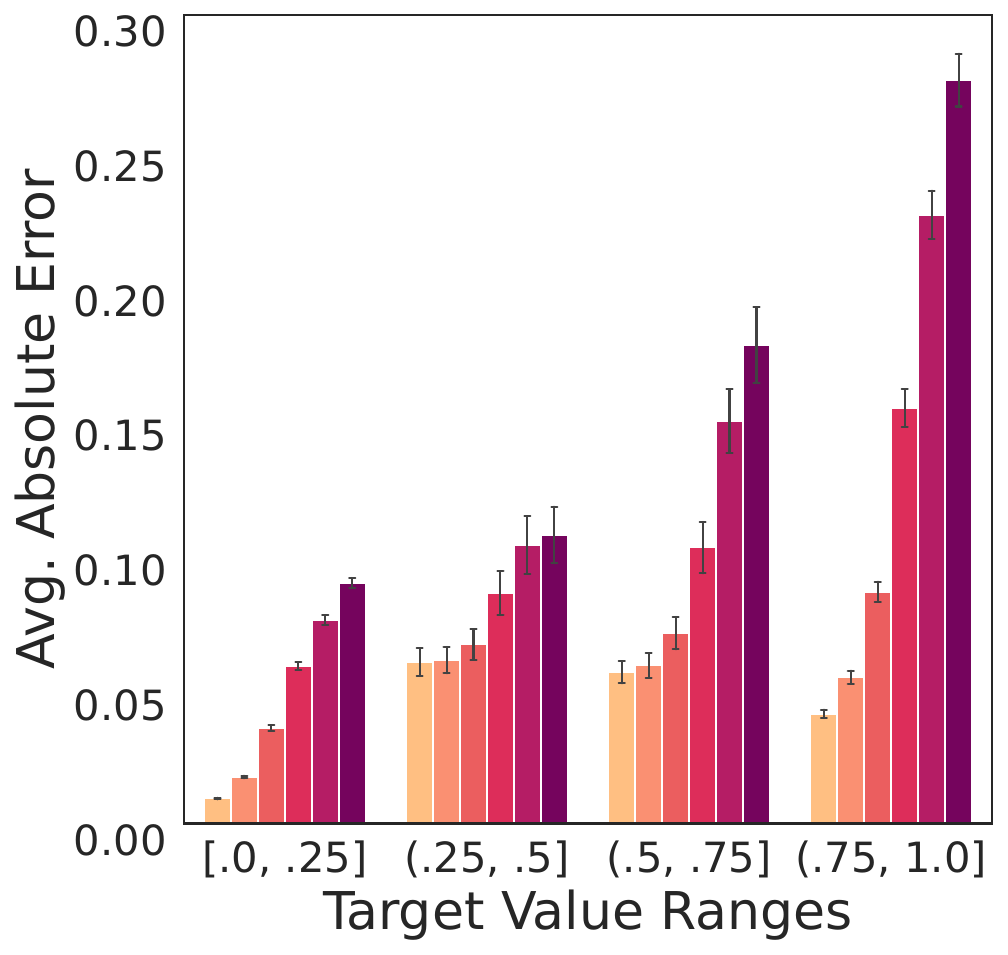}
         \caption{Graph: Distribution Error}
         \label{plot:graph_prob_error}
     \end{subfigure}
     \hfill
     \begin{subfigure}[b]{0.24\textwidth}
         \centering
         \includegraphics[width=\textwidth]{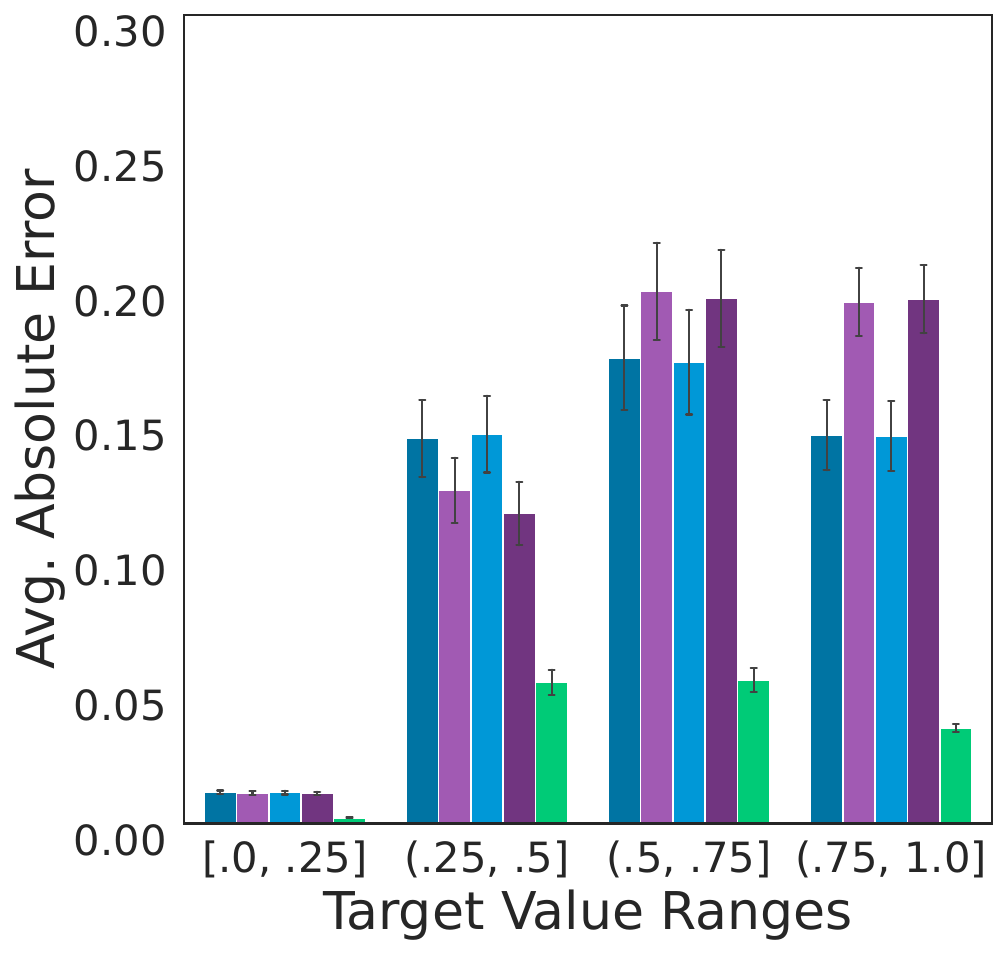}
         \caption{Graph: NM Violation}
         \label{plot:graph_ord_error}
     \end{subfigure}
     \hfill
     \begin{subfigure}[b]{0.24\textwidth}
         \centering
         \includegraphics[width=\textwidth]{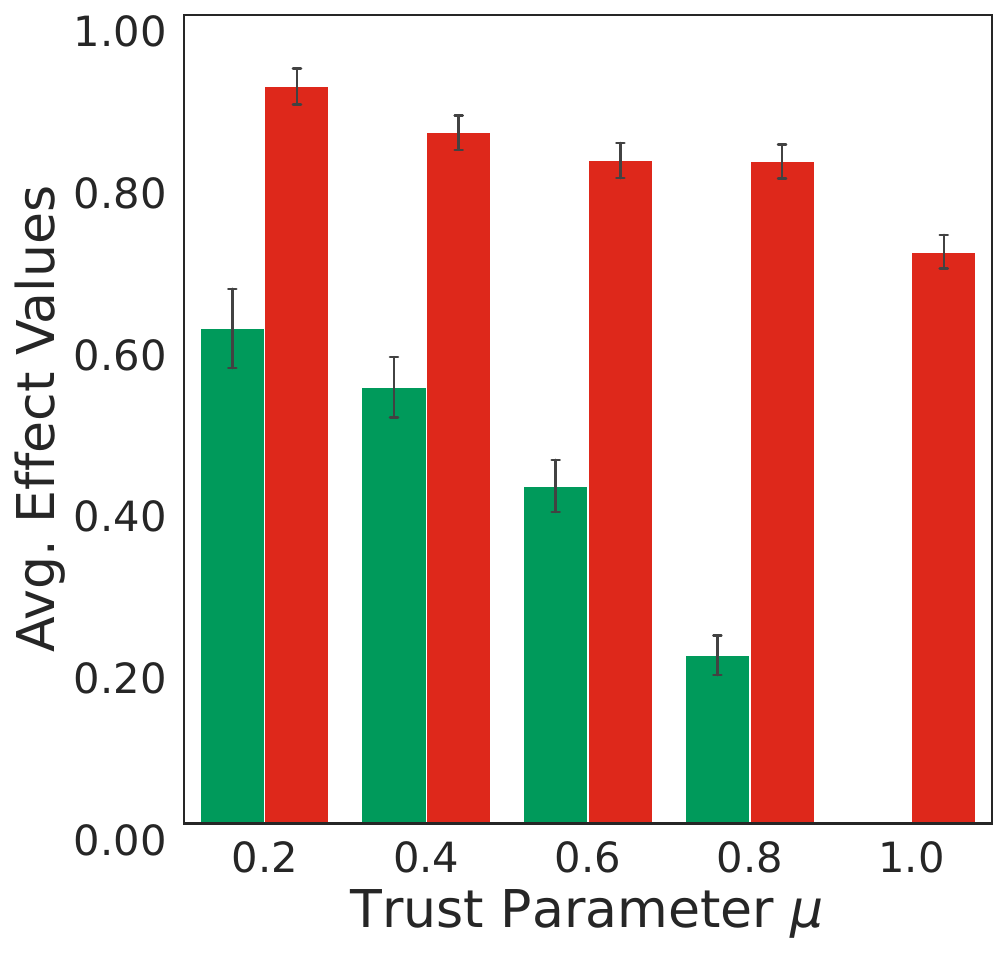}
         \caption{Sepsis: Through Clinician}
         \label{plot: clinician_se}
     \end{subfigure}
     \hfill
     \begin{subfigure}[b]{0.24\textwidth}
         \centering
         \includegraphics[width=\textwidth]{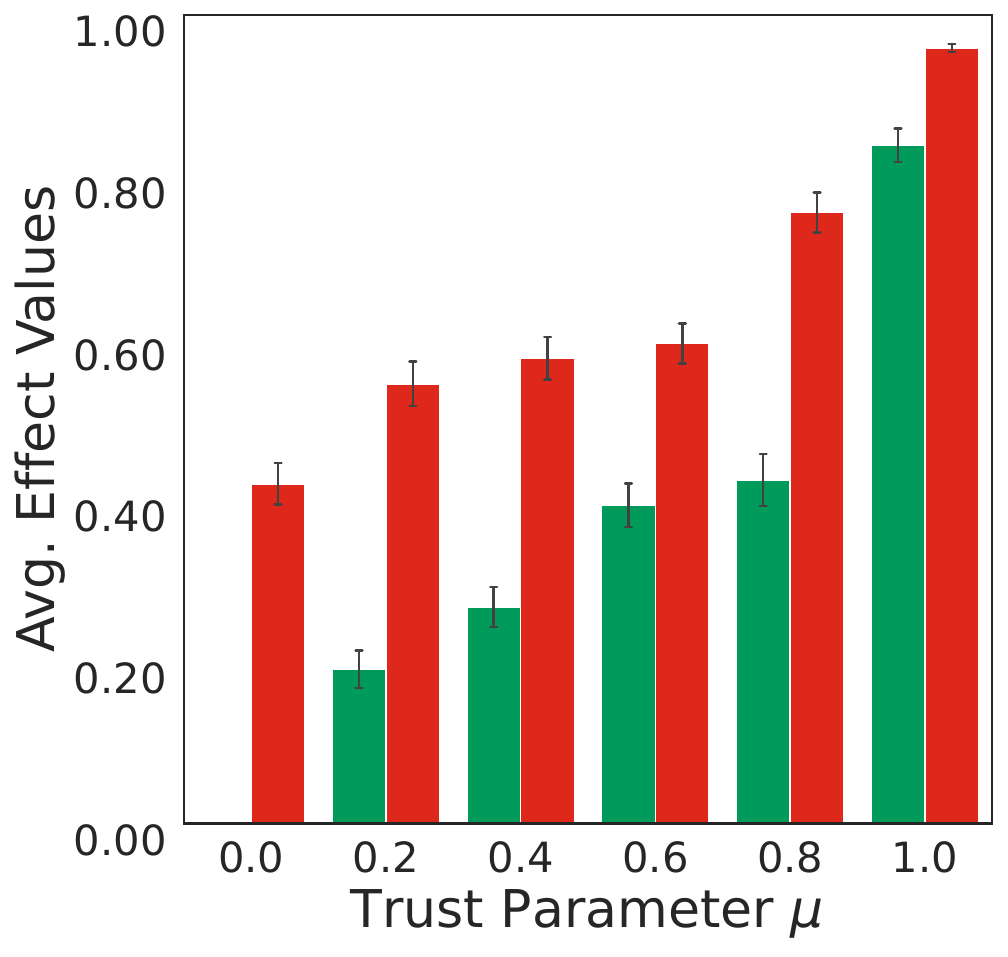}
         \caption{Sepsis: Through AI}
         \label{plot: ai_se}
     \end{subfigure}
    \caption{
    (Graph) Plots \ref{plot:graph_prob_error} and \ref{plot:graph_ord_error} show the average absolute error of cf-ASE when computed for wrongly estimated agents' policies and under violation of noise monotonicity, respectively.
    (Sepsis) Plots \ref{plot: clinician_se} and \ref{plot: ai_se} show average values of cf-ASE and cf-PSE when computing effects that propagate through the clinician and AI, respectively, while varying trust parameter $\mu$.
    }
    \label{fig:enter-label}
\end{figure*}

In this section, we empirically evaluate  
our approach 
using two environments from prior work, \textit{Graph}~\cite{triantafyllou2021blame} and \textit{Sepsis}~\cite{oberst2019counterfactual}, modified to align with our objectives. 
We refer the reader to Appendix \ref{app.exp_setup} for more details on the experimental setup, and to Appendix \ref{app.exp_results} for additional results related to the Sepsis environment.

\subsection{Environments and Experimental Setup}

{\bf Graph.}
In this environment, $6$ agents move in a graph with $1$ initial, $3$ terminal and $9$ intermediate nodes split into $3$ layers. All nodes of a column are connected (with directed edges) to all the next column's nodes. Thus, agents can move on the graph going \textit{up}, \textit{straight} or \textit{down}.
The agents' goal is to be equally distributed when reaching the last column.
The policy of each agent $i \in \{1, ..., 6\}$ is parameterized with a $p_i = 0.05 \cdot i$, which denotes their probability of taking a random action at any node. With $1 - p_i$ probability, agent $i$ follows a policy that is common among all agents, and depends on their current positions on the graph. 
We implement this environment, such that each action variable of the underlying MMDP-SCM is noise-monotonic w.r.t.~a predefined total ordering, which is \textit{up}, \textit{down}, \textit{straight}.

{\bf Sepsis.}
In our multi-agent version of Sepsis, AI and clinician have roles similar to those described in Section~\ref{sec.intro}. 
At each round, the AI recommends one of eight possible treatments, which is then reviewed and potentially overridden by the clinician.
If the clinician does not override AI's action, i.e., takes a \textit{no-op} action, then the treatment selected by the AI is applied. Otherwise, a new treatment selected by the clinician is applied.
The agents' goal is to keep the patient alive for $20$ rounds or achieve an earlier discharge. 
We learn the policies of both agents using Policy Iteration \cite{Sutton2018}, 
and we train them on slightly different transition probabilities.
For the clinician, this gives us their policy for selecting treatments, so we additionally need to specify 
their probability of overriding an action of the AI agent. Across all states, this is specified by a parameter $\mu$, which intuitively models the level of the clinician's trust in the AI: greater values of $\mu$ correspond to higher levels of trust. 

\textbf{Setup.} 
For each environment, we generate a set of trajectories in which agents fail to reach their goal, $500$ for Graph and $100$ (per value of $\mu$) for Sepsis. For each trajectory $\tau$, we compute the TCFE of all the potential alternative actions that agents could have taken.
Among these actions $a_{i,t}$, we retain those that exhibit a TCFE greater than or equal to a predefined threshold $\theta$ ($0.75$ for Graph and $0.8$ for Sepsis). Formally, we require $\mathrm{TCFE}_{a_{i,t}}(Y = \text{success}|\tau) \geq \theta$.
This process yields a total of $854$ selected alternative actions for Graph and $9197$ for Sepsis, which serve as the basis for evaluating cf-ASE. 
Counterfactual effects in our experiments are computed for $100$ counterfactual samples.

\subsection{Results}\label{sec.exp_graph}

\textbf{Robustness.} We first assess the robustness of cf-ASE in the presence of uncertainty, using the Graph environment, for which we know the underlying causal model. 
Given this knowledge, we can compute the correct values of cf-ASE, which we refer to as {\em target values}, and we do so for all $854$ alternative actions that were selected using the previously outlined process, covering all possible combinations of effect-agents ($31$ combinations in total). With this information, we perform two robustness tests. 

First, we test the robustness to uncertainty over the observational distribution of the model, specifically the agents' policies. 
Uncertainty is incorporated in the underlying model by randomly sampling a probability $\hat{p_i} \in [p_i - \epsilon_{max}, p_i + \epsilon_{max}]$, which defines $i$'s (wrongly) estimated policy. Here, $\epsilon_{max}$ is a parameter that we vary. Plot \ref{plot:graph_prob_error} shows the average absolute difference between the target values and the counterfactual agent-specific effects estimated for different levels of uncertainty (i.e., $\epsilon_{max}$). 
Results are grouped by target value range.
We observe that for small estimation errors ($\epsilon_{max} \leq 0.05$) we can still obtain good enough estimates of cf-ASE, especially when the target value is low.

Second, we assess the impact of violating the noise monotonicity assumption on the estimation of cf-ASE, by running our estimation algorithm with an incorrect total ordering. 
Plot \ref{plot:graph_ord_error} shows the average absolute difference between the target values and the counterfactual agent-specific effects estimated for different misspecified total orderings.
The results indicate that assuming the wrong total ordering can have a significant impact on the estimation accuracy of cf-ASE.
Interestingly, when the reverse of the correct total ordering is assumed, the estimation error is relatively low. 

\begin{figure*}
    \centering
     \begin{subfigure}[b]{0.24\textwidth}
         \centering
         \includegraphics[width=\textwidth]{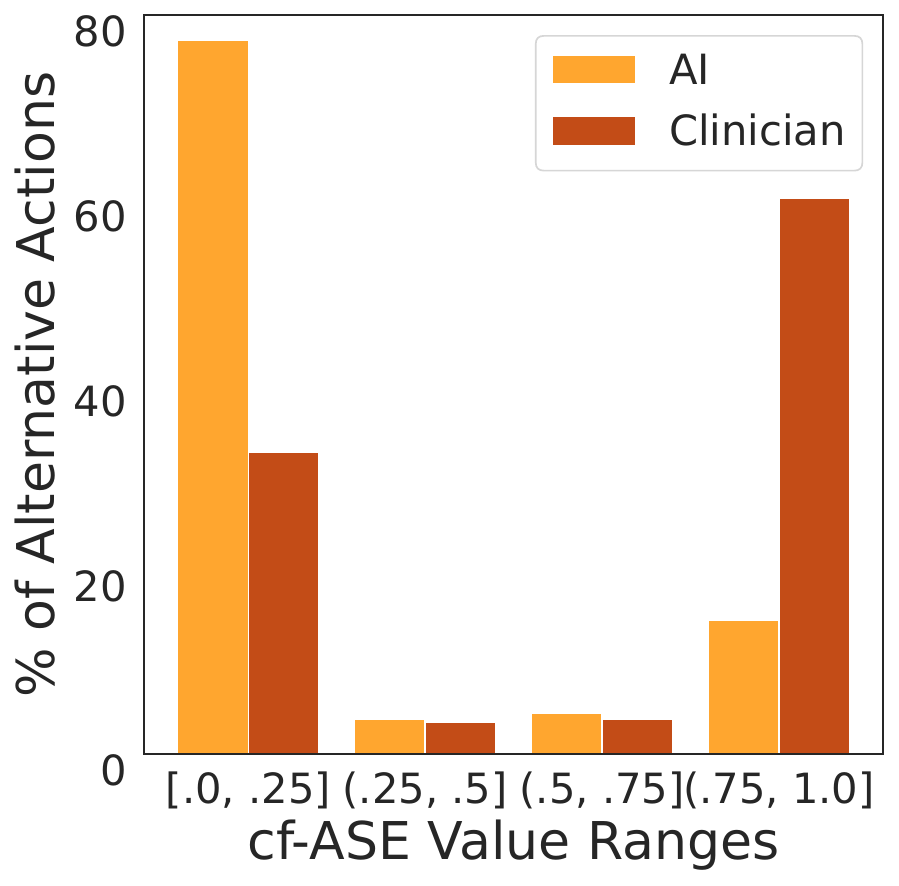}
         \caption{Sepsis: $\mu = 0.2$}
         \label{plot: sepsis_ase_perc02}
     \end{subfigure}
     \hfill
     \begin{subfigure}[b]{0.24\textwidth}
         \centering
         \includegraphics[width=\textwidth]{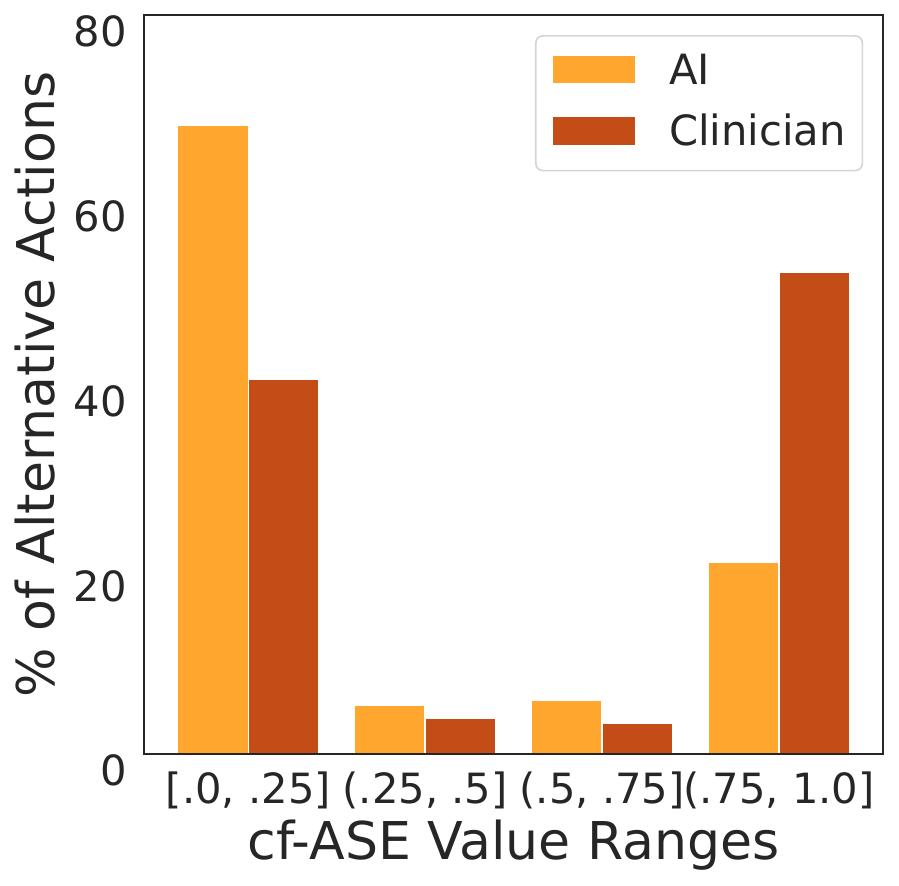}
         \caption{Sepsis: $\mu = 0.4$}
         \label{plot: sepsis_ase_perc04}
     \end{subfigure}
     \hfill
     \begin{subfigure}[b]{0.24\textwidth}
         \centering
         \includegraphics[width=\textwidth]{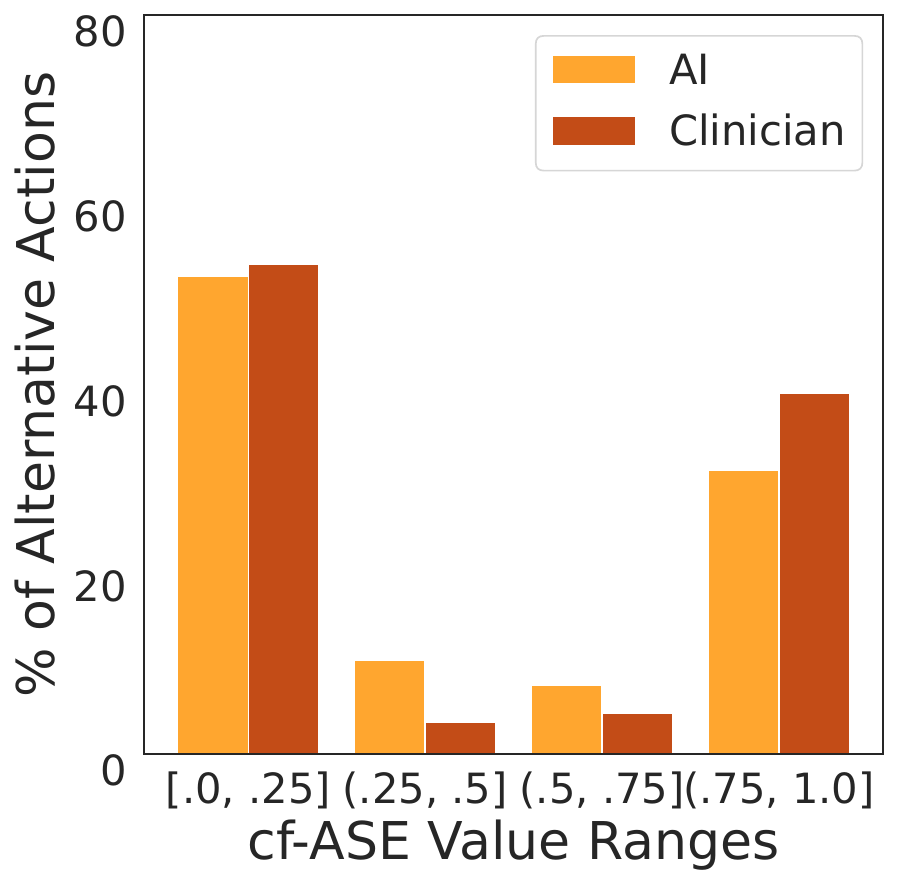}
         \caption{Sepsis: $\mu = 0.6$}
         \label{plot: sepsis_ase_perc06}
     \end{subfigure}
     \hfill
     \begin{subfigure}[b]{0.24\textwidth}
         \centering
         \includegraphics[width=\textwidth]{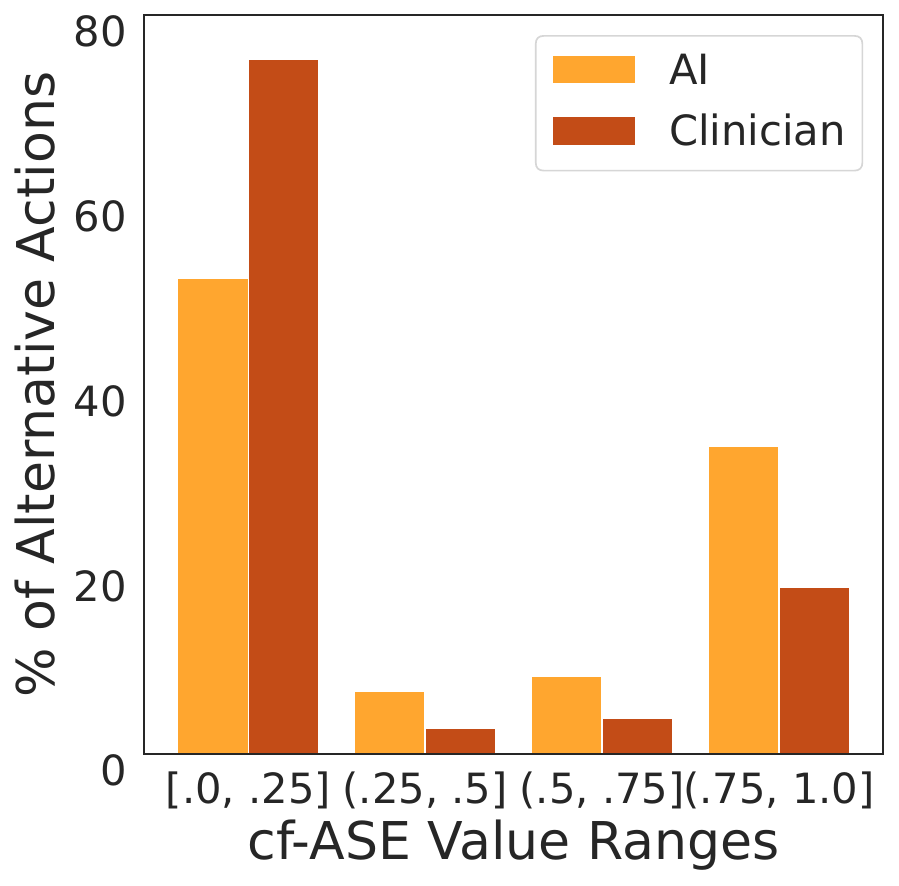}
         \caption{Sepsis: $\mu = 0.8$}
         \label{plot: sepsis_ase_perc08}
     \end{subfigure}
    \caption{
    (Sepsis) Plots in this figure show the distribution of selected alternative actions (of both the AI and the clinician) across different ranges of cf-ASE values and for different values of trust parameter $\mu$.
    }
\end{figure*}

\textbf{Practicality.} Next, we assess the conceptual importance and practicality of agent-specific effects using the Sepsis environment. For all selected actions of the AI (resp. clinician) we compute their counterfactual effect which propagates through the clinician (resp. AI).
In Plots \ref{plot: clinician_se} and \ref{plot: ai_se}, we present the average \textit{clinician-} and \textit{AI-specific effects}, as captured by cf-ASE and the counterfactual counterpart of PSE (cf-PSE), for different levels of trust.
Plots \ref{plot: sepsis_ase_perc02}-\ref{plot: sepsis_ase_perc08} show, for different values of trust parameter $\mu$, the distribution of selected alternative actions across value ranges of cf-ASE.

Plots \ref{plot: clinician_se}, \ref{plot: ai_se} and \ref{plot: sepsis_ase_perc02}-\ref{plot: sepsis_ase_perc08} suggest that the causal interdependencies between agents' policies can have a significant impact on the total counterfactual effect of an action. 
For instance, in Plot \ref{plot: sepsis_ase_perc06} we can see that even for a large trust level $\mu = 0.6$, close to $40\%$ of the AI's actions that admit TCFE greater than $0.8$, also admit a clinician-specific effect (as measured by cf-ASE) greater than $0.75$.  
This means that the total counterfactual effect of an action can rather frequently be attributed to the behavior of other agents in the system.

Plot \ref{plot: clinician_se} illustrates that our approach uncovers the following intuitive pattern: the clinician-specific effect decreases as the level of trust increases, ultimately dropping to $0$ when the human fully trusts the choices of the AI.
This is an important finding as it implies that given a set of trajectories, cf-ASE could provide us with valuable insight about the clinician's trust towards the AI.
A similar conclusion can be drawn from Plot \ref{plot: ai_se}, in which cf-ASE demonstrates the following trend: the AI-specific effect increases with $\mu$. This is expected as it implies that the effect of the clinician's actions on the outcome through the AI decreases, as the latter assumes less agency. 
Compared to cf-ASE, we observe that cf-PSE suggests a less prominent pattern for the clinician-specific effect. Most notably, it can assign positive value to the effect in cases where the clinician blindly follows AI's recommendations. A similar counter-intuitive result can also be observed for the AI-specific effect when $\mu=0$.
Given these results, we conclude that cf-ASE is a more suitable approach than cf-PSE for measuring the clinician- and AI-specific effects in this setting, as it aligns better with standard intuition.

Plots \ref{plot: sepsis_ase_perc02}-\ref{plot: sepsis_ase_perc08} showcase the following natural trend for cf-ASE: the fraction of actions with clinician-specific effect greater than $0.75$ decreases with the level of trust, while the fraction of actions with clinician-specific effect less than $0.25$ increases. This is expected as the less the clinician trusts the AI's actions, the more often they override them, and hence the more they tend to affect the outcome.
Similarly, the fraction of actions with AI-specific effect greater than $0.75$ increases with the level of trust, while the fraction of actions with AI-specific effect less than $0.25$ decreases. This is also expected as the less the clinician trusts the AI's actions the less agency the AI assumes.

\begin{remark}
In the Sepsis experiments, we do not have access to the underlying causal model and instead we only have access to the simulator's transition probabilities and the agents' policies.
Thus, for our analysis we assume that trajectories are generated by an MMDP-SCM with variables that are noise-monotonic w.r.t.~a chosen total ordering.
While this assumption is restrictive, we found that the results of this section are rather robust to this type of uncertainty, which is important in practice, where the assumptions we made in this paper are often violated.
Plots for $5$ additional randomly selected total ordering sets, which back up this claim, can be found in Appendix \ref{app.exp_results}.
\end{remark}


\section{Conclusion}

To summarize, in this paper we introduce agent-specific effects, a novel causal quantity that measures the effect of an action on the outcome that propagates through a set of agents.
To the best of our knowledge, this is the first work that looks into this problem in the context of multi-agent sequential decision making.
Our theoretical contributions include results on the identifiability of this new concept, a practical algorithm to estimate it, and establishing a formal connection with prior work. In the experiments, we demonstrate the practical effectiveness of our approach in a scenario where our theoretical assumptions do not hold.

\textbf{Future work.} Looking forward, we recognize three compelling future directions. 
First, a natural next step would be to derive a \textit{causal explanation formula} akin to \cite{zhang2018fairness}, which will decompose the causal effect of an agent's action by attributing to each agent and subsequent environment state a score reflecting its contribution to the effect.
Second, it would be of practical importance to explore the identifiability of ASE in the presence of unobserved confounding.
Finally, we plan to investigate the utility of ASE in other practical settings, where causal effect propagation analysis is important, e.g., in multi-agent RL.

\section*{Impact Statement}

\textbf{Impact on accountable decision making.} We believe that our approach can meaningfully extend auditing methods for decision-making outcomes.
For example, current approaches on responsibility attribution, counterfactual harm, etc.~focus solely on computing the counterfactual effect of an agent’s action on the outcome (see Section \ref{sec.related_work} for references). 
In this work, we aim to further analyze this effect, by measuring the extent to which it should be attributed to how other agents would respond to this action, providing a more granular insight. Thus, modifying existing causal tools to reason about agent-specific effects could lead to more informed judgements about accountability in multi-agent decision making.

\textbf{Ethical impact.} 
The approach we propose in this paper is situated in the post-processing phase of decision-making scenarios. While it does not directly impact human subjects, we recognize the imperative of conducting more empirical studies to validate its functionality and integration into accountability measures. We advocate for targeted research, including a thorough sensitivity analysis of our theoretical assumptions, in order to ensure the reliability of our approach. 
Finally, prior to utilizing ASE in practice, it is important to ensure that ASE-based explanations or measures are aligned with human intuitions and norms, which is especially important for high-stake domains, such as healthcare. 


\section*{Acknowledgements}
This research was, in part, funded by the Deutsche Forschungsgemeinschaft (DFG, German Research Foundation) – project number $467367360$.

\bibliography{main}
\bibliographystyle{icml2024}

\newpage
\appendix
\onecolumn


f

\section{List of Appendices}

In this section, we provide a brief description of the content provided in the appendices of the paper.

\begin{itemize}
    \item Appendix \ref{app.causal_graph} provides the causal graph of the MMDP-SCM from Section \ref{sec.framework_mmdp}.
    \item Appendix \ref{app.example} provides the formal expression of Definition \ref{def.cf-ase} to the example shown in Fig.~\ref{fig: ase}.
    \item Appendix \ref{app.exp_setup} provides additional details on the experimental setup and implementation.
    \item Appendix \ref{app.exp_results} provides additional experimental results for the Sepsis environment.
    \item Appendix \ref{app.related_work} provides additional related work.
    \item Appendix \ref{app.exp_background} provides additional background needed for the proofs of our theoretical results.
    \item Appendix \ref{app.ctf_remark} contains a remark about Lemma \ref{lem.pns_nm} and counterfactual identification.
    \item Appendix \ref{app.fpse} provides additional details and theoretical results related to fixed path-specific effects.
    \begin{itemize}
        \item Appendix \ref{proof.pns_nm} contains the proof of Lemma \ref{lem.pns_nm}.
    \end{itemize}
    \item Appendix \ref{proof.ase_id} contains the proofs of Theorems \ref{thrm.ase_nonid} and \ref{thrm.id_ase_cf}.
    \item Appendix \ref{proof.nm_mmdp} contains the proof of Lemma \ref{lemma.nm_mmdp}.
    \item  Appendix \ref{proof.nm_m} contains the proof of Proposition \ref{prop.nm_m}.
    \item Appendix \ref{proof.algo_unbias} contains the proof of Theorem \ref{thrm.algo_unbias}.
    \item Appendix \ref{proof.fpse_ase} contains the proof of Lemma \ref{lemma.fpse_ase}.
    \item Appendix \ref{proof.fpse_pse} contains the proof of Proposition \ref{prop.fpse_pse}.
\end{itemize}

\section{Causal Graph of MMDP-SCM}\label{app.causal_graph}

This section includes the causal graph of the MMDP-SCM described in Section \ref{sec.framework_mmdp}. The causal graph is shown in Fig.~\ref{fig: causal_graph}.

\begin{figure*}
\centering
\begin{tikzpicture}[
    >=stealth', 
    shorten >=1pt, 
    auto,
    node distance=0.5cm, 
    scale=.7, 
    transform shape, 
    align=center, 
    state/.style={circle, draw, minimum size=1.5cm}]


\node[state] (S-0) at (0,0) {$S_0$};

\node[state] (A-10) [rectangle, above right=1cm of S-0] {$A_{1,0}$};
\path (S-0) edge (A-10);

\node[state] (A-n0) [rectangle, right=1.5cm of A-10] {$A_{n,0}$};
\path (S-0) edge (A-n0);
\path (A-10) -- node[auto=false]{\ldots} (A-n0);

\node[state] (S-1) [below right=1cm of A-n0] {$S_1$};
\path (S-0) edge (S-1);
\path (A-10) edge (S-1);
\path (A-n0) edge (S-1);

\node[state] (A-11) [rectangle, above right=1cm of S-1] {$A_{1,1}$};
\path (S-1) edge (A-11);

\node[state] (A-n1) [rectangle, right=1.5cm of A-11] {$A_{n,1}$};
\path (S-1) edge (A-n1);
\path (A-11) -- node[auto=false]{\ldots} (A-n1);

\node[state] (S-2) [below right=1cm of A-n1] {$S_2$};
\path (S-1) edge (S-2);
\path (A-11) edge (S-2);
\path (A-n1) edge (S-2);


\node[state] (S-h1) [right=1cm of S-2] {$S_{h-1}$};
\path (S-2) -- node[auto=false]{\ldots} (S-h1);

\node[state] (A-1h1) [rectangle, above right=1cm of S-h1] {$A_{1,h-1}$};
\path (S-h1) edge (A-1h1);

\node[state] (A-nh1) [rectangle, right=1.5cm of A-1h1] {$A_{n,h-1}$};
\path (S-h1) edge (A-nh1);
\path (A-1h1) -- node[auto=false]{\ldots} (A-nh1);

\node[state] (S-h) [below right=1cm of A-nh1] {$S_h$};
\path (S-h1) edge (S-h);
\path (A-1h1) edge (S-h);
\path (A-nh1) edge (S-h);

\end{tikzpicture}

\captionsetup{type=figure}
\caption{The causal graph of an MMDP-SCM with $n$ agents and horizon $h$. Exogenous variables are omitted.}
\label{fig: causal_graph}
\end{figure*}
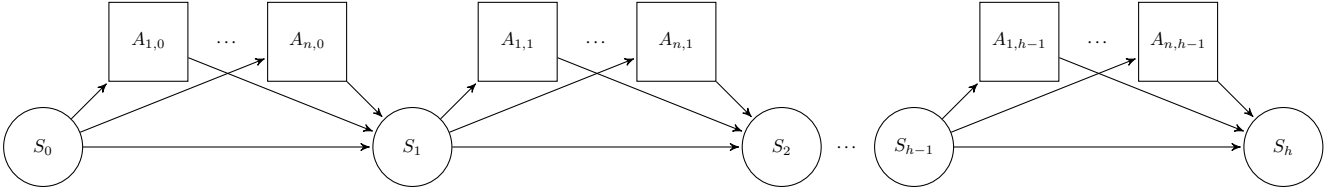
\section{Formal Expression of Definition \ref{def.cf-ase} to the Example shown in Fig.~\ref{fig: ase}}\label{app.example}

In the example illustrated in Fig.~\ref{fig: ase}, the clinician's actions are set to the values that they would naturally take under intervention $do(A_0 := E)$, i.e., $H_0 := H_{0, do(A_0 := E)}$ and $H_1 := H_{1, do(A_0 := E)}$. In the same example, the second action of the AI agent is set to its factual action, i.e., $A_1 := \tau(A_1)$, where $\tau$ here denotes the factual trajectory.
These together form set $I$ from Definition \ref{def.cf-ase}, i.e., $I = \{A_1 := \tau(A_1)\} \cup \{H_0 := H_{0, do(A_0 := E)}, H_1 := H_{1, do(A_0 := E)}\}$.
We can then formally express the counterfactual clinician-specific effect of treatment $E$ on outcome $y$, in this example, as follows
$$\mathrm{cf\text{-}ASE}^{CL}_{A_0 := E}(y|\tau)_M = P(y_{do(A_0 := C)}|\tau;M)_{M^{do(I)}} - P(y|\tau)_M,$$
where $M$ denotes the underlying MMDP-SCM.

\section{Additional Information on Experimental Setup and Implementation}\label{app.exp_setup}

In this section, we provide additional information on the  experimental setup and implementation details.

\subsection{Experimental Setup}

We first provide a more detailed description of the environments and the agents' policies used in the experiments.

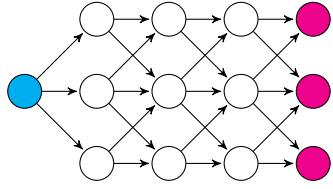
\begin{figure}
\centering
\begin{tikzpicture}[>=stealth', shorten >=1pt, auto,
    node distance=1cm, scale=.45, 
    transform shape, align=center, 
    state/.style={circle, draw, minimum size=1cm}]


\node[state, fill=cyan] (N-0) at (0,0) {};


\node[state] (N-1) [below right=2cm of N-0] {};
\node[state] (N-2) at (N-0 -| N-1) {};
\node[state] (N-3) [above right=2cm of N-0] {};

\path (N-0) edge (N-1);
\path (N-0) edge (N-2);
\path (N-0) edge (N-3);


\node[state] (N-4) [below right=2cm of N-2] {};
\node[state] (N-5) at (N-2 -| N-4) {};
\node[state] (N-6) [above right=2cm of N-2] {};

\path (N-1) edge (N-4);
\path (N-1) edge (N-5);

\path (N-2) edge (N-4);
\path (N-2) edge (N-5);
\path (N-2) edge (N-6);

\path (N-3) edge (N-5);
\path (N-3) edge (N-6);


\node[state] (N-7) [below right=2cm of N-5] {};
\node[state] (N-8) at (N-5 -| N-7) {};
\node[state] (N-9) [above right=2cm of N-5] {};

\path (N-4) edge (N-7);
\path (N-4) edge (N-8);

\path (N-5) edge (N-7);
\path (N-5) edge (N-8);
\path (N-5) edge (N-9);

\path (N-6) edge (N-8);
\path (N-6) edge (N-9);


\node[state, fill=magenta] (N-10) [below right=2cm of N-8] {};
\node[state, fill=magenta] (N-11) at (N-8 -| N-10) {};
\node[state, fill=magenta] (N-12) [above right=2cm of N-8] {};

\path (N-7) edge (N-10);
\path (N-7) edge (N-11);

\path (N-8) edge (N-10);
\path (N-8) edge (N-11);
\path (N-8) edge (N-12);

\path (N-9) edge (N-11);
\path (N-9) edge (N-12);

\end{tikzpicture}

\captionsetup{type=figure}
\caption{Graph environment. The cyan node is the initial node and magenta nodes are the terminal nodes of the graph.}
\label{fig: env_graph}
\end{figure}

\textbf{Graph.} We consider the graph environment depicted in Fig. \ref{fig: env_graph}, in which $6$ agents take simultaneous actions.
All agents begin at the left most node, and at each time-step move to nodes of the next column following the directed edges.
Thus, each agent can select to either move \textit{up}, \textit{straight} or \textit{down}, with out-of-bounds moves leading to moving straight.
The agents' goal in this environment is to be equally distributed when reaching the last column of the graph, i.e., have $2$ agents per node. Agents reach the last column of the graph in $4$ time-steps, when also the environment terminates.

At the initial node, all agents take a random action. At all other nodes (except the terminal ones), each agent $i \in \{1, 2, 3, 4, 5, 6\}$ takes a random action with an agent-specific probability $p_i = 0.05 \cdot i$. At a node $k$ that is occupied by a total number of agents $N_k \leq 2$, agent $i$ moves straight with probability $1 - p_i$. Otherwise, $i$ moves straight with probability $(1 - p_i) \cdot \frac{2}{N_k}$, and towards a row that has less than $2$ agents with probability $(1 - p_i) \cdot \frac{N_k - 2}{N_k}$.\footnote{If there are two rows with less than $2$ agents where $i$ could move to, then each of these moves has probability $(1 - p_i) \cdot \frac{N_k - 2}{2 \cdot N_k}$.}

\textbf{Sepsis.}
The patient's state in this environment can be described by $4$ vital signs 
(heart rate, systolic blood pressure, percent of oxygen saturation and glucose level) 
as well as an additional diabetes variable 
present with $20\%$ probability, influencing the fluctuations of the glucose level.
Both agents can select among 
$8$ distinct 
actions, each of which represents a combination of applying antibiotics (A), vasopressors (V) and mechanical ventilation (E) treatment options. As mentioned in Section \ref{sec.exp}, the clinician can additionally take a no-op action, which means that they do not override the AI's action at that step.
For more details regarding the simulator we consider in these experiments, we refer the reader to \cite{oberst2019counterfactual}.

\begin{figure}\label{fig: sepsis_policies}
    \centering
     \hspace*{\fill}
     \begin{subfigure}[b]{0.23\textwidth}
         \centering
         \includegraphics[width=\textwidth]{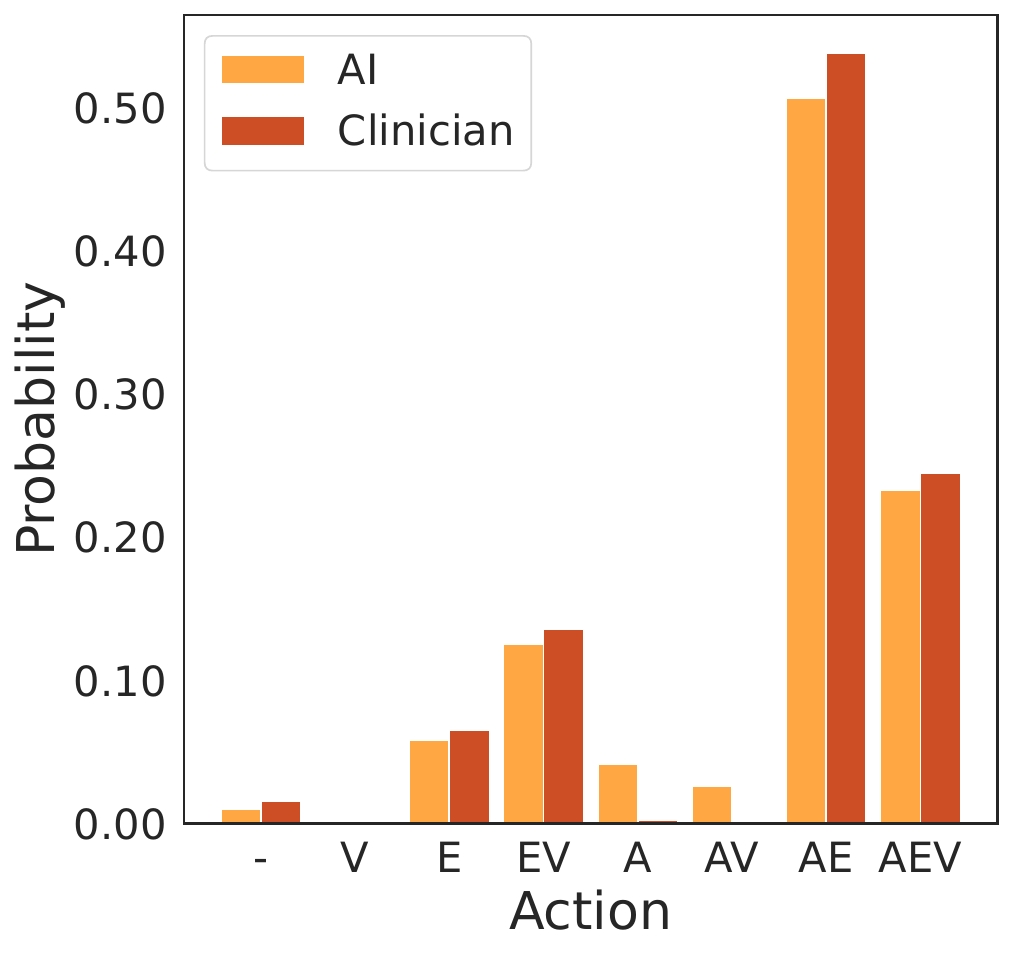}
         \caption{Sepsis: Policy Discrepancy}
         \label{plot: sepsis_discrepancy}
     \end{subfigure}
     \hfill
     \begin{subfigure}[b]{0.23\textwidth}
         \centering
         \includegraphics[width=\textwidth]{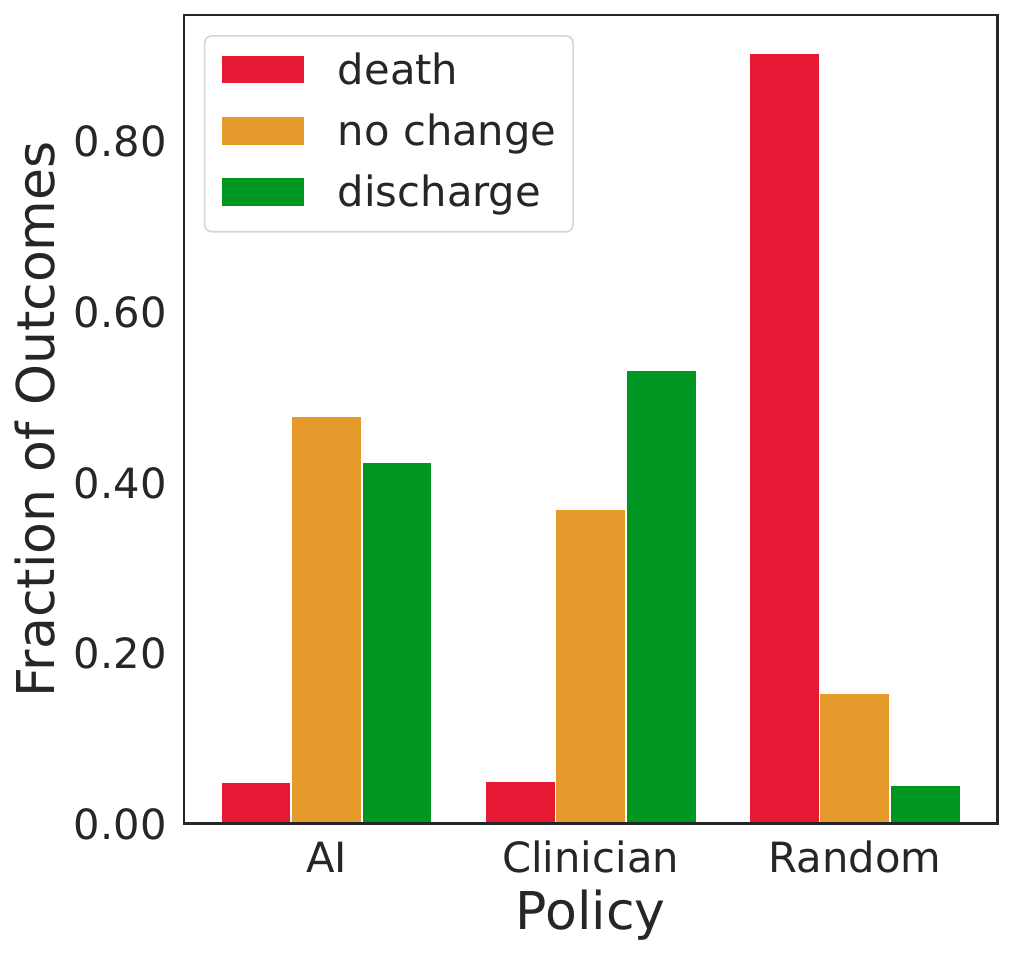}
         \caption{Sepsis: Policy Evaluation}
         \label{plot: sepsis_evaluation}
     \end{subfigure}
     \hspace*{\fill}
    \caption{(a) Probabilities that AI and clinician policies assign to different actions. (b) Observed outcomes across 1000 trajectories sampled from the simulator for AI, clinician and random policies.}
\end{figure}

The clinician's policy is trained using the simulator's true transition probabilities. To introduce grounds for action overriding by the clinician, the AI's policy is trained using slightly modified transition probabilities.
Namely, we increase the probability of antibiotics effectiveness and decrease the probability of vasopressors effectiveness (compared to the original simulator). This results in a policy for the AI which recommends treatments with vasopressors more scarcely and those with antibiotics more abundantly (compared to the clinician's policy), visualized in Plot 7a. To qualitatively evaluate learned policies, we sample 1000 trajectories from the simulator and depict the observed rewards in Plot 7b. For both policies we observe similar levels of undesirable outcomes, but note that treatments prescribed by the clinician tend to lead to an earlier patient discharge, compared to those prescribed by the AI, which tend to keep the patient alive, but do not completely cure them until the end of trajectory. 

We note that for a trajectory in which an undesirable outcome is realized, i.e., the patient dies at a time-step $t$ before the completion of $20$ rounds, the outcome $Y$ is the final state of this trajectory. In other words, in our analysis we focus on the counterfactual probability that the patient would not have died at time $t$.

\subsection{Algorithms for TCFE and cf-PSE}

We now present the algorithms we use in our experiments to estimate total counterfactual effects (TCFE) and the counterfactual counterpart of path-specific effects (cf-PSE). Similar to Algorithm \ref{alg.cf-ase}, Algorithms \ref{alg.tcfe} and \ref{alg.cf-pse} follow the standard \textit{abduction-action-prediction} methodology for causal inference \cite{pearl2009causality}.
Furthermore, similar results to Theorem \ref{thrm.algo_unbias} can be derived for the unbiasedness of the two algorithms.

\begin{algorithm}
\caption{Estimates $\mathrm{TCFE}_{a_{i,t}}(y|\tau)_M$}
\label{alg.tcfe}
\textbf{Input}: MMDP-SCM $M$, trajectory $\tau$, action variable $A_{i,t}$, action $a_{i,t}$, outcome variable $Y$, outcome $y$, number of posterior samples $H$
\begin{algorithmic}[1]
\STATE $h \gets 0$
\STATE $c \gets 0$
\WHILE{$h < H$}
    \STATE $\mathbf{u}_h \sim P(\mathbf{u} | \tau)$ \COMMENT{Sample noise from the posterior}
    \STATE $h \gets h + 1$\;
    \STATE $y^h \sim P(Y|\mathbf{u}_h)_{M^{do(A_{i,t} := a_{i,t})}}$ \COMMENT{Sample cf outcome}
    \IF{$y^h = y$}
        \STATE $c \gets c + 1$
    \ENDIF
\ENDWHILE
\STATE \textbf{return} $\frac{c}{H} - \mathds{1}(\tau(Y) = y)$
\end{algorithmic}
\end{algorithm}

\begin{algorithm}
\caption{Estimates $\mathrm{cf\text{-}PSE}^g_{a_{i,t}}(y|\tau)_M$}
\label{alg.cf-pse}
\textbf{Input}: MMDP-SCM $M$, trajectory $\tau$, effect agents $\mathbf{N}$, action variable $A_{i,t}$, action $a_{i,t}$, outcome variable $Y$, outcome $y$, number of posterior samples $H$
\begin{algorithmic}[1]
\STATE $h \gets 0$
\STATE $c \gets 0$
\WHILE{$h < H$}
    \STATE $\mathbf{u}_h \sim P(\mathbf{u} | \tau)$ \COMMENT{Sample noise from the posterior}
    \STATE $h \gets h + 1$
    \STATE $I \gets \{A_{i,t} := a_{i,t}\} \cup \{A_{i',t'} := \tau(A_{i',t'})\}_{i' \notin \mathbf{N}, t' > t}$
    \STATE $y^h \sim P(Y|\mathbf{u}_h)_{M^{do(I)}}$ \COMMENT{Sample cf outcome}
    \IF{$y^h = y$}
        \STATE $c \gets c + 1$
    \ENDIF
\ENDWHILE
\STATE \textbf{return} $\frac{c}{H} - \mathds{1}(\tau(Y) = y)$
\end{algorithmic}
\end{algorithm}

Note that for $\mathrm{cf\text{-}PSE}^g_{a_{i,t}}(y|\tau)_M$ in Algorithm \ref{alg.cf-pse}, $g$ represents the edge-subgraph of the causal graph of $M$, $G$, with 
$
\mathbf{E}(g) = \mathbf{E}(G) \backslash \bigcup_{A_{i',t'}: i' \notin \mathbf{N}, t' > t} \mathbf{E}^{A_{i',t'}}_-(G),
$
where $\mathbf{E}^{A_{i',t'}}_-(G)$ denotes the set of incoming edges of $A_{i',t'}$ in $G$.

\subsection{Compute Architecture and Software Stack}

All experiments were run on a 64bit Debian-based machine having 4x12 CPU cores clocked at 3GHz with access to 1.5TB of DDR3 1600MHz RAM. The software stack relied on Python 3.9.13, with installed standard scientific packages for numeric calculations and visualization: NumPy (1.24.3), Pandas (2.0.3), Joblib (1.3.1), Matplotlib (3.7.1) and Seaborn (0.12.2). For learning agent's policies in sepsis experiments, we relied on PyMDPToolbox (4.0-b3) library. The experiments are fully reproducible using the random seed $8854$. 
The total running time of the experiments on the Graph environment is $\sim 2.5$ hours and of the experiments on the Sepsis environment is $\sim 7.5$ hours. 
\section{Additional Experimental Results}\label{app.exp_results}

In this section, we provide additional experimental results for the Sepsis environment.
In Plots \ref{plot: sepsis_orderings_cl0}-\ref{plot: sepsis_orderings_ai4}, we present the average \textit{clinician-} and \textit{AI-specific effects}, as captured by cf-ASE and the counterfactual counterpart of PSE (cf-PSE), for different levels of trust and across $5$ randomly selected total ordering sets.
From the plots corresponding to any of these sets of total orderings, we can draw similar conclusions to the ones we drew from Plots \ref{plot: clinician_se} and \ref{plot: ai_se}.
Therefore, as also mentioned in Section \ref{sec.exp}, these plots support our claim that the results of that section are robust to violations of our theoretical assumptions, which often happen in practice.

\begin{figure*}
\label{fig: sepsis_orderings}
    \centering
    \begin{subfigure}[c]{0.22\textwidth}
        \centering
        \includegraphics[width=\textwidth,keepaspectratio]{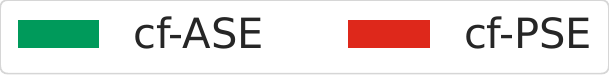}
    \end{subfigure}%
    \hfill%
    \\[1.5ex]
     \begin{subfigure}[b]{0.24\textwidth}
         \centering
         \includegraphics[width=\textwidth]{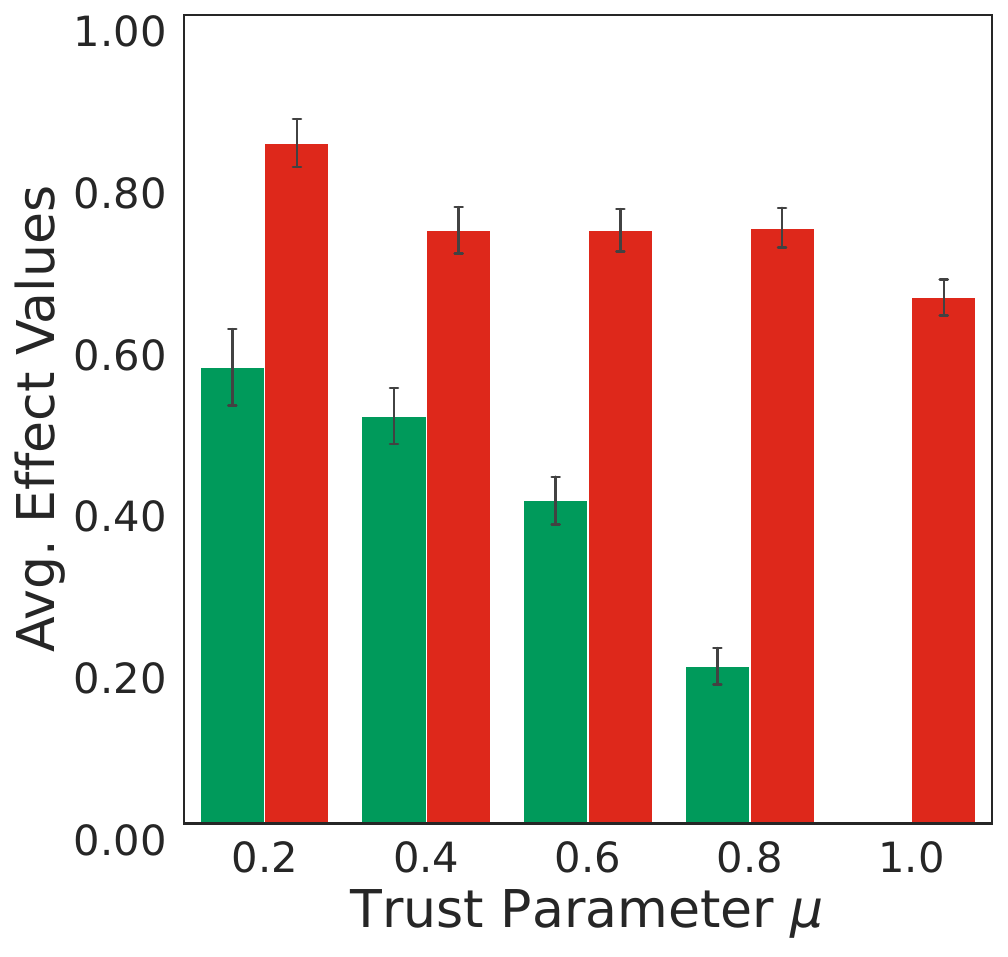}
         \caption{Sepsis: Through CL Ord1}
         \label{plot: sepsis_orderings_cl0}
     \end{subfigure}
     \hfill
     \begin{subfigure}[b]{0.24\textwidth}
         \centering
         \includegraphics[width=\textwidth]{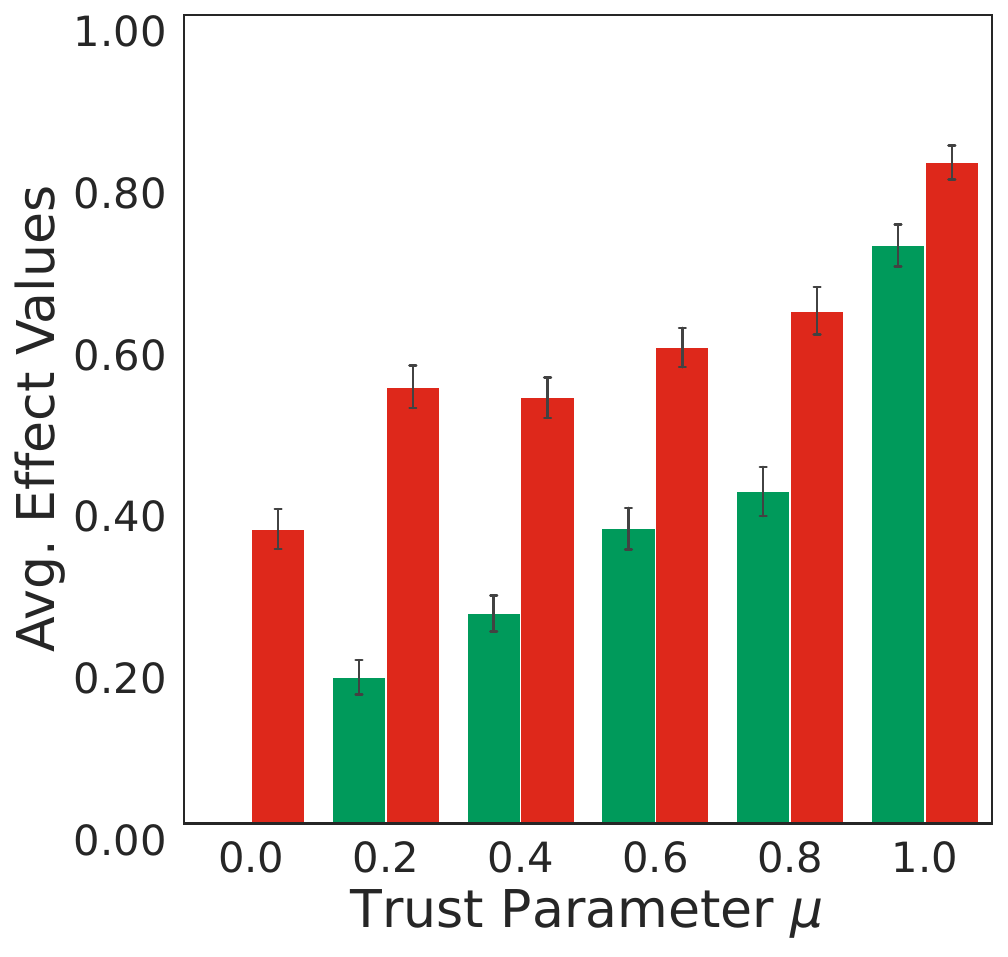}
         \caption{Sepsis: Through AI Ord1}
         \label{plot: sepsis_orderings_ai0}
     \end{subfigure}
     \hfill
     \begin{subfigure}[b]{0.24\textwidth}
         \centering
         \includegraphics[width=\textwidth]{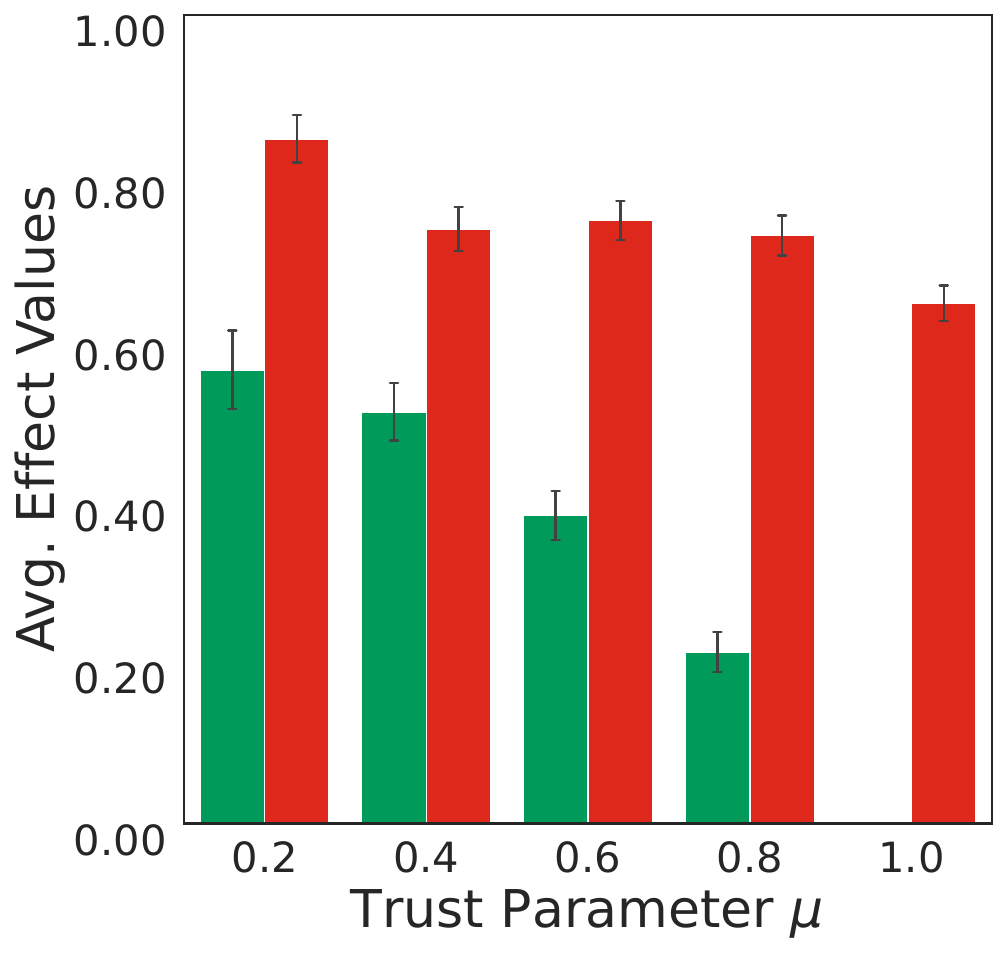}
         \caption{Sepsis: Through CL Ord2}
         \label{plot: sepsis_orderings_cl1}
     \end{subfigure}
     \hfill
     \begin{subfigure}[b]{0.24\textwidth}
         \centering
         \includegraphics[width=\textwidth]{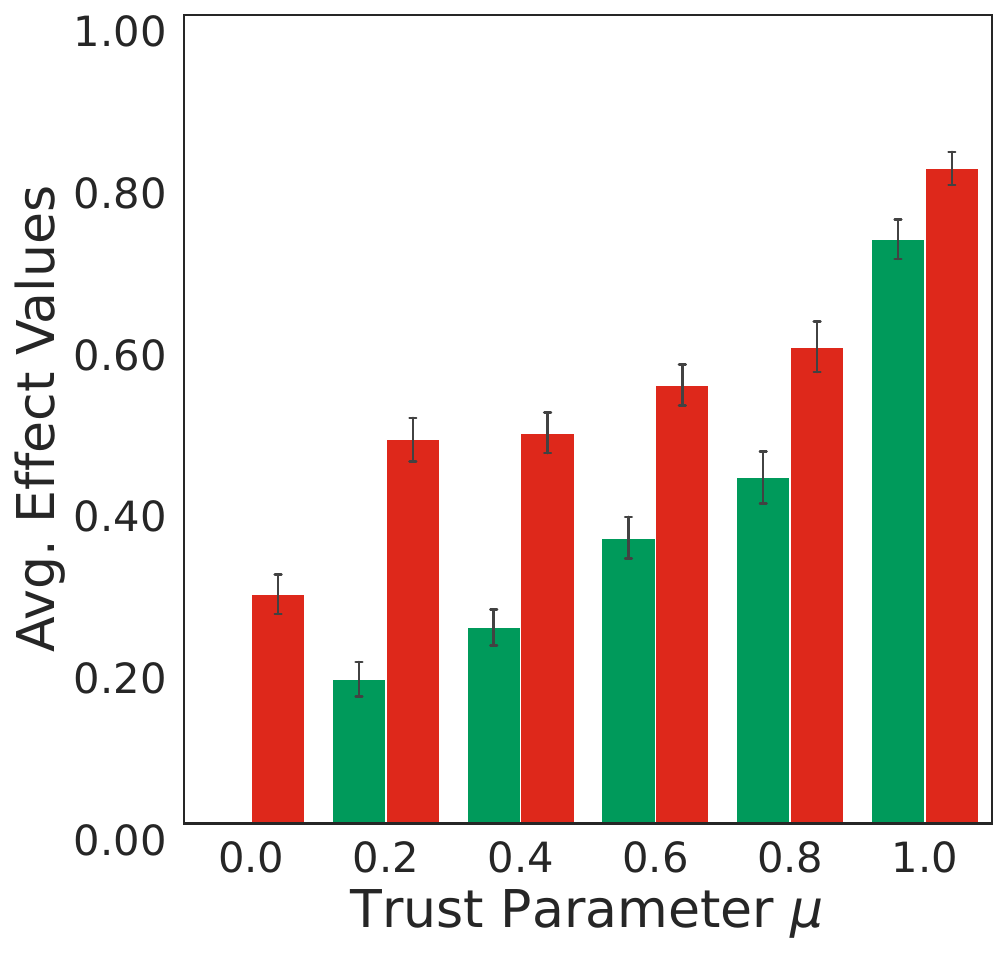}
         \caption{Sepsis: Through AI Ord2}
         \label{plot: sepsis_orderings_ai1}
     \end{subfigure}
     \\
     \begin{subfigure}[b]{0.24\textwidth}
         \centering
         \includegraphics[width=\textwidth]{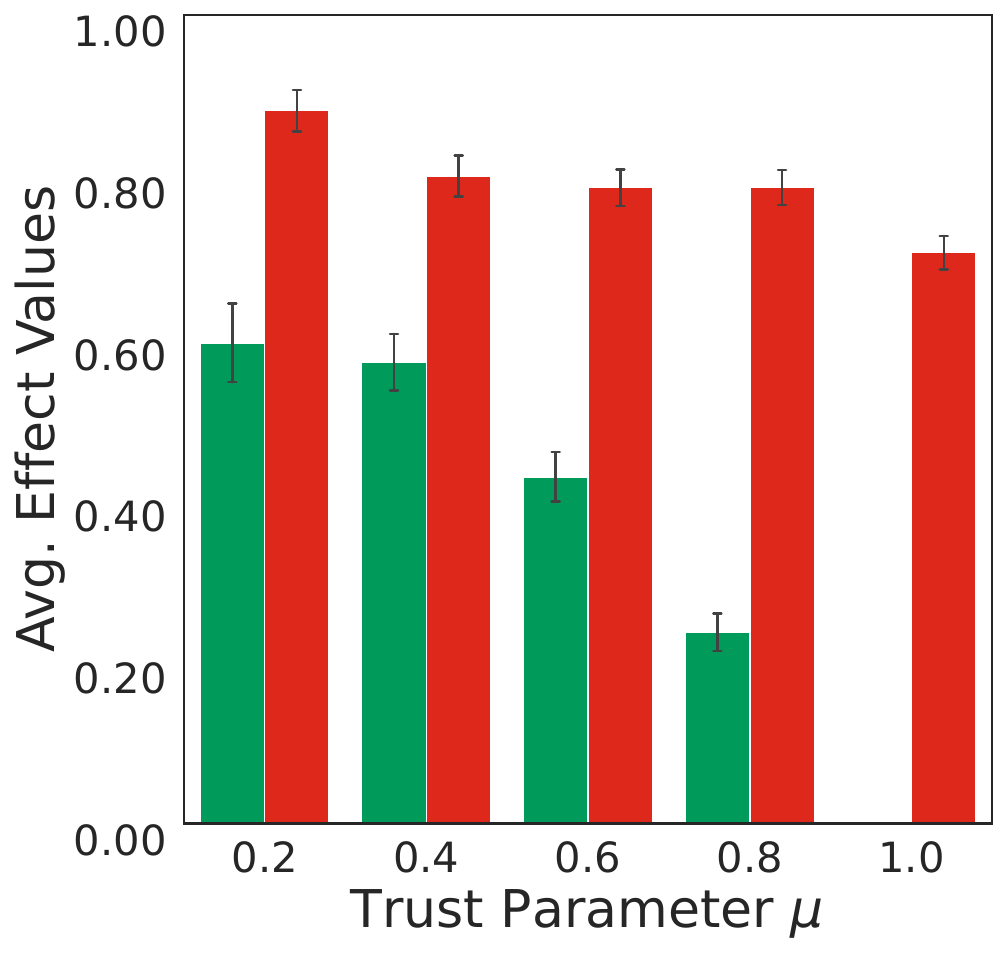}
         \caption{Sepsis: Through CL Ord3}
         \label{plot: sepsis_orderings_cl2}
     \end{subfigure}
     \hfill
     \begin{subfigure}[b]{0.24\textwidth}
         \centering
         \includegraphics[width=\textwidth]{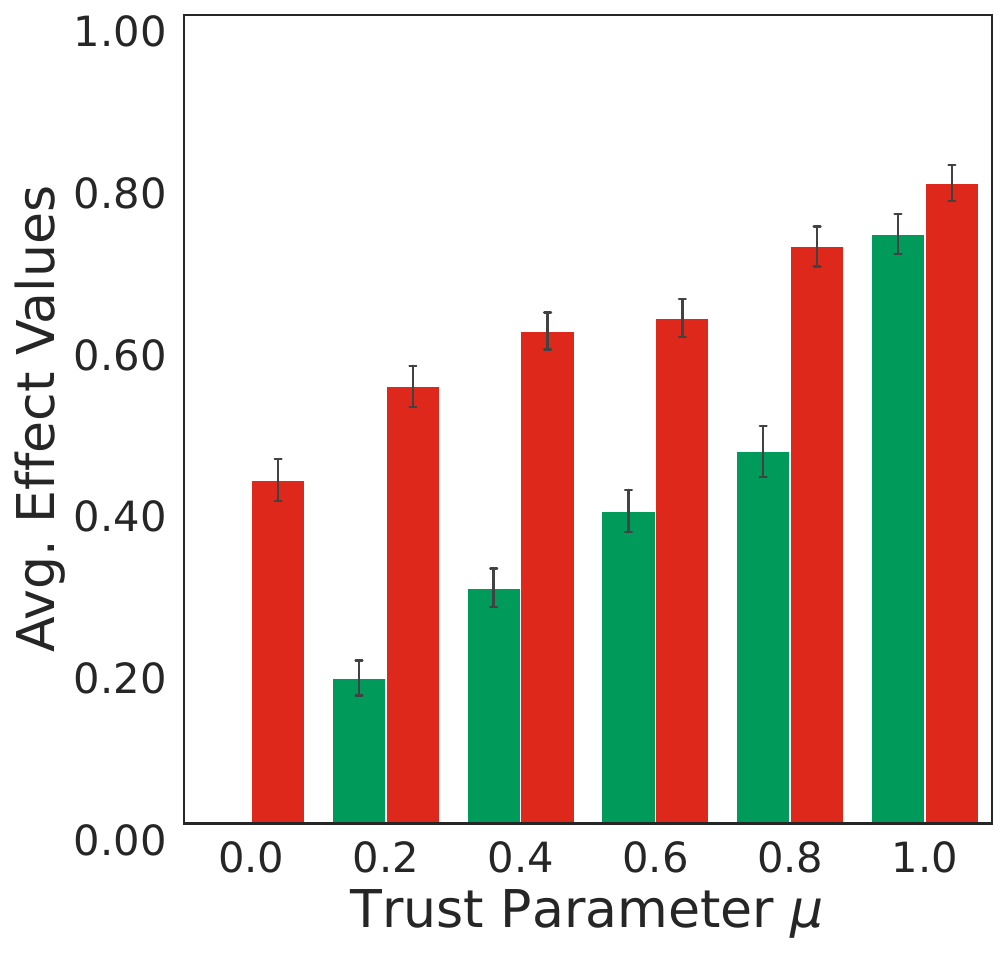}
         \caption{Sepsis: Through AI Ord3}
         \label{plot: sepsis_orderings_ai2}
     \end{subfigure}
     \hfill
     \begin{subfigure}[b]{0.24\textwidth}
         \centering
         \includegraphics[width=\textwidth]{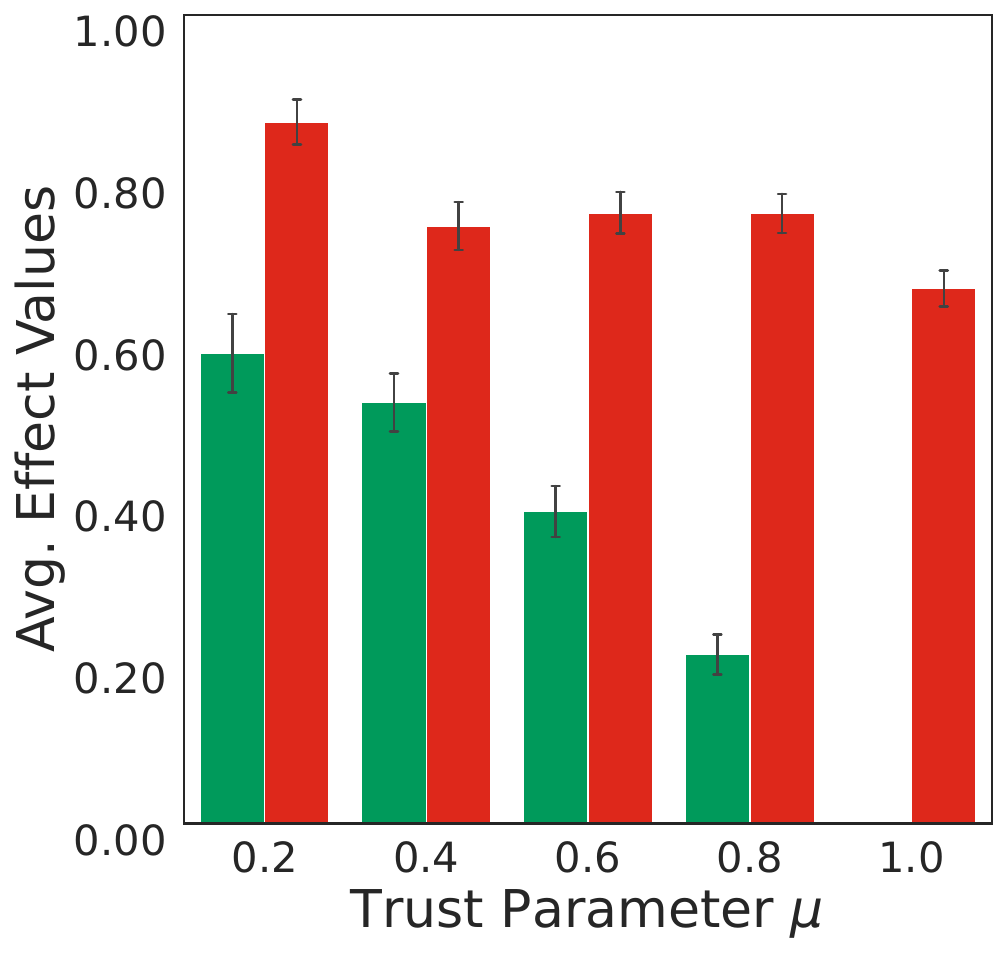}
         \caption{Sepsis: Through CL Ord4}
         \label{plot: sepsis_orderings_cl3}
     \end{subfigure}
     \hfill
     \begin{subfigure}[b]{0.24\textwidth}
         \centering
         \includegraphics[width=\textwidth]{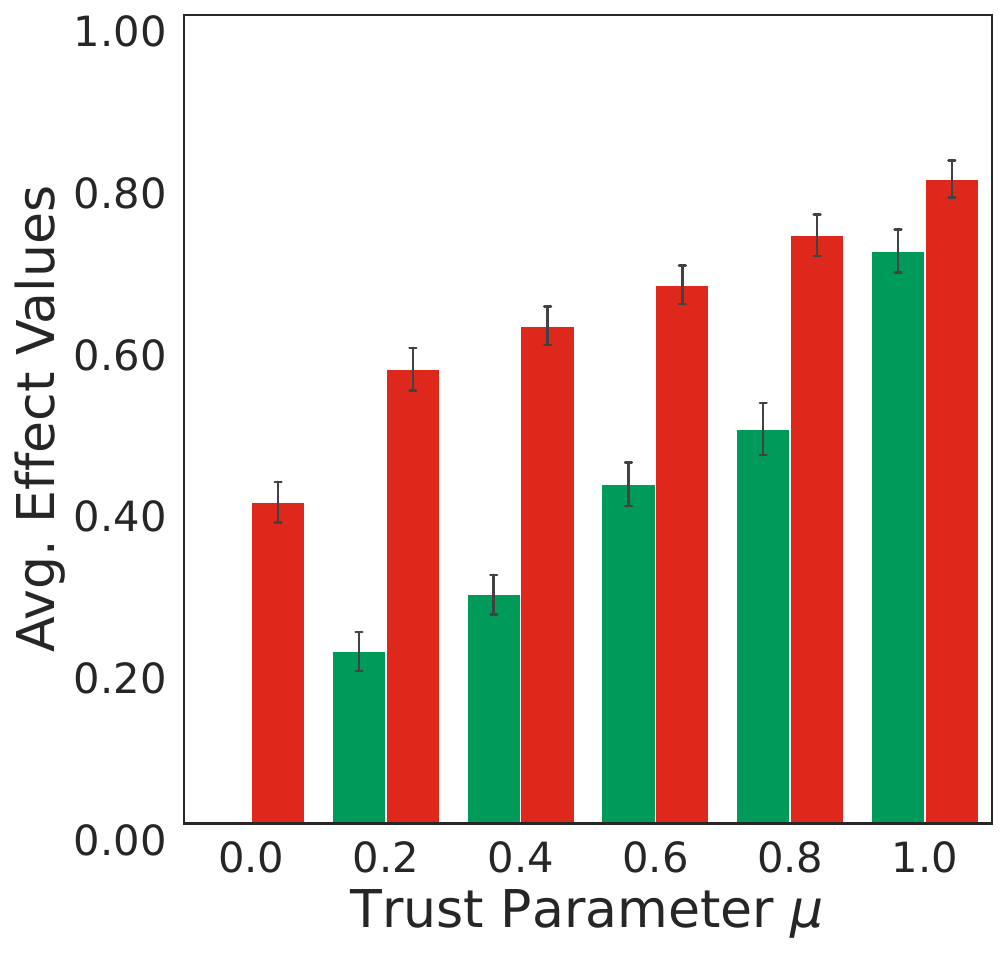}
         \caption{Sepsis: Through AI Ord4}
         \label{plot: sepsis_orderings_ai3}
     \end{subfigure}
      \\
     \begin{subfigure}[b]{0.24\textwidth}
         \centering
         \includegraphics[width=\textwidth]{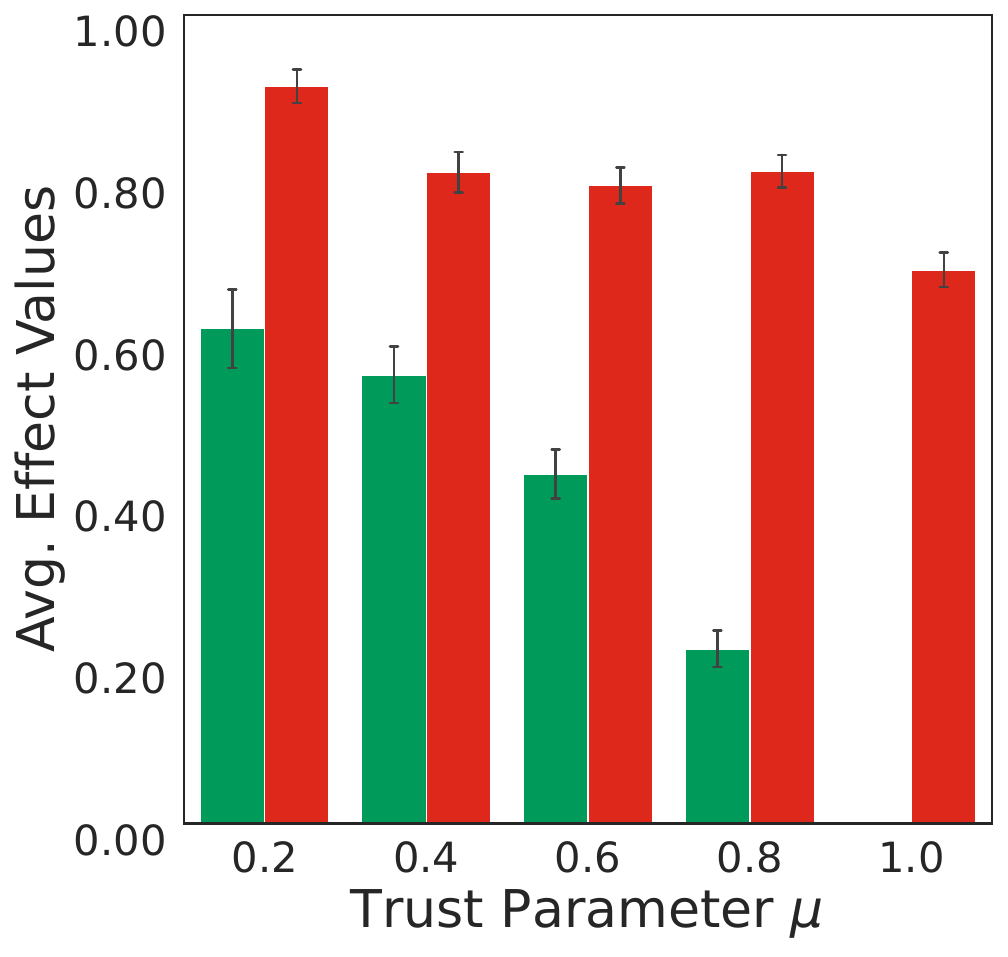}
         \caption{Sepsis: Through CL Ord5}
         \label{plot: sepsis_orderings_cl4}
     \end{subfigure}
     \begin{subfigure}[b]{0.24\textwidth}
         \centering
         \includegraphics[width=\textwidth]{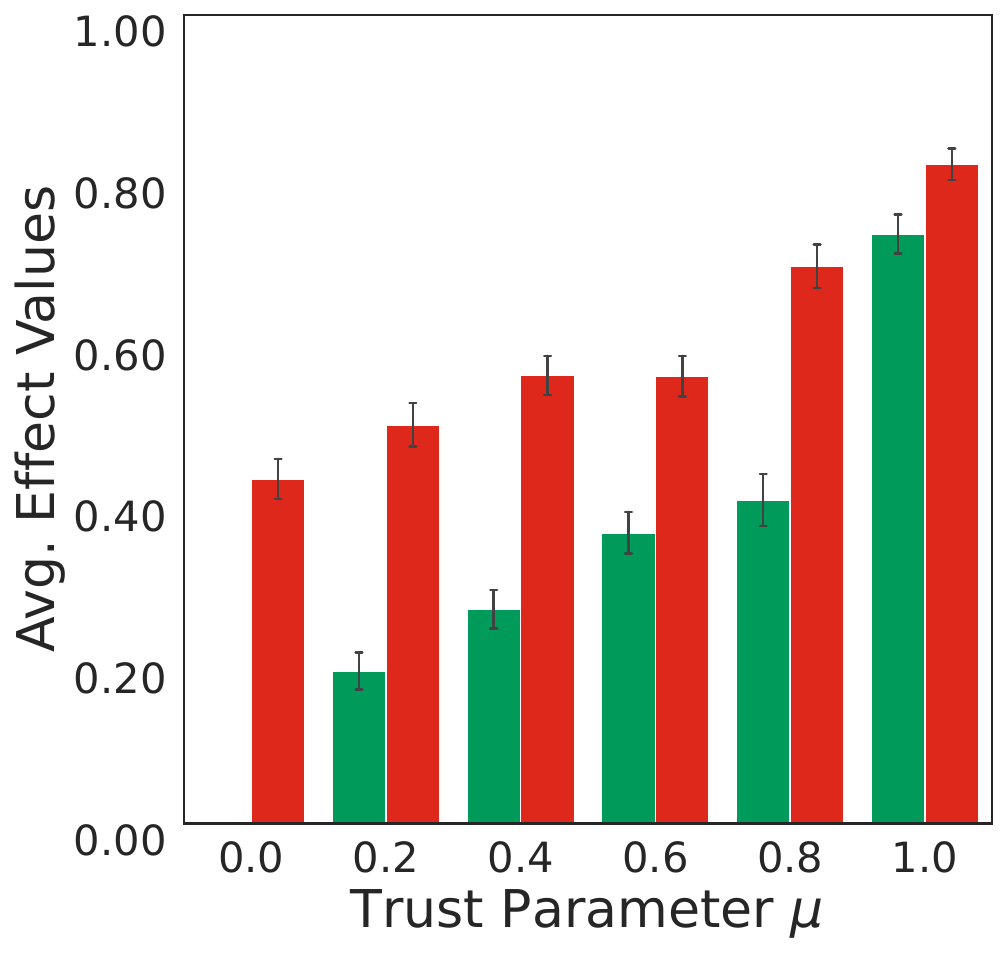}
         \caption{Sepsis: Through AI Ord5}
         \label{plot: sepsis_orderings_ai4}
     \end{subfigure}
    \caption{
    Plots in this figure show average values of cf-ASE and cf-PSE when computing effects that propagate through the clinician (CL) and AI, respectively, while varying trust parameter $\mu$. Results are shown for $5$ randomly selected total ordering sets (Ord1-Ord5).
    }
\end{figure*}
\section{Additional Related Work}\label{app.related_work} 

The results of Section \ref{sec.id} are in general related to works that study the identifiability of counterfactual effects. For instance, there are results on the identification of the \textit{effect of treatment on the treated} \cite{heckman1991randomization, shpitser2009effects} and the identification of counterfactual path-specific effects \cite{shpitser2018identification}. Moreover, \cite{shpitser2007counterfactuals} and \cite{correa2021nested} study the identification of arbitrary non-nested and nested counterfactuals, respectively, from observational and interventional data. Meanwhile, \cite{zhang2018fairness} offers insights into the identifiability of counterfactual direct, indirect and spurious effects across three canonical model settings.

In a broader sense, our work is also related to research papers that study repeated human-AI interactions involving both stationary \cite{ghosh2020towards, dimitrakakis2017multi} and non-stationary \cite{radanovic2019learning, nikolaidis2017game} human policies. We believe that investigating the implications of non-stationary policies on our approach represents an intriguing research direction.

\section{Additional Theoretical Background}\label{app.exp_background}

In this section, we provide additional background and notation needed for the proofs of our theoretical results.

\textbf{Additional background on graphs.}
With $\mathbf{E}_-^{V^i}(G)$ or $\mathbf{E}_-^i(G)$ for short, we denote the set of incoming edges of node $V^i$ in $G$.
Furthermore, we use standard graph notation for relationships such as ancestors, children and descendants, e.g., $An^i(G)$ denotes the ancestor set of node $V^i$ in $G$.
By convention, we also assume that the ancestor and descendant sets of a node are inclusive, i.e., $V^i \in An^i(G)$ and $V^i \in De^i(G)$.
We call graph $g$ a \textit{node-subgraph} of $G$ if $\mathbf{V}(g) \subseteq \mathbf{V}(G)$ and $\mathbf{E}(g)$ contains all of the edges in $\mathbf{E}(G)$ connecting pairs of vertices in $\mathbf{V}(g)$.

\textbf{Counterfactual formulas.}
We refer to realizations of a counterfactual variable, e.g., $Y_{\mathbf{w}} = y$ or simply $y_{\mathbf{w}}$, as \textit{atomic (counterfactual) formulas}. Moreover, we call \textit{counterfactual formula} any conjunction of atomic formulas, e.g., $y_{\mathbf{w}} \land y'_{\mathbf{w}'}$.
Let $\alpha$ now be a counterfactual formula defined in an SCM $M$, we denote with $\mathbf{V}(\alpha)$ the set of variables in $\mathbf{V}$ that correspond to an atomic formula in $\alpha$, e.g., $\mathbf{V}(y_{\mathbf{w}} \land y'_{\mathbf{w}'}) = \{Y\}$.

\textbf{Building tools.} Our proofs build on the do-calculus rules \cite{pearl2009causality} [Sec.~3.4], the exclusion and independence restrictions rules of SCMs \cite{pearl2009causality} [p.~232], and two axioms of structural counterfactuals: composition and consistency \cite{pearl2009causality} [Sec.~7.3.1].

\textbf{Additional SCM notation.} For some SCM $M$, let $\mathbf{V}' \subseteq \mathbf{V}$ and $\mathbf{u} \sim P(\mathbf{u})$. We denote with $\mathbf{V}'(\mathbf{u})$ the values of $\mathbf{V}'$ in $M$ given $\mathbf{u}$.

\textbf{Monotonicity \cite{pearl1999probabilities}}.
Finally, we present here for completeness the \textit{monotonicity} property for binary SCMs.
\begin{definition}[Monotonicity]
\label{def.mon}
Given a SCM $M$ consisting of two binary variables $X$ and $Y$, we say that $Y$ is monotonic relative to $X$ in $M$ if for any $x_1, x_2 \in \{0, 1\}$ s.t.~$\mathds{E}[Y | x_1] \leq \mathds{E}[Y | x_2]$ it holds that $P(Y_{x_1} = 1 \land Y_{x_2} = 0) = 0$.
\end{definition}

\section{Remark on Lemma \ref{lem.pns_nm} and Counterfactual Identification}\label{app.ctf_remark}

\begin{remark}
    Let $M$ be an SCM with causal graph $G$. A \textit{counterfactual factor} \cite{correa2021nested} is a distribution $P(\alpha)$, s.t.~$\alpha$ is a counterfactual formula in $M$ which consists only of atomic formulas of the form $z_{pa^Z}$, where $Z \in \mathbf{V}$, $z \in \mbox{dom}\{Z\}$ and $pa^Z \in \mbox{dom}\{Pa^Z(G)\}$.
    Under the exogeneity assumption, counterfactual factor $\alpha$ can be expressed as the product of probability distributions of the form $P(\land_{i \in \{1, ..., k\}} x^i_{pa^i})$, where $x^1, ..., x^k \in \mbox{dom}\{X\}$, $pa^1, ..., pa^k \in \mbox{dom}\{Pa^X(G)\}$ and $X \in \mathbf{V}(\alpha)$. The latter statement is licensed by the independence restrictions rule, and agrees with Theorem $3$ of \cite{correa2021nested}.
    Furthermore, according to Theorem $2$ (resp. Lemma $3$) of \cite{correa2021nested} any counterfactual (resp. conditional counterfactual) probability can be factorized as counterfactual factors. Therefore, it follows that Lemma \ref{lem.pns_nm} can be utilized for deriving the identification formula of any counterfactual or conditional counterfactual probability under the exogeneity and noise monotonicity assumptions.
\end{remark}

\section{Additional Information on Fixed Path-Specific Effects}\label{app.fpse}

In this section, we define  \textit{counterfactual fixed path-specific effects} (cf-FPSE), i.e., the counterfactual counterpart of FPSE from Section \ref{sec.connections}. Additionally, we show that the property of noise monotonicity suffices to render cf-FPSE identifiable.
\begin{definition}[cf-FPSE]
\label{def.cf-fpse}
Given an SCM $M$ with causal graph $G$, a realization $\mathbf{v}$ of its observable variables $\mathbf{V}$, and two edge-subgraphs of $G$, \textit{effect subgraph} $g$ and \textit{reference subgraph} $g^*$ such that $\mathbf{E}(g) \cap \mathbf{E}(g^*) = \emptyset$, the counterfactual fixed $g$-specific effect of intervention $do(X := x)$ on $Y = y$, relative to reference subgraph $g^*$, is defined as
\begin{align*}
    \mathrm{cf}\text{-}\mathrm{FPSE}^{g, g^*}_{x}(y|\mathbf{v})_M = P(y_{\mathbf{v}(X)}|\mathbf{v};M)_{M_q} - P(y|\mathbf{v})_M,
\end{align*}
where $q$ is the edge-subgraph of $G$ with 
$\mathbf{E}(q) = \mathbf{E}(G) \backslash \{\mathbf{E}(g), \mathbf{E}(g^*)\}$, and $M_q = \langle\mathbf{U}, \mathbf{V}, \mathcal{F}_q, P(\mathbf{u})\rangle$ is a modified SCM, which induces $q$ and is formed as follows.
Each function $f^i$ in $\mathcal{F}$ is replaced with a new function $f^i_q$ in $\mathcal{F}_q$, s.t.~for every $pa^i(q) \in \mathrm{dom}\{Pa^i(q)\}$ and $\mathbf{u} \sim P(\mathbf{u})$ it holds $f^i_q(pa^i(q), u^i) = f^i(pa^i(q), pa^i(g)^e, pa^i(g^*)^*, u^i)$. Here, $pa^i(g^*)^*$ (resp. $pa^i(g)^e$) denotes the value of $Pa^i(g^*)_{\mathbf{v}(X)}$ (resp. $Pa^i(g)_x$) in $M$ for noise $\mathbf{u}$.
\end{definition}


The next lemma is similar to Lemma \ref{lemma.fpse_ase} and shows how to express cf-ASE through cf-FPSE.

\begin{lemma}\label{lemma.cf_fpse_ase}
Let $M$ be an MMDP-SCM with causal graph $G$, $\mathbf{N}$ be a non-empty subset of agents in $M$, and $\tau$ be a trajectory of $M$. It holds that
\begin{align*}
    \mathrm{cf}\text{-}\mathrm{ASE}^{\mathbf{N}}_{a_{i,t}}(y|\tau)_M = \mathrm{cf}\text{-}\mathrm{FPSE}^{g, g^*}_{a_{i,t}}(y|\tau)_M,
\end{align*}
where $g$ is the edge-subgraph of $G$ with $\mathbf{E}(g) = \bigcup_{A_{i',t'}: i' \in \mathbf{N}, t' > t} \mathbf{E}^{A_{i',t'}}_+(G)$ and 
$g^*$ is the edge-subgraph of $G$ with
$\mathbf{E}(g^*) = \bigcup_{A_{i',t'}: i' \notin \mathbf{N}, t' > t} \mathbf{E}^{A_{i',t'}}_+(G)$.
\end{lemma}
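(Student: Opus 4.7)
The plan is to show that the ``effect'' terms of $\mathrm{cf\text{-}ASE}^{\mathbf{N}}_{a_{i,t}}(y|\tau)_M$ and $\mathrm{cf\text{-}FPSE}^{g,g^*}_{a_{i,t}}(y|\tau)_M$ agree, since both quantities share the same subtrahend $P(y\mid\tau)_M$. Using the standard abduction--action--prediction semantics of structural counterfactuals, it suffices to prove that for every noise assignment $\mathbf{u}$ in the support of $P(\mathbf{u}\mid\tau)_M$, the value of $Y$ in $M^{do(I\cup\{A_{i,t}:=\tau(A_{i,t})\})}$ equals the value of $Y$ in $M_q^{do(A_{i,t}:=\tau(A_{i,t}))}$.

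First I would inspect which structural equations actually change when passing from $M$ to $M_q$. In an MMDP-SCM, every edge in $g\cup g^*$ originates at an action variable $A_{i',t'}$ with $t'>t$, and such outgoing edges terminate only at the next-step state $S_{t'+1}$. Hence the only equations that differ are those of state variables $S_{t'}$ with $t'>t+1$; for such $S_{t'}$ the unique parent remaining in $q$ is $S_{t'-1}$, and unfolding Definition \ref{def.cf-fpse} gives
\[
f^{S_{t'}}_q(S_{t'-1},u^{S_{t'}}) = f^{S_{t'}}\bigl(S_{t'-1},\{A_{i',t'-1,a_{i,t}}(\mathbf{u})\}_{i'\in\mathbf{N}},\{A_{i',t'-1,\tau(A_{i,t})}(\mathbf{u})\}_{i'\notin\mathbf{N}},u^{S_{t'}}\bigr).
\]
For $\mathbf{u}$ in the support of the posterior, the consistency axiom gives $A_{i',t'-1,\tau(A_{i,t})}(\mathbf{u})=\tau(A_{i',t'-1})$, matching the hard interventions in $I$; and $A_{i',t'-1,a_{i,t}}(\mathbf{u})$ is by construction exactly the value assigned by the natural interventions in $I$.

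I would close the argument by induction on $t'\ge t+1$, showing $S_{t'}^{M_q}(\mathbf{u}) = S_{t'}^{\mathrm{cf\text{-}ASE}}(\mathbf{u})$. The base case $t'=t+1$ holds because no incoming edge of $S_{t+1}$ lies in $g\cup g^*$ (its action parents are at time $t$, failing the condition $t'>t$), so the original equation is used in both worlds; combined with the factual realizations of $S_t$ and $\mathbf{A}_t$ under posterior noise and the intervention $A_{i,t}=\tau(A_{i,t})$, this yields $S_{t+1}=\tau(S_{t+1})$ in either setting. The inductive step is immediate: the displayed simplification of $f^{S_{t'}}_q$ feeds exactly the same arguments into $f^{S_{t'}}$ as does the $do(I)$ world, once the inductive hypothesis equates their $S_{t'-1}$ inputs. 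Since $Y$ is some state variable $S_{t_Y}$, the induction reaches $Y$, and integrating over the posterior then gives equality of the two counterfactual probabilities.

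The main obstacle is the bookkeeping of parent sets: correctly partitioning each node's parents into $Pa^i(q)$, $Pa^i(g)$, $Pa^i(g^*)$ using the specific structure of the MMDP-SCM causal graph, and then invoking consistency on posterior noise to rewrite the $g^*$-contributions as factual values so that the modified equations in $M_q$ precisely mirror the hard- and natural-intervention structure of $I$ in cf-ASE. Beyond this alignment, the rest is a routine structural induction.
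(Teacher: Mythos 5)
Your proposal is correct and follows essentially the same route as the paper's proof: reduce to a per-noise comparison under the posterior $P(\mathbf{u}\mid\tau)$, observe that in an MMDP-SCM the edges of $g\cup g^*$ only feed the state variables $S_{t'}$ with $t'>t+1$, use the consistency axiom to turn the $Pa^i(g^*)_{\tau(A_{i,t})}$ arguments into the factual actions $\tau(A_{i',t'})$ (matching the hard interventions in $I$) and match the $Pa^i(g)_{a_{i,t}}$ arguments to the natural interventions, and close by induction on the time-step up to $t_Y$ followed by integration over the posterior. The bookkeeping you flag as the main obstacle is exactly what the paper's induction step carries out.
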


\begin{proof}

By Definitions \ref{def.cf-ase} and \ref{def.cf-fpse}, it follows that it suffices to show that $P(y_{\tau(A_{i,t})}|\tau;M)_{M^{do(I)}} = P(y_{\tau(A_{i,t})}|\tau;M)_{M_q}$.

~\\
Let $t_Y$ denote the time-step of variable $Y$. First, we show that for any noise $\mathbf{u} \sim P(\mathbf{u} | \tau)$ the value of $S_{t_Y}(\mathbf{u})$ (or equivalently $Y(\mathbf{u})$) is the same in $M_q$ as in $M^{do(I)}$. To do so, we use induction in the number of time-steps $t'$.

~\\
\textbf{Base case:} 
By construction of $M_q$, the value of $S_{t+1}(\mathbf{u})$ is the same in $M_q$ as in $M$. Furthermore, note that the set of interventions $I$ does not include interventions to variable $S_{t+1}$ or to any of its ancestors. As a result, the value of $S_{t+1}(\mathbf{u})$ is the same in $M^{do(I)}$ as in $M$. We conclude then that the value of $S_{t+1}(\mathbf{u})$ is the same in $M_q$ as in $M^{do(I)}$.

~\\
\textbf{Induction hypothesis:} 
We make the hypothesis that for some $t' \in [t+1, t_Y)$ the value of $S_{t'}(\mathbf{u})$ is the same in $M_q$ as in $M^{do(I)}$.

~\\
\textbf{Induction step:} 
Under the induction hypothesis, we want to show that the value of $S_{t'+1}(\mathbf{u})$ is the same in $M_q$ as in $M^{do(I)}$. 
For convenience, we slightly abuse our notation and denote as $s^q_{t'+1}$ the value of $S_{t'+1}(\mathbf{u})$ in $M_q$, and as $s^I_{t'+1}$ the value of $S_{t'+1}(\mathbf{u})$ in $M^{do(I)}$. 
From the hypothesis, it holds that the value $s_{t'}$ of $S_{t'}(\mathbf{u})$ is the same in $M^{do(I)}$ and $M_q$.

First, we consider the value of $S_{t'+1}(\mathbf{u})$ in $M^{do(I)}$,
\begin{align}\label{eq.fpse_I}
   s^I_{t'+1} = f^{S_{t'+1}}(s_{t'}, \{A_{i',t', a_{i,t}}(\mathbf{u})\}_{i' \in \mathbf{N}}, \{\tau(A_{i',t'})\}_{i' \notin \mathbf{N}}, u^{S_{t'+1}}). 
\end{align}
 
Next, we consider the value of $S_{t'+1}(\mathbf{u})$ in $M_q$,
$$
s^q_{t'+1} = f^{S_{t'+1}}_q(pa^{S_{t'+1}}(q), u^{S_{t'+1}}),
$$ 
where $pa^{S_{t'+1}}(q)$ here denotes the value of $Pa^{S_{t'+1}}(q)$ in $M_q$ given $\mathbf{u}$. 
Note that $Pa^{S_{t'+1}}(q) = S_{t'}$, and since $S_{t'}(\mathbf{u}) = s_{t'}$ in $M_q$, we can rewrite the equation of $s^q_{t'+1}$ as
$$
s^q_{t'+1} = f^{S_{t'+1}}_q(s_{t'}, u^{S_{t'+1}}).
$$ 
Based on the definition of function $f_q^{S_{t'+1}}$ we have that
\begin{align}\label{eq.fpse_q1}
    s^q_{t'+1} = f^{S_{t'+1}}(s_{t'}, pa^{S_{t'+1}}(g)^e, pa^{S_{t'+1}}(g^*)^*, u^{S_{t'+1}}),
\end{align}
where $pa^{S_{t'+1}}(g^*)^*$ and $pa^{S_{t'+1}}(g)^e$ denote the values of $Pa^{S_{t'+1}}(g^*)_{\tau(A_{i,t})}$ and $Pa^{S_{t'+1}}(g)_{a_{i,t}}$ in $M$ given $\mathbf{u}$, respectively.
Since $\mathbf{u} \sim P(\mathbf{u}|\tau)$ it holds that $Z(\mathbf{u}) = \tau(Z)$ for every $Z 
\in \mathbf{V}$.
By the consistency axiom then
we have that for every $V^j \in Pa^{S_{t'+1}}(g^*)$ it holds $V^j_{\tau(A_{i,t})}(\mathbf{u}) = V^j(\mathbf{u}) = \tau(V^j)$. 
Moreover, from the definition of subgraph $g^*$ it follows that $Pa^{S_{t'+1}}(g^*) = \{A_{i',t'}\}_{i' \notin \mathbf{N}}$. Eq. \eqref{eq.fpse_q1} then can be rewritten as follows
\begin{align}\label{eq.fpse_q2}
    s^q_{t'+1} =  f^{S_{t'+1}}(s_{t'}, pa^{S_{t'+1}}(g)^e,  \{\tau(A_{i',t'})\}_{i' \notin \mathbf{N}}, u^{S_{t'+1}}).
\end{align}
Finally, from the definition of subgraph $g$ it follows that $Pa^{S_{t'+1}}(g) = \{A_{i',t'}\}_{i' \in \mathbf{N}}$. Eq. \eqref{eq.fpse_q2} then can be rewritten as follows
\begin{align}\label{eq.fpse_q}
    s^q_{t'+1} =  f^{S_{t'+1}}(s_{t'}, \{A_{i',t', a_{i,t}}(\mathbf{u})\}_{i' \in \mathbf{N}},  \{\tau(A_{i',t'})\}_{i' \notin \mathbf{N}}, u^{S_{t'+1}}).
\end{align}
From Equations \eqref{eq.fpse_I} and \eqref{eq.fpse_q}, it follows that $s^q_{t'+1} = s^I_{t'+1}$, and hence the induction step is concluded.

~\\
Based on the induction argument above, for every $\mathbf{u} \sim P(\mathbf{u} | \tau)$ it holds that the value of $S_{t_Y}(\mathbf{u})$ or equivalently the value of $Y(\mathbf{u})$ is the same in $M_q$ as in $M^{do(I)}$, and hence
$$
P(y|\mathbf{u})_{M^{do(I)}} = P(y|\mathbf{u})_{M_q}.
$$
Finally, the distribution of noise variables $\mathbf{U}$ is the same in $M$, $M^{do(I)}$ and $M_q$, and thus we can derive the following equations
\begin{align*}
    \int P(y|\mathbf{u})_{M^{do(I)}} \cdot P(\mathbf{u}|\tau;M)_{M^{do(I)}} \,d\mathbf{u} &= \int P(y|\mathbf{u})_{M_q} \cdot P(\mathbf{u}|\tau;M)_{M_q}\,d\mathbf{u} \Rightarrow\\
    P(y|\tau;M)_{M^{do(I)}} &= P(y|\tau;M)_{M_q}\Rightarrow\\
    P(y_{\tau(A_{i,t})}|\tau;M)_{M^{do(I)}} &= P(y_{\tau(A_{i,t})}|\tau;M)_{M_q},
\end{align*}
where the last step follows from the consistency axiom.
    
\end{proof}


The next result states that that noise monotonicity suffices to render cf-FPSE identifiable.

\begin{lemma}\label{lemma.id_cf_fpse}
Let $M$ be an SCM with causal graph $G$, $g$ and $g^*$ be two edge-subgraphs of $G$, such that $\mathbf{E}(g) \cap \mathbf{E}(g^*) = \emptyset$, and $q$ be the edge-subgraph of $G$ with $\mathbf{E}(q) = \mathbf{E}(G) \backslash \{\mathbf{E}(g), \mathbf{E}(g^*)\}$. Let also $\mathbf{v} \in \mbox{dom}\{\mathbf{V}\}$ be a realization of the observable variables in $M$.
The counterfactual fixed $g$-specific effect of intervention $do(X := x)$ on $Y = y$, relative to reference subgraph $g^*$, is identifiable, if for every variable $Z \in De^X(G) \cap An^Y(G)$ there is a total ordering $\leq_Z$ on $\mbox{dom}\{Z\}$ w.r.t. which $Z$ is noise-monotonic in $M$.
\end{lemma}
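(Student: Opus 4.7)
The plan is to reduce the identification of $\mathrm{cf}\text{-}\mathrm{FPSE}^{g, g^*}_{x}(y|\mathbf{v})_M$ to the identification of a sum of conditional counterfactual probabilities in $M$, and then invoke Lemma \ref{lem.pns_nm} to identify each counterfactual factor in their factorization. Since $P(y|\mathbf{v})_M = \mathds{1}[y = \mathbf{v}(Y)]$ is already identifiable, it suffices to identify $P(y_{\mathbf{v}(X)}\mid\mathbf{v};M)_{M_q}$.

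First, I would ``unfold'' the evaluation of $Y_{\mathbf{v}(X)}$ in $M_q$ under noise $\mathbf{u}$ by processing the variables of $G$ in topological order. By the construction of $M_q$ in Definition \ref{def.cf-fpse}, for each variable $V^i$ the corresponding function $f^i_q$ plugs in (a) the $M_q$-values of $Pa^i(q)$, (b) the values of $Pa^i(g)$ under $do(X := x)$ in $M$, and (c) the values of $Pa^i(g^*)$ under $do(X := \mathbf{v}(X))$ in $M$. An inductive argument then shows that the value taken by $V^i$ in $M_q$ under noise $\mathbf{u}$ coincides either with its observational value $V^i(\mathbf{u})$ whenever $V^i \notin De^X(G)$, or otherwise with an atomic counterfactual $V^i_{\mathbf{w}^i}(\mathbf{u})$ in $M$, where $\mathbf{w}^i$ is a complete assignment to $Pa^i(G)$ dictated by the three-world construction in $M_q$. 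Variables outside $An^Y(G)$ can be marginalized out at the outset.

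Second, collecting these atomic counterfactuals, I would express
\begin{equation*}
P(y_{\mathbf{v}(X)}\mid\mathbf{v};M)_{M_q}
= \frac{1}{P(\mathbf{v})_M}\sum_{\alpha}\,\mathds{1}[\alpha \Rightarrow y]\cdot P(\alpha \land \mathbf{v})_M,
\end{equation*}
where the sum ranges over joint realizations $\alpha$ of the atomic counterfactual variables identified in Step 1. The conditioning event $\mathbf{v}$ can itself be rewritten atomically, using the composition axiom, as $\bigwedge_i V^i_{\mathbf{v}(Pa^i(G))} = \mathbf{v}(V^i)$. Under exogeneity, Theorem 3 of \cite{correa2021nested} then yields a factorization of $P(\alpha \land \mathbf{v})_M$ into counterfactual factors, i.e., probabilities of the form $P(\bigwedge_k z^k_{pa^k})$, one block per observable variable $Z$.

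Third, each factor in this product is identifiable: for $Z \notin De^X(G)$ the factor collapses to a conditional $P(z\mid pa^z)_M$ (as no counterfactual reasoning is needed), while for $Z \in De^X(G) \cap An^Y(G)$ the factor has exactly the form covered by Lemma \ref{lem.pns_nm}, and hence is identifiable in closed form from $P(\mathbf{V})_M$ under the assumed noise monotonicity of $Z$ with respect to $\leq_Z$. Gathering the factors yields $P(y_{\mathbf{v}(X)}\mid\mathbf{v};M)_{M_q}$ as an expression solely in $P(\mathbf{V})_M$ and $G$, proving identifiability.

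The main obstacle I anticipate is making Step 1 rigorous: showing that the inductive unfolding of $M_q$ always produces atomic counterfactuals $z_{pa^z}$ with $pa^z$ a \emph{complete} assignment to $Pa^Z(G)$, so that the factorization of \cite{correa2021nested} and Lemma \ref{lem.pns_nm} apply cleanly. The key bookkeeping is that the three-world construction composes transitively along paths of $G$, and that descendants of $X$ inherit one of two possible ``intervened'' worlds from each parent in $g$ or $g^*$, while their $q$-parents propagate the induction hypothesis; together these give a well-defined counterfactual formula $\alpha$ in $M$ for every realization of $\mathbf{u}$.
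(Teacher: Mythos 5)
Your proposal is correct and follows essentially the same route as the paper's proof: reduce $P(y_{\mathbf{v}(X)}\mid\mathbf{v};M)_{M_q}$ to a sum of products of counterfactual factors $P(\bigwedge_k z^k_{pa^k})$ with complete parent assignments, and identify each factor via Lemma \ref{lem.pns_nm} (noise monotonicity being needed exactly for the variables in $De^X(G)\cap An^Y(G)$, where conflicting parent assignments arise). The only cosmetic difference is that you delegate the unrolling/factorization to Theorem 3 of \cite{correa2021nested} -- a shortcut the paper itself sanctions in Appendix \ref{app.ctf_remark} -- whereas the paper carries it out explicitly via its Algorithm \ref{alg.unroll} and the three-world bookkeeping ($\alpha_1,\alpha_2,\alpha_3$), which is precisely the rigorization of the obstacle you flag at the end.
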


Appendix \ref{proof.pns_nm} contains the proof of Lemma \ref{lem.pns_nm}. In Appendix \ref{app.supporting_lemmas}, we show two additional results that support the proof of Lemma \ref{lemma.id_cf_fpse}. In Appendix \ref{proof.id_cf_fpse}, we prove Lemma \ref{lemma.id_cf_fpse}.

\subsection{Proof of Lemma \ref{lem.pns_nm}}\label{proof.pns_nm}

\begin{proof}
Note that domain $dom\{X\}$ is finite and discrete, and hence w.l.o.g.~we can assume that $dom\{X\} = \{x^1, ..., x^m\}$, with $m \in \mathds{N}^*$. For simplicity, we additionally assume that domain $dom\{X\}$ is sorted w.r.t. the total ordering $\leq_X$, i.e., $x^i <_X x^j$ for $i, j \in \{1, .., m\}$ such that $i < j$.

For every value $pa^i$ of $Pa^X(G)$, we define a partition of the domain of noise variable $U^X$ into $m$ sets $N^i_{x^1}, ..., N^i_{x^m}$, such that $f^X(pa^i, u^X) = x^j$ iff $u^X \in N^i_{x^j}$.

We next state a number of simple properties that are going to be useful for the rest of this proof. Let $x^j \in dom\{X\}$, $pa^i \in dom\{Pa^X(G)\}$ and $u^X_1, u^X_2 \sim P(U^X)$, then the following statements hold
\begin{align}
    & P(U^X \leq \min(u^X_1, u^X_2)) = \min(P(U^X \leq u^X_1), P(U^X \leq u^X_2)), \label{eq.4_1} \\
    & P(U^X < \max(u^X_1, u^X_2)) = \max(P(U^X < u^X_1), P(U^X < u^X_2)), \label{eq.4_2}\\
    & P(U^X \in N^i_{x^j}) = P(U^X \in \big[\min_u(N^i_{x^j}), \max_u(N^i_{x^j})\big]), 
    \label{eq.1}
\end{align}
\begin{align}\label{eq.5}
    P(U^X \in [u^X_1, u^X_2]) &= \begin{cases}
          1- P(U^X < u^X_1) - P(U^X > u^X_2) & \text{if $u^X_1 \leq u^X_2$}\\
          0 & \text{otherwise}
        \end{cases} \nonumber \\
        & = \max(0, 1- P(U^X < u^X_1) - P(U^X > u^X_2)) \nonumber \\
        & = \max(0, P(U^X \leq u^X_2) - P(U^X < u^X_1)).
\end{align}
Eq. \eqref{eq.1} follows from noise monotonicity. 
Moreover, by definition we have
\begin{align}
    & P(U^X \in N^i_{x^j}) = P(X = x^j | pa^i) = P(X_{pa^i} = x^j),\label{eq.2}\\
    & P(U^X \leq \max_u(N^i_{x^j})) = P(U^X \in N^i_{x^1}) + ... + P(U^X \in N^i_{x^j}) = P(X \leq_X x^j | pa^i), 
    \label{eq.3_1}\\
    & P(U^X < \min_u(N^i_{x^j})) = P(U^X \in N^i_{x^1}) + ... + P(U^X \in N^i_{x^{j-1}}) = P(X <_X x^j | pa^i)\label{eq.3_2}.
\end{align}
The second step in Eq. \eqref{eq.2} is licensed by the exogeneity assumption, while the first steps in Eq. \eqref{eq.3_1} and \eqref{eq.3_2} hold by the noise monotonicity assumption.

Next, we derive the equation of the lemma by using the above mentioned properties:
\begin{align*}
    P(\land_{i \in \{1, ..., k\}} x^i_{pa^i}) &= P(U^X \in \cap_{i \in \{1, ..., k\}} N^i_{x^i}) \\
    &= P\big(U^X \in \cap_{i \in \{1, ..., k\}} \big[\min_u(N^i_{x^i}), \max_u(N^i_{x^i})\big] \big) \\
    &= P\big(U^X \in \big[\max_{i \in \{1, ..., k\}}(\min_u(N^i_{x^i})), \min_{i \in \{1, ..., k\}}(\max_u(N^i_{x^i}))\big]\big) \\
    &= \max\bigg(0, P\big(U^X \leq \min_{i \in \{1, ..., k\}}(\max_u(N^i_{x^i}))\big) - P\big(U^X < \max_{i \in \{1, ..., k\}}(\min_u(N^i_{x^i}))\big)\bigg) \\
    &= \max\bigg(0, \min_{i \in \{1, ..., k\}} \big(P(U^X \leq \max_u(N^i_{x^i}))\big) - \max_{i \in \{1, ..., k\}} \big(P(U^X < \min_u(N^i_{x^i})\big)\bigg) \\
    &= \max\bigg(0, \min_{i \in \{1, ..., k\}} \big(P(X \leq_X x^i | pa^i)\big) - \max_{i \in \{1, ..., k\}}\big(P(X <_X x^i | pa^i)\big) \bigg).
\end{align*}
The first step follows from Eq. \eqref{eq.2}. The second step follows from Eq. \eqref{eq.1}. 
The fourth step is licensed by Eq. \eqref{eq.5}, and the fifth step by Eq. \eqref{eq.4_1} and \eqref{eq.4_2}. Finally, the sixth step holds by Eq. \eqref{eq.3_1} and \eqref{eq.3_2}.
    
\end{proof}

\subsection{Supporting Lemmas for the Proof of Lemma \ref{lemma.id_cf_fpse}}\label{app.supporting_lemmas}

\begin{lemma}\label{lemma.comp}
Let $M$ be an SCM with causal graph $G$, $g$ be an edge-subgraph of $G$, and $\beta$ be a counterfactual formula in $M_{g}$.
For any $X \in \mathbf{V}$, $V^i \in De^X(g) \backslash X$ and $\mathbf{W} \subseteq \mathbf{V}$ it holds that 
\begin{align}\label{eq.comp}
P(\beta \land v^i_x | \mathbf{w}; M)_{M_g} = \sum_{pa^i(g) \in dom\{Pa^i(g)_x\}} P(\beta \land v^i_{pa^i(g)} \land pa^i(g)_x | \mathbf{w}; M)_{M_g},
\end{align}
where $dom\{Pa^i(g)_x\}$ is equal to $dom\{Pa^i(g)\}$ if $X \notin Pa^i(g)$, and equal to the subset of $dom\{Pa^i(g)\}$ for which the value of $X$ is always $x$, otherwise.
\end{lemma}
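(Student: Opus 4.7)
The plan is to prove the identity by marginalizing over the possible values of the counterfactual parent vector $Pa^i(g)_x$ in the modified SCM $M_g$, and then invoking the composition and exclusion-restrictions axioms to rewrite the conjunct $v^i_x$ as $v^i_{pa^i(g)}$ inside each summand. Since this amounts to a careful application of composition, most of the work lies in handling the two cases $X \in Pa^i(g)$ versus $X \notin Pa^i(g)$ and tracking the evidence $\mathbf{W} = \mathbf{w}$ correctly.

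First, by the law of total probability applied to the joint counterfactual distribution over $M_g$, I would write
\begin{align*}
P(\beta \land v^i_x \mid \mathbf{w}; M)_{M_g} = \sum_{pa^i(g) \in \mathrm{dom}\{Pa^i(g)\}} P(\beta \land v^i_x \land pa^i(g)_x \mid \mathbf{w}; M)_{M_g}.
\end{align*}
If $X \in Pa^i(g)$, the consistency axiom applied to the intervention $do(X := x)$ forces the $X$-coordinate of $Pa^i(g)_x$ to equal $x$ almost surely, so every term indexed by a $pa^i(g)$ whose $X$-coordinate differs from $x$ vanishes. This collapses the sum onto $\mathrm{dom}\{Pa^i(g)_x\}$ as defined in the statement; when $X \notin Pa^i(g)$, the restriction is vacuous by definition.

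Next, I would observe that in $M_g$ the structural equation $f^i_g$ for $V^i$ takes as explicit arguments exactly $Pa^i(g)$ and $U^i$, since the parents outside $g$ are absorbed into fixed reference values by the construction in Definition~\ref{def.pse}. Hence in the causal graph of $M_g$, the direct parents of $V^i$ are precisely $Pa^i(g)$. By the composition axiom applied in $M_g$, conditioning on the event $Pa^i(g)_x = pa^i(g)$ yields $V^i_x = V^i_{x, pa^i(g)}$, and by the exclusion restrictions rule, either $X$ is itself a coordinate of $pa^i(g)$ (so the constraint $X = x$ is already encoded in the summation domain) or $X$ is not a direct parent of $V^i$ in $M_g$ at all. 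In either case, the superfluous intervention on $X$ can be dropped, giving $V^i_{x, pa^i(g)} = V^i_{pa^i(g)}$. Substituting this identity into each summand produces the right-hand side of Eq.~\eqref{eq.comp}.

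The main obstacle I anticipate is being precise about the interaction between the two sub-cases and the fact that probabilities, interventions, and axioms must be evaluated in the modified model $M_g$ rather than $M$, even though the original evidence $\mathbf{w}$ and reference model $M$ still appear in the notation $P(\,\cdot\, \mid \mathbf{w}; M)_{M_g}$. In particular, when $X \in Pa^i(g)$ the consistency-based collapse of the summation domain must be justified entirely via the semantics of counterfactuals in $M_g$ under $do(X := x)$, and when $X \notin Pa^i(g)$ the exclusion-restrictions step must be legitimized by appealing to the direct-parent structure of $V^i$ in $M_g$ (which differs from that in $M$), rather than in the original causal graph.
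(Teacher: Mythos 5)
Your proposal is correct and follows essentially the same route as the paper's proof: a case split over the realized value of $Pa^i(g)_x$ combined with the composition axiom to replace $v^i_x$ by $v^i_{x,pa^i(g)}$, followed by dropping the intervention on $X$ either because the summation domain already forces $X = x$ (when $X \in Pa^i(g)$) or by the exclusion restrictions rule applied to the parent structure of $V^i$ in $M_g$ (when $X \notin Pa^i(g)$). The paper merely compresses your first two steps (total probability plus the collapse of the summation domain) into a single invocation of composition, so the arguments are the same in substance.
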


\begin{proof}
\begin{align*}
    P(\beta \land v^i_x | \mathbf{w}; M)_{M_g} &= \sum_{pa^i(g) \in dom\{Pa^i(g)_x\}} P(\beta \land v^i_{x, pa^i(g)} \land pa^i(g)_x | \mathbf{w}; M)_{M_g}\\
    &= \sum_{pa^i(g) \in dom\{Pa^i(g)_x\})} P(\beta \land v^i_{pa^i(g)} \land pa^i(g)_x | \mathbf{w}; M)_{M_g}.
\end{align*}
The first step follows from the composition axiom. If $X \in Pa^i(g)$, then the second step follows from the fact that $x \land pa^i(g) = pa^i(g)$, which is implied by the definition of $dom\{Pa^i(g)_x\}$. Otherwise, the second step follows from the exclusion restrictions rule.

\end{proof}

We next introduce Algorithm \ref{alg.unroll}, which iteratively utilizes Lemma \ref{lemma.comp} to fully unroll atomic formula $y_x$. 
Lemma \ref{lemma.supp_algorithm} characterizes the output of the algorithm.

\begin{algorithm}
\caption{Iterative Process of Unrolling $y_x$}\label{alg.unroll}
\textbf{Input}: $g$, $Y$, $y$, $X$, $x$, $\pi_g$\\
\textbf{Output}: counterfactual formula $\alpha$
\begin{algorithmic}[1]
\STATE $n \gets 1$ \COMMENT{Number of iterations}
\STATE $D^1 \gets An^Y(g) \backslash \{X\}$
\STATE $\alpha^1 \gets y_x$
\WHILE{$D^n \neq \emptyset$}
    \STATE $V^n \gets \mathrm{argmax}_{V^i \in D^n}\{\pi_g(V^i)\}$
    \STATE $\alpha' \gets \hat{\alpha}$ s.t. $\alpha^n = \hat{\alpha} \land v^n_x$
    \STATE $\alpha^{n+1} \gets \alpha' \land v^n_{pa^n(g)} \land pa^n(g)_x$
    \STATE $D^{n + 1} \gets D^n \backslash \{V^n\}$
    \STATE $n \gets n + 1$
\ENDWHILE
\STATE $\alpha \gets \alpha^{|An^Y(g)|}$
\STATE \textbf{return} $\alpha$
\end{algorithmic}
\end{algorithm}

\begin{lemma}\label{lemma.supp_algorithm}
Let $M$ be an SCM with causal graph $G$. For two variables $X, Y \in \mathbf{V}$, let also $g$ be an edge-subgraph of $G$, such that it consists only of directed paths from $X$ to some $Z \in \mathbf{V}$ in $G$, and $Y$ is part of at least one of those paths.
For the input $\langle g, Y, y, X, x, \pi_g \rangle$, where $\pi_g$ is a causal ordering of $g$, it holds that Algorithm \ref{alg.unroll} terminates after $|An^Y(g)| - 1$ iterations, and that its output $\alpha$ consists only of atomic formulas of the form $v^i_{pa^i(g)}$ -- one for each variable $V^i \in An^Y(g) \backslash \{X\}$.
Furthermore, the following equation holds
\begin{align}\label{eq.algo}
    P(\beta \land y_x | \mathbf{w}; M)_{M_g} = \sum_{\{v^i | V^i \in An^Y(g) \backslash \{X, Y\}\}} P(\beta \land \alpha | \mathbf{w}; M)_{M_g},
\end{align}
where $\beta$ is a counterfactual formula in $M_g$, and $\mathbf{w}$ is a realization of $\mathbf{W} \subseteq \mathbf{V}$ in $M$.
\end{lemma}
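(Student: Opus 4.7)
My plan is to prove all three claims simultaneously by induction on the iteration index $n$, carrying a structural invariant that controls the shape of $\alpha^n$ and the accumulated summation. At the start of iteration $n$ the invariant I would maintain is: (i) $D^n=An^Y(g)\setminus\{X,V^1,\ldots,V^{n-1}\}$, so $|D^n|=|An^Y(g)|-n$; (ii) $\alpha^n$ is the conjunction of an atom $v^k_{pa^k(g)}$ per processed $V^k$ together with atoms $v^j_x$ for every $V^j\in D^n$ that is a non-$X$ $g$-parent of some processed variable (plus $Y$ itself when $n=1$); and (iii) the chain of Lemma~\ref{lemma.comp} applications performed so far yields $P(\beta\wedge y_x\mid\mathbf{w};M)_{M_g}=\sum P(\beta\wedge\alpha^n\mid\mathbf{w};M)_{M_g}$, with the sum running over the free values $v^j$ of processed variables other than $Y$. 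The base case $n=1$ is immediate: $\alpha^1=y_x$, $V^1=Y$ has $\pi_g$-maximum rank in $D^1$, and the sum is empty.

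The inductive step reduces to one critical sub-claim (the main obstacle): the atom $v^n_x$ must already be a conjunct of $\alpha^n$ for $V^n=\mathrm{argmax}_{V^i\in D^n}\pi_g(V^i)$, so that Lemma~\ref{lemma.comp} can be invoked in line~6. To verify this, I would pick any directed $g$-path from $V^n$ to $Y$ (it exists since $V^n\in An^Y(g)$) and let its first edge be $V^n\to V'$. Then $V'\in An^Y(g)$ with $\pi_g(V')>\pi_g(V^n)$, so maximality forces $V'\notin D^n$; since $g$ consists only of directed paths out of $X$ the node $X$ has no incoming $g$-edges, so $V'\neq X$; therefore $V'$ is processed. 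By invariant~(ii) at the iteration at which $V'$ was processed, the factor $pa^{V'}(g)_x$ was appended and includes the atom $v^n_x$ (because $V^n\in Pa^{V'}(g)\setminus\{X\}$); that atom has not been removed since $V^n$ itself is still unprocessed. Once Lemma~\ref{lemma.comp} is applied, $v^n_x$ is replaced by $v^n_{pa^n(g)}\wedge pa^n(g)_x$ with a sum over $pa^n(g)$. The $X$-coordinate collapses to the fixed value $x$ by the convention in Lemma~\ref{lemma.comp}; the remaining coordinates index variables $V^j\in Pa^n(g)\setminus\{X\}\subseteq An^Y(g)\setminus\{X\}$ with $\pi_g(V^j)<\pi_g(V^n)$, so they remain in $D^{n+1}$. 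This preserves invariants (i)--(iii).

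Iterating until $D^n=\emptyset$ terminates the loop after exactly $|An^Y(g)|-1$ steps, and the final $\alpha$ is a conjunction of atoms $v^k_{pa^k(g)}$, one per $V^k\in An^Y(g)\setminus\{X\}$. Equation~\eqref{eq.algo} then follows directly from invariant~(iii) with the summation indexed by $\{v^j:V^j\in An^Y(g)\setminus\{X,Y\}\}$. Besides the sub-claim above, the remaining technical subtlety I foresee is arguing cleanly that a coordinate of $pa^n(g)$ that corresponds to a variable already indexed in the accumulated sum does not introduce a second independent summation but collapses consistently: this follows from the conjunction structure of $\alpha^n$, since any two atoms referring to the same variable $V^j$ must carry the same value $v^j$, so the apparent product of coordinate sums consolidates into a single joint sum indexed by $An^Y(g)\setminus\{X,Y\}$.
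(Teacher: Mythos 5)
Your proposal is correct and follows essentially the same route as the paper's proof: induction over the algorithm's iterations, with the key sub-claim that the causal ordering $\pi_g$ forces a $g$-child of the selected variable $V^n$ (which lies in $An^Y(g)$ and is not $X$) to have been processed earlier, thereby having introduced the atom $v^n_x$ via its appended $pa(g)_x$ term, so that Lemma~\ref{lemma.comp} applies at every step. Your single combined invariant merges the paper's two separate inductions and is in fact slightly more careful about consolidating duplicate summation indices, but the substance is identical.
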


\begin{proof}
We first show that at every iteration $n$ of the algorithm, the atomic formula $v^n_x$ is indeed part of $\alpha^n$, and subsequently that the algorithm does terminate without errors after $|An^Y(g)| - 1$ iterations. To do so, we use induction in the number of iterations $n$.

\textbf{Base case}:  For $n = 1$, we have that $V^1 = Y$ and $\alpha^1 = y_x$.
Therefore, the statement holds for the base case $n=1$.

\textbf{Induction hypothesis}: For some $k \in (1, |An^Y(g)|]$, we make the hypothesis that the statement holds for every $n < k$, i.e., we assume that the atomic formula $v^n_x$ is part of $\alpha^n$ for every $n \in [1, k)$.

\textbf{Induction step}: Under this hypothesis, we want to prove that $v^k_x$ is part of $\alpha^k$. 
Because $\pi_g$ in Step $5$ is a causal ordering, it holds that all variables in $Ch^k(g) \cap An^Y(g)$ must be selected by the algorithm before $V^k$, i.e., prior to iteration $k$.
Let, for example, $m$ be the first iteration at which a child of $V^k$ is selected by the algorithm. Because of the induction hypothesis, we know that $v^m_x$ is part of $\alpha^m$. It follows then that $v^k_x$ becomes part of $\alpha^{m+1}$ in Step $7$ of iteration $m$. Since $v^k_x$ cannot be replaced in the counterfactual formula before $V^k$ is selected, i.e., before iteration $k$, we can conclude then that $v^k_x$ is indeed part of $\alpha^k$.

We next show that at every iteration $n$ of the algorithm, the counterfactual formula $\alpha^n$ consists of (a) an atomic formula $v^i_{pa^i(g)}$ for each variable $V^i \in D^1 \backslash D^n$, and (b) an atomic formula $v^i_x$ for some of the variables $V^i \in D^n$. To do so, we use induction in the number of iterations $n$.

(\textbf{Base case}) For $n=1$, we have that $\alpha^1 = y_x$. Therefore, the statement holds for the base case $n=1$.

(\textbf{Induction hypothesis}) For some $k \in [1, |An^Y(g)|)$, we make the hypothesis that the statement holds for $n=k$.

(\textbf{Induction step}) Under this hypothesis, we want to show that the statement also holds for $n = k + 1$. From Step $7$ of the algorithm, we have that $\alpha^{k+1} = \alpha' \land v^k_{pa^k(g)} \land pa^k(g)_x$, where $\alpha'$ is such that $\alpha^k = \alpha' \land v^k_x$. From the induction hypothesis and equation $\alpha^k = \alpha' \land v^k_x$, it follows that the counterfactual formula $\alpha' \land v^k_{pa^k(g)}$ consists of (a) an atomic formula $v^i_{pa^i(g)}$ for each variable $V^i \in \{V^k\} \cup D^1 \backslash D^k$, or equivalently $V^i \in D^1 \backslash D^{k+1}$, and (b) an atomic formula $v^i_x$ for some of the variables $V^i \in D^k \backslash \{V^k\}$, or equivalently $V^i \in D^{k+1}$. 
Furthermore,  because $\pi_g$ in Step $5$ is a causal ordering, it has to hold that none of the variables in $Pa^k(g)$ can be selected by the algorithm before $V^k$, i.e., prior to iteration $k$. If $X \notin Pa^k(g)$, then this implies that $Pa^k(g) \subseteq D^{k+1}$, and hence the statement holds true for $n = k+1$. If now $X \in Pa^k(g)$, then it follows that $Pa^k(g) \backslash \{X\} \subseteq D^{k+1}$, and that $x_x$ is part of $\alpha^{k+1}$. Note, however, that $x_x$ does not influence the truth value of $\alpha^{k+1}$, since its own truth value is always $1$. Hence, $x_x$ can be removed from $\alpha^{k+1}$ without affecting its value. We conclude then that the statement holds true for $n = k+1$ also when $X \in Pa^k(g)$.

Note that $\alpha = \alpha^{|An^Y(g)|}$ (Step $11$), and that $D^{|An^Y(g)|} = \emptyset$. It follows from the last induction argument then that outcome $\alpha$ indeed consists of an atomic formula $v^i_{pa^i(g)}$ for each variable $V^i \in An^Y(g) \backslash \{X\}$, and nothing else.

Finally, Eq. \eqref{eq.algo} simply follows from repeatedly applying Lemma \ref{lemma.comp}.
    
\end{proof}

\subsection{Proof of Lemma \ref{lemma.id_cf_fpse}}\label{proof.id_cf_fpse}

\begin{proof}
Without loss of generality we assume that every edge in $\mathbf{E}(g)$ and $\mathbf{E}(g^*)$ is part of some directed path from $X$ to $Y$ in $G$.
Furthermore, we will use $x^*$ to denote the factual value $\mathbf{v}(X)$.
Thus, according to Definition \ref{def.cf-fpse} we can write the counterfactual fixed $g$-specific effect as
\begin{align}\label{eq.fpse_probs}
    \mathrm{cf}\text{-}\mathrm{FPSE}^{g, g^*}_{x}(y|\mathbf{v})_M &= P(y_{x^*}|\mathbf{v}; M)_{M_q} - P(y|\mathbf{v})_M.
\end{align}
To identify the counterfactual fixed $g$-specific effect, we need to identify $P(y_{x^*}|\mathbf{v}; M)_{M_q}$ and $P(y|\mathbf{v})_M$. Note that the term $P(y|\mathbf{v})_M$ can be trivially evaluated by comparing $y$ with $\mathbf{v}(Y)$; it is $1$ if they are equal, and $0$ otherwise.

We now focus on identifying $P(y_{x^*}|\mathbf{v}; M)_{M_q}$. Our first step is to express $P(y_{x^*}|\mathbf{v}; M)_{M_q}$ in terms of probabilities defined in $M$ instead of $M_q$. On our second step, we will show that the expression we derived in the first step is identifiable from the observational distribution of $M$.
 
\textbf{Step }$\mathbf{1}$\textbf{:} 
We denote with $q^d$ and $G^d$ the node-subgraphs of $q$ and $G$, which contain only the nodes that are parts of directed paths from $X$ to $Y$ in $q$ and $G$, respectively. Formally, $\mathbf{V}(G^d) = De^X(G) \cap An^Y(G)$ and $\mathbf{V}(q^d) = De^X(q) \cap An^Y(q)$. 

We next define the modified SCM $M_{G^d} = \langle \mathbf{U}, \mathbf{V}(G^d), \mathcal{F}_{G^d}, P(\mathbf{u})\rangle$ which induces $G^d$ as follows. For each variable $V^i \in \mathbf{V}(G^d)$, function $f^i$ in $\mathcal{F}$ is replaced with a new function $f^i_{G^d}$ in $\mathcal{F}_{G^d}$, such that for every $pa^i(G^d) \in dom\{Pa^i(G^d)\}$ and $u^i \sim P(u^i)$ it holds that $f^i_{G^d}(pa^i(G^d), u^i) = f^i(pa^i(G^d), \mathbf{v}(Pa^i(\overline{G^d})), u^i)$.
Similarly, we define the modified SCM $M_{q^d} = \langle \mathbf{U}, \mathbf{V}(G^d), \mathcal{F}_{q^d}, P(\mathbf{u})\rangle$ which induces $q^d$ as follows. For each $V^i \in \mathbf{V}(q^d)$, function $f_q^i$ in $\mathcal{F}_q$ is replaced with a new function $f^i_{q^d}$ in $\mathcal{F}_{q^d}$, such that for every $pa^i(q^d) \in dom\{Pa^i(q^d)\}$ and $u^i \sim P(u^i)$ it holds that $f^i_{q^d}(pa^i(q^d), u^i) = f_q^i(pa^i(q^d), \mathbf{v}(Pa^i(\overline{G^d})), u^i)$. Note that $Pa^i(\overline{G^d}) \cap Pa^i(g) = \emptyset$ and $Pa^i(\overline{G^d}) \cap Pa^i(g^*) = \emptyset$, which follows from the fact that we assume each edge in $\mathbf{E}(g)$ and $\mathbf{E}(g^*)$ to be part of some directed path from $X$ to $Y$ in $G$.
Furthermore, by the definitions of $M_q$ and $M_{q^d}$ it also holds that $Pa^i(q^d) \cup Pa^i(\overline{G^d}) \cup Pa^i(g) \cap Pa^i(g^*) = Pa^i(G)$.

Let set $\mathbf{S} = \cup_{V^i \in \mathbf{V}(q^d) \backslash \{X\}} Pa^i(\overline{G^d})$.
We can express $P(y_{x^*}|\mathbf{v}; M)_{M_q}$ as follows
\begin{align}\label{eq.q^d}
    P(y_{x^*}|\mathbf{v}; M)_{M_q} &=  P(y_{x^*, \mathbf{v}(\mathbf{S})} |\mathbf{v}; M)_{M_q} \nonumber \\
    &= P(y_{x^*} |\mathbf{v}; M)_{M_{q^d}},
\end{align}
where the first step is licensed by exogeneity. Therefore, from Eq. \eqref{eq.q^d} we have that if $P(y_{x^*} | \mathbf{v}; M)_{M_{q^d}}$ is identifiable in $M$ then so is $P(y_{x^*}|\mathbf{v}; M)_{M_q}$.

\textbf{Unroll $y_{x^*}$ in $M_{q^d}$.}
Let $\alpha$ be the output of Algorithm \ref{alg.unroll} for input $\langle q^d, Y, y, X, x^*, \pi_{q^d} \rangle$, where $\pi_{q^d}$ is a causal ordering of $q^d$.
According to Lemma \ref{lemma.supp_algorithm}, it holds that
\begin{align}\label{eq.fpse_unroll}
    P(y_{x^*} | \mathbf{v}; M)_{M_{q^d}} = \sum_{\{v^{i,d} | V^i \in \mathbf{V}(q^d) \backslash \{X, Y\}\}} P(\alpha | \mathbf{v}; M)_{M_{q^d}},
\end{align}
where counterfactual formula $a$ consists only of atomic formulas of the form $v^{i,d}_{pa^i(q^d)^d}$ -- one for each variable $V^i \in \mathbf{V}(q^d) \backslash \{X\}$. Here, we use $v^{i,d}$ to denote the value of $V^i_{x^*}$ in $M_{q^d}$.

\textbf{Express $\alpha$ in $M_{G^d}$.} 
Note that the counterfactual probabilities on the r.h.s. of Eq. \eqref{eq.fpse_unroll} are still defined in $M_{q^d}$ instead of $M$. 
To redefine these probabilities in $M$, we will need to modify $\alpha$ such that all implicit references to functions $f^i_{q^d}$ in $M_{q^d}$ are removed, while its truth value is preserved.
Towards that goal, we will first express $\alpha$ in the modified model $M_{G^d}$.
In particular, we begin by replacing each atomic formula $v^{i,d}_{pa^i(q^d)^d}$ in $\alpha$ with the counterfactual formula $v^{i,d}_{pa^i(q^d)^d, pa^i(g)^e, pa^i(g^*)^*} \land pa^i(g)^e_x \land pa^i(g^*)^*_{x^*}$, where $v^{i,e}$ here denotes the value of $V^i_x$ in $M_{G^d}$, and $v^{i,*}$ denotes the value of $V^i_{x^*}$ in $M_{G^d}$. Note also that under this new term, $v^{i,d}$ now denotes the value of $V^i_{x^*, pa^i(g)^e, pa^i(g^*)^*}$ in $M_{G^d}$.
    
Next, we group the terms of the resulting counterfactual formula as follows:
\begin{itemize}
    \item We denote with $\alpha_1$ the conjunction of all atomic formulas $v^{i,d}_{pa^i(q^d)^d, pa^i(g)^e, pa^i(g^*)^*}$. It holds that $\mathbf{V}(\alpha_1) = \mathbf{V}(q^d) \backslash \{X\}$.
    \item We denote with $\alpha_2$ the conjunction of all atomic formulas of the form $v^{i,e}_x$. It holds that $\mathbf{V}(\alpha_2) = \cup_{V^i \in \mathbf{V}(q^d) \backslash \{X\}} Pa^i(g) \backslash \{X\}$.\footnote{We consider $X \notin \mathbf{V}(\alpha_2)$, because the atomic formula $x_x$ has always truth value equal to $1$, and hence it can be trivially removed from any counterfactual formula.}
    \item We denote with $\alpha_3$ the conjunction of all atomic formulas of the form $v^{i,*}_{x^*}$. It holds that $\mathbf{V}(\alpha_3) = \cup_{V^i \in \mathbf{V}(q^d) \backslash \{X\}} Pa^i(g^*) \backslash \{X\}$.\footnote{We consider $X \notin \mathbf{V}(\alpha_3)$, because the atomic formula $x^*_{x^*}$  has always truth value equal to $1$, and hence it can be trivially removed from any counterfactual formula.}
\end{itemize}
By replacing $\alpha$ with $\alpha_1 \land \alpha_2 \land \alpha_3$
in Eq. \eqref{eq.fpse_unroll}, we then have that
\begin{align}\label{eq.fpse_M_G^d}
    P(y_{x^*} | \mathbf{v}; M)_{M_{q^d}} &= \sum_{\substack{
        \{v^{i,d} | V^i \in \mathbf{V}(\alpha_1) \backslash \{Y\}\},\\
        \{v^{i,e} | V^i \in \mathbf{V}(\alpha_2)\},\\
        \{v^{i,*} | V^i \in \mathbf{V}(\alpha_3)\}
        }
    } 
    P(\alpha_1 \land \alpha_2 \land \alpha_3 | \mathbf{v}; M)_{M_{G^d}}.
\end{align}

\textbf{Unroll $\alpha_2$ in $M_{G^d}$.} Let $\alpha^i_2$ be the output of Algorithm \ref{alg.unroll} for input $\langle G^d, V^i, v^{i,e}, X, x, \pi_{G^d} \rangle$, where $\pi_{G^d}$ is a causal ordering of $G^d$ and $V^i \in \mathbf{V}(\alpha_2)$. We denote with $\alpha'_2$ the conjunction of all such counterfactual formulas.
According to Lemma \ref{lemma.supp_algorithm}, we can then rewrite Eq. \eqref{eq.fpse_M_G^d} as follows
\begin{align}\label{eq.fpse_unroll_a2}
    P(y_{x^*} | \mathbf{v}; M)_{M_{q^d}} &= \sum_{\substack{
        \{v^{i,d} | V^i \in \mathbf{V}(\alpha_1)\backslash \{Y\}\},\\
        \{v^{i,e} | V^i \in \mathbf{V}(\alpha'_2)\},\\
        \{v^{i,*} | V^i \in \mathbf{V}(\alpha_3)\}
        }
    } 
    P(\alpha_1 \land \alpha'_2 \land \alpha_3 | \mathbf{v}; M)_{M_{G^d}},
\end{align}
where counterfactual formula $a'_2$ consists only of atomic formulas of the form $v^{i,d}_{pa^i(G^d)^d}$ -- one for each variable $V^i \in \mathbf{V}(\alpha'_2)$, where $\mathbf{V}(\alpha'_2) = \cup_{V^i \in \mathbf{V}(\alpha_2)} An^i(G^d) \backslash \{X\}$.

\textbf{Express $\alpha_1 \land \alpha'_2 \land \alpha_3$ in $M$.} To remove from $\alpha_1$, $\alpha'_2$ and $\alpha_3$ all implicit references to functions $f^i_{G^d}$ in $M_{G^d}$, while preserving their truth values
\begin{itemize}
    \item We replace each atomic formula $v^{i,d}_{pa^i(q^d)^d, pa^i(g)^e, pa^i(g^*)^*}$ in $\alpha_1$ with the new atomic formula $v^{i,d}_{pa^i(q^d)^d, pa^i(g)^e, pa^i(g^*)^*, \mathbf{v}(Pa^i(\overline{G^d}))}$. We denote with $\beta_1$ the resulting counterfactual formula. Note that it holds $\mathbf{V}(\beta_1) = \mathbf{V}(\alpha_1)$.
    \item We replace each atomic formula $v^{i,e}_{pa^i(G^d)^d}$ in $\alpha'_2$ with the new atomic formula $v^{i,e}_{pa^i(G^d)^d, \mathbf{v}(Pa^i(\overline{G^d}))}$. We denote with $\beta_2$ the resulting counterfactual formula. Note that it holds $\mathbf{V}(\beta_2) = \mathbf{V}(\alpha'_2)$.
    \item We replace each atomic formula $v^{i,*}_{x^*}$ in $\alpha_3$ with the new atomic formula $v^{i,*}_{x^*, \mathbf{v}(Pa^i(\overline{G^d}))}$. We denote with $\beta_3$ the resulting counterfactual formula. Note that it holds $\mathbf{V}(\beta_3) = \mathbf{V}(\alpha_3)$.
\end{itemize}

Note that counterfactual formulas $\beta_1$, $\beta_2$ and $\beta_3$ are all defined under the original model $M$. Therefore, by replacing $\alpha_1 \land \alpha'_2 \land \alpha_3$ with $\beta_1 \land \beta_2 \land \beta_3$ in Eq. \eqref{eq.fpse_unroll_a2}, we have that
\begin{align}\label{eq.fpse_M}
    P(y_{x^*} | \mathbf{v}; M)_{M_{q^d}} &= \sum_{\substack{
        \{v^{i,d} | V^i \in \mathbf{V}(\beta_1)\backslash \{Y\}\},\\
        \{v^{i,e} | V^i \in \mathbf{V}(\beta_2)\},\\
        \{v^{i,*} | V^i \in \mathbf{V}(\beta_3)\}
        }
    } 
    P(\beta_1 \land \beta_2 \land \beta_3 | \mathbf{v})_M.
\end{align}

This concludes Step $1$.

\textbf{Step }$\mathbf{2}$\textbf{:}
By Bayes rule, we can rewrite Eq. \eqref{eq.fpse_M} as
\begin{align}\label{eq.fpse_Bayes}
    P(y_{x^*} | \mathbf{v}; M)_{M_{q^d}} &= \frac{1}{P(\mathbf{v})_M} \cdot
    \sum_{\substack{
        \{v^{i,d} | V^i \in \mathbf{V}(\beta_1)\backslash \{Y\}\},\\
        \{v^{i,e} | V^i \in \mathbf{V}(\beta_2)\},\\
        \{v^{i,*} | V^i \in \mathbf{V}(\beta_3)\}
        }
    } 
    P(\beta_1 \land \beta_2 \land \beta_3 \land \mathbf{v})_M.
\end{align}

By consistency axiom, it holds $X = \mathbf{v}(X) \Rightarrow \mathbf{V}_{\mathbf{v}(X)}  = \mathbf{V}$, and since $\mathbf{v}(X) = x^*$, we have that $\mathbf{V}_{x^*} = \mathbf{V}$. Therefore, we can rewrite Eq. \eqref{eq.fpse_Bayes} as follows
\begin{align}\label{eq.fpse_cons}
    P(y_{x^*} | \mathbf{v}; M)_{M_{q^d}} &= \frac{1}{P(\mathbf{v})_M} \cdot
    \sum_{\substack{
        \{v^{i,d} | V^i \in \mathbf{V}(\beta_1)\backslash \{Y\}\},\\
        \{v^{i,e} | V^i \in \mathbf{V}(\beta_2)\},\\
        \{v^{i,*} | V^i \in \mathbf{V}(\beta_3)\}
        }
    } 
    P(\beta_1 \land \beta_2 \land \beta_3 \land \mathbf{v}_{x^*})_M \nonumber \\
    &= \frac{1}{P(\mathbf{v})_M} \cdot
    \sum_{\substack{
        \{v^{i,d} | V^i \in \mathbf{V}(\beta_1)\backslash \{Y\}\},\\
        \{v^{i,e} | V^i \in \mathbf{V}(\beta_2)\}
        }
    } 
    P(\beta_1 \land \beta_2 \land \mathbf{v}_{x^*})_M.
\end{align}
The second step follows from the fact that if $v^{i,*} \neq \mathbf{v}(V^i)$, for some variable $V^i \in \mathbf{V}(\beta_3)$, then $v^{i,*}_{x^*} \land \mathbf{v}(V^i)_{x^*}$ always evaluates to $0$.

By composition axiom, for some variable $V^i \in \mathbf{V}$ it holds that $Pa^i(G)_{x^*} = \mathbf{v}(Pa^i(G)) \Rightarrow V^i_{\mathbf{v}(Pa^i(G)), x^*} = V^i_{x^*}$. Therefore, we can rewrite Eq. \eqref{eq.fpse_cons} as follows
\begin{align}\label{eq.fpse_comp}
    P(y_{x^*} | \mathbf{v}; M)_{M_{q^d}} &= \frac{1}{P(\mathbf{v})_M} \cdot
    \sum_{\substack{
        \{v^{i,d} | V^i \in \mathbf{V}(\beta_1)\backslash \{Y\}\},\\
        \{v^{i,e} | V^i \in \mathbf{V}(\beta_2)\}
        }
    } 
    P(\beta_1 \land \beta_2 \land \gamma)_M,
\end{align}
where $\gamma$ is the conjunction of all atomic formulas $\mathbf{v}(V^i)_{\mathbf{v}(Pa^i(G)), x^*}$.\footnote{The order in which the composition axiom is applied here can follow any causal ordering on $G$.}

By exclusions restrictions rule, if $X \notin Pa^i(G)$ it holds that $V^i_{\mathbf{v}(Pa^i(G)), x^*} = V^i_{\mathbf{v}(Pa^i(G))}$. Furthermore, if $X \in Pa^i(G)$ it holds $V^i_{\mathbf{v}(Pa^i(G)), x^*} = V^i_{\mathbf{v}(Pa^i(G))}$, since $\mathbf{v}(X) = x^*$. Therefore, we can rewrite Eq. \eqref{eq.fpse_comp} as follows
\begin{align}\label{eq.fpse_excl}
    P(y_{x^*} | \mathbf{v}; M)_{M_{q^d}} &= \frac{1}{P(\mathbf{v})_M} \cdot
    \sum_{\substack{
        \{v^{i,d} | V^i \in \mathbf{V}(\beta_1)\backslash \{Y\}\},\\
        \{v^{i,e} | V^i \in \mathbf{V}(\beta_2)\}
        }
    } 
    P(\beta_1 \land \beta_2 \land \gamma')_M,
\end{align}
where $\gamma'$ is the conjunction of all atomic formulas $\mathbf{v}(V^i)_{\mathbf{v}(Pa^i(G))}$.

Finally, the following equation holds by the independence restrictions rule and the exogeneity assumption.
\begin{align}\label{eq.fpse_ind}
    P(y_{x^*} | \mathbf{v}; M)_{M_{q^d}} &= \frac{1}{P(\mathbf{v})_M} \cdot
    \sum_{\substack{
        \{v^{i,d} | V^i \in \mathbf{V}(\beta_1)\backslash \{Y\}\},\\
        \{v^{i,e} | V^i \in \mathbf{V}(\beta_2)\}
        }
    } \nonumber \\
    &\quad\quad\quad\quad\quad~~
    \prod_{\{i | V^i \in \mathbf{V}(\beta_1) \cup \mathbf{V}(\beta_2)\}} P(v^{i,d}_{pa^i(q^d)^d, pa^i(g)^e, pa^i(g^*)^*, \mathbf{v}(Pa^i(\overline{G^d}))} \land
    \nonumber \\
    &\quad\quad\quad\quad\quad\quad\quad\quad\quad\quad\quad\quad\quad\quad\quad~~
    v^{i,e}_{pa^i(G^d)^d, \mathbf{v}(Pa^i(\overline{G^d}))} \land
    \mathbf{v}(V^i)_{\mathbf{v}(Pa^i(G))})_M \cdot \nonumber \\
    &\quad\quad\quad\quad\quad~~
    \prod_{\{i | V^i \in \mathbf{V}(\beta_1) \backslash \mathbf{V}(\beta_2)\}} P(v^{i,d}_{pa^i(q^d)^d, pa^i(g)^e, pa^i(g^*)^*, \mathbf{v}(Pa^i(\overline{G^d}))} \land
    \nonumber \\
    &\quad\quad\quad\quad\quad\quad\quad\quad\quad\quad\quad\quad\quad\quad\quad~~~
    \mathbf{v}(V^i)_{\mathbf{v}(Pa^i(G))})_M \cdot \nonumber \\
    &\quad\quad\quad\quad\quad~~
    \prod_{\{i | V^i \in \mathbf{V}(\beta_2) \backslash \mathbf{V}(\beta_1)\}} P(v^{i,e}_{pa^i(G^d)^d, \mathbf{v}(Pa^i(\overline{G^d}))} \land \mathbf{v}(V^i)_{\mathbf{v}(Pa^i(G))})_M \cdot \nonumber \\
    &\quad\quad\quad\quad~~
    \prod_{\{i | V^i \in \mathbf{V} \backslash \{\mathbf{V}(\beta_1) \cup \mathbf{V}(\beta_2)\}\}} P(\mathbf{v}(V^i) | \mathbf{v}(Pa^i(G)))_M.
\end{align}

$P(\mathbf{v})_M$ can be computed from the observational distribution of $M$. The same also holds for the conditional probabilities $P(\mathbf{v}(V^i) | \mathbf{v}(Pa^i(G)))_M$ in the last product of the equation.
Regarding the counterfactual probabilities shown on the r.h.s.~of Eq. \eqref{eq.fpse_ind}, according to Lemma \ref{lem.pns_nm}, they are also identifiable under the noise monotonicity assumption. We conclude then that $P(y_{x^*} | \mathbf{v}; M)_{M_{q^d}}$ is indeed identifiable from the observational distribution of $M$.

\end{proof}

\section{Proofs of Theorems \ref{thrm.ase_nonid} and \ref{thrm.id_ase_cf}}\label{proof.ase_id}

First, we prove Theorem \ref{thrm.ase_nonid}.

\begin{proof}

Let $G$ be the causal graph of $M$. First, we are going to prove the result for $Y = S_{t+2}$, i.e., $t_Y = t+2$.
According to Definition \ref{def.ase}, we have that 
\begin{align*}
    \mathrm{ASE}^{\mathbf{N}}_{a_{i,t}, a^*_{i,t}}(y)_M = P(y_{a^*_{i,t}})_{M^{do(I)}} - P(y_{a^*_{i,t}})_M,
\end{align*}
where $I = \{A_{i',t+1} := A_{i',t+1, a^*_{i,t}}\}_{i' \notin \mathbf{N}} \cup \{A_{i',t+1} := A_{i',t+1, a_{i,t}}\}_{i' \in \mathbf{N}}$.

It trivially holds for $Y = S_{t+2}$ that $P(y_{a^*_{i,t}})_{M^{do(I)}} = P(y_{a^*_{i,t}})_{M^{do(I')}}$, where $I' = \{A_{i',t+1} := A_{i',t+1, a_{i,t}}\}_{i' \in \mathbf{N}}$, and hence it follows
\begin{align}\label{eq.proof_ase}
    \mathrm{ASE}^{\mathbf{N}}_{a_{i,t}, a^*_{i,t}}(y)_M &= P(y_{a^*_{i,t}})_{M^{do(I')}} - P(y_{a^*_{i,t}})_M.
\end{align}

Note that the modified SCM $M^{do(I')} = \langle\mathbf{U}, \mathbf{V}, \mathcal{F}_{I'}, P(\mathbf{u})\rangle$ is identical to $M$ apart from the functions corresponding to variables in $\{A_{i',t+1}\}_{i' \in \mathbf{N}}$.
In particular, the function $f_{I'}^{A_{j,t+1}}$ which corresponds to a variable $A_{j, t+1} \in \{A_{i',t+1}\}_{i' \in \mathbf{N}}$ in $M^{do(I')}$ can be defined as
\begin{align}\label{eq.proof_doI}
f_{I'}^{A_{j,t+1}}(Pa^{A_{j,t+1}}(G), U^{A_{j,t+1}}) &= A_{j,t+1, a_{i,t}} \nonumber\\
&= f^{A_{j,t+1}}(Pa^{A_{j,t+1}}(G)_{a_{i,t}}, U^{A_{j,t+1}})\nonumber\\
&= f^{A_{j,t+1}}(S_{t+1, a_{i,t}}, U^{A_{j,t+1}}).
\end{align}

The second step follows from the exclusion restrictions rule. Consider now the path-specific effect
\begin{align*}
    \mathrm{PSE}^g_{a^*_{i,t}, a_{i,t}}(y)_M &= P(y_{a^*_{i,t}})_{M_g} - P(y_{a_{i,t}})_{M_g}\\
    &= P(y_{a^*_{i,t}})_{M_g} - P(y_{a_{i,t}})_M,
\end{align*}
where $g$ is the edge-subgraph of $G$ that does not include the edges from $S_{t+1}$ to nodes in $\{A_{i',t+1}\}_{i' \in \mathbf{N}}$. The second step holds because the quantity $P(y_{a_{i,t}})$ in the modified model $M_g$ corresponds to $P(y_{a_{i,t}})$ in the original model $M$. 

Based on Definition \ref{def.pse}, we have that the modified SCM $M_g = \langle\mathbf{U}, \mathbf{V}, \mathcal{F}_g, P(\mathbf{u})\rangle$ is identical to $M$ apart from the functions corresponding to variables in $\{A_{i',t+1}\}_{i' \in \mathbf{N}}$. In particular, the function $f_g^{A_{j,t+1}}$  which corresponds to a variable $A_{j, t+1} \in \{A_{i',t+1}\}_{i' \in \mathbf{N}}$ in $M_g$ can be defined as
\begin{align}\label{eq.proof_pse}
    f_g^{A_{j,t+1}}(Pa^{A_{j,t+1}}(g), U^{A_{j,t+1}}) &= f^{A_{j,t+1}}(Pa^{A_{j,t+1}}(g), Pa^{A_{j,t+1}}(\overline{g})_{a_{i,t}}, U^{A_{j,t+1}}) \nonumber\\
    &= f^{A_{j,t+1}}(S_{t+1, a_{i,t}}, U^{A_{j,t+1}}).
\end{align}

From Eq. \eqref{eq.proof_doI} and \eqref{eq.proof_pse}, we can infer that the SCMs $M^{do(I')}$ and $M_g$ are equivalent. Therefore, we can rewrite Eq. \eqref{eq.proof_ase} as follows
\begin{align*}
    \mathrm{ASE}^{\mathbf{N}}_{a_{i,t}, a^*_{i,t}}(y)_M &= P(y_{a^*_{i,t}})_{M_g} - P(y_{a^*_{i,t}})_M.
\end{align*}

Note that quantity $P(y_{a^*_{i,t}})_M$ is trivially identifiable (the same holds for $P(y_{a_{i,t}})_M$). Thus, we can restrict our attention to the identifiability of $P(y_{a^*_{i,t}})_{M_g}$, which we can determine using the \textit{recanting witness criterion} \cite{avin2005identifiability}.
 
The subgraph $g$ contains the directed path $A_{i,t} \rightarrow S_{t+1} \rightarrow S_{t+2}$. As such, there is a directed path from $A_{i,t}$ to $S_{t+1}$ in $g$, as well as a directed path from $S_{t+1}$ to $Y$ in $g$.
Furthermore, the directed path $A_{i,t} \rightarrow S_{t+1} \rightarrow A_{j,t+1} \rightarrow S_{t+2}$, where $j \in \mathbf{N}$, does not belong to $g$.
We can conclude then $A_{i,t}$, $Y$ and $g$ satisfy the recanting witness criterion, and hence $P(y_{a^*_{i,t}})_{M_g}$ is non-identifiable. 
Subsequently, $\mathrm{ASE}^{\mathbf{N}}_{a_{i,t}, a^*_{i,t}}(y)_M$ is also non-identifiable.

Since we cannot identify the agent-specific effect on $S_{t+2}$, it follows that we cannot also identify it on any state variable $S_{t'}$ that causally depends on $S_{t+2}$, i.e., for any $t' > t+2$.
    
\end{proof}

Next, we prove Theorem \ref{thrm.id_ase_cf}.

\begin{proof}
    This result follows directly from Lemma \ref{lemma.cf_fpse_ase} and Lemma \ref{lemma.id_cf_fpse}.
    
\end{proof}



\section{Proof of Lemma \ref{lemma.nm_mmdp}}\label{proof.nm_mmdp}

\begin{proof}
We define the domain of each variable $V^i \in \mathbf{V}$ in $M$ as $\mbox{dom}\{V^i\} = \mbox{dom}\{X^i\}$.
Furthermore, we slightly abuse our notation to denote the set of variables in $\mathbf{X}$ that correspond to the variables $V^j$ in $Pa^i(G)$, 
$\mathbf{X}(Pa^i(G)) = \{X^j \in \mathbf{X}| j : V^j \in Pa^i(G)\}$.

Observe that function $f^i(pa^i(G), U^i)$ is equivalent to the \textit{conditional quantile} function $Q_{X^i | \mathbf{X}(Pa^i(G)) = pa^i(G)}(U^i)$, i.e., the \textit{quantile} function of random variable $X^i$ when variables in $\mathbf{X}(Pa^i(G))$ are fixed to their corresponding values in $pa^i(G)$. We consider the following equations
\begin{align}\label{eq.quant}
P(V^i \leq_i v^i | Pa^i(G) = pa^i(G)) &= P(f^i(Pa^i(G), U^i) \leq_i v^i | Pa^i(G) = pa^i(G)) \nonumber \\
&= P(f^i(pa^i(G), U^i) \leq_i v^i) \nonumber \\
&= P(Q_{X^i | \mathbf{X}(Pa^i(G)) = pa^i(G)}(U^i) \leq_i v^i) \nonumber \\
&= P(U^i \leq P(X^i \leq_i v^i | \mathbf{X}(Pa^i(G)) = pa^i(G))) \nonumber \\
&= P(X^i \leq_i v^i | \mathbf{X}(Pa^i(G)) = pa^i(G)).
\end{align}

The fifth step simply follows from the fact that $U^i \sim \mbox{Uniform}[0,1]$. The fourth step holds by the \textit{reproduction} property of quantile functions \cite{kampke2015income} [Theorem. 2.1], which we restate and prove for our setting next.

\begin{lemma}\label{lemma.quant_prop}
$Q_{X^i | \mathbf{X}(Pa^i(G)) = pa^i(G)}(u^i) \leq_i v^i$ iff $P(X^i \leq_i v^i | \mathbf{X}(Pa^i(G)) = pa^i(G)) \geq u^i$.
\end{lemma}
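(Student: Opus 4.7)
The plan is to prove the biconditional by establishing the two implications separately, using only the definition of $Q_{X^i \mid \mathbf{X}(Pa^i(G)) = pa^i(G)}(u^i)$ as the infimum specified in the statement of Lemma~\ref{lemma.nm_mmdp}, together with the fact that the conditional CDF $F(w) \coloneqq P(X^i \leq_i w \mid \mathbf{X}(Pa^i(G)) = pa^i(G))$ is monotone non-decreasing in $w$ with respect to the total ordering $\leq_i$. Monotonicity holds because for any $w_1 \leq_i w_2$ in $\mathrm{dom}\{X^i\}$, the event $\{X^i \leq_i w_1\}$ is contained in $\{X^i \leq_i w_2\}$ by transitivity of $\leq_i$.

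For the $(\Leftarrow)$ direction I would argue as follows: if $P(X^i \leq_i v^i \mid \mathbf{X}(Pa^i(G)) = pa^i(G)) \geq u^i$, then $v^i$ belongs to the set over which the infimum is taken in the definition of $Q_{X^i \mid \mathbf{X}(Pa^i(G)) = pa^i(G)}(u^i)$. Hence the quantile is $\leq_i v^i$. Since the paper assumes that the domains of observable variables are finite and discrete, this infimum is in fact a minimum, so there is no issue of it being attained.

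For the $(\Rightarrow)$ direction I would let $v^* \coloneqq Q_{X^i \mid \mathbf{X}(Pa^i(G)) = pa^i(G)}(u^i)$ and assume $v^* \leq_i v^i$. Because $\mathrm{dom}\{X^i\}$ is finite and $\leq_i$ is a total order, the minimum defining $v^*$ is attained, so $P(X^i \leq_i v^* \mid \mathbf{X}(Pa^i(G)) = pa^i(G)) \geq u^i$. Applying monotonicity of $F$ at $v^* \leq_i v^i$ then yields $F(v^i) \geq F(v^*) \geq u^i$, which is the desired inequality.

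I do not anticipate a serious obstacle: the only subtle points are (i) that the defining infimum is attained, which is immediate from finiteness of $\mathrm{dom}\{X^i\}$, and (ii) that the conditional CDF is monotone with respect to $\leq_i$, which follows from the set-containment argument above and does not require $\leq_i$ to be the numerical order. These two facts, combined, collapse the proof into a direct two-line argument in each direction.
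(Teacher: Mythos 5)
Your proposal is correct and follows essentially the same route as the paper's own proof: the $(\Leftarrow)$ direction via membership of $v^i$ in the set defining the infimum, and the $(\Rightarrow)$ direction via attainment of the infimum (from finiteness of $\mathrm{dom}\{X^i\}$) followed by monotonicity of the conditional CDF. Your explicit set-containment justification of that monotonicity with respect to $\leq_i$ is a small addition the paper leaves implicit, but it does not change the argument.
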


\begin{proof}
~
\begin{itemize}
    \item $P(X^i \leq_i v^i | \mathbf{X}(Pa^i(G)) = pa^i(G)) \geq u^i$ implies that 
    $$
    Q_{X^i | \mathbf{X}(Pa^i(G)) = pa^i(G)}(u^i) = \inf_{x^i \in \mbox{dom}\{X^i\}} \{P(X^i \leq_i x^i | \mathbf{X}(Pa^i(G)) = pa^i(G)) \geq u^i\} \leq_i v^i,
    $$
    because $v^i$ belongs to the set over which the infimum is formed.
    \item Let 
    $
    x^i_{inf} = \inf_{x^i \in \mbox{dom}\{X^i\}} \{P(X^i \leq_i x^i | \mathbf{X}(Pa^i(G)) = pa^i(G)) \geq u^i\}.
    $
    Then, $Q_{X^i | \mathbf{X}(Pa^i(G)) = pa^i(G)}(u^i) \leq_i v^i$ implies that 
    $$
    v^i \geq_i x^i_{inf}.
    $$
    Since the set $S = \{x^i \in \mbox{dom}\{X^i\} : P(X^i \leq_i x^i | \mathbf{X}(Pa^i(G)) = pa^i(G)) \geq u^i\}$ is finite, discrete and non-empty, the value $x^i_{inf}$ has to belong to $S$. This means that $P(X^i \leq_i x^i_{inf} | \mathbf{X}(Pa^i(G)) = pa^i(G)) \geq u^i$. By monotonicity of the cumulative distribution function it has to also hold that $P(X^i \leq_i v^i | \mathbf{X}(Pa^i(G)) = pa^i(G)) \geq u^i$.
\end{itemize}
\end{proof}

From Eq. \eqref{eq.quant}, it follows that the MMDP-SCM $M$ induces the transition probabilities and the initial state distribution of the MMDP, as well as the agents' joint policy $\pi$.
Therefore, joint distributions $P(\mathbf{X})$ and $P(\mathbf{V})$ are indeed equal.

Next, consider two noise values $u^i_1, u^i_2 \sim P(U^i)$ such that $u^i_1 < u^i_2$. By monotonicity of the cumulative distribution function, for any $pa^i(G) \in \mbox{dom}\{Pa^i(G)\}$, it holds that
\begin{align*}
    & \inf_{v^i \in \mbox{dom}\{V^i\}} \{P(X^i \leq_i v^i | pa^i(G)) \geq u^i_1\} \leq_i \inf_{v^i \in \mbox{dom}\{V^i\}} \{P(X^i \leq_i v^i | pa^i(G)) \geq u^i_2\} \\
    & \Rightarrow f^i(pa^i(G), u^i_1) \leq_i f^i(pa^i(G), u^i_2).
\end{align*}
Hence, variable $V^i \in \mathbf{V}$ is indeed noise-monotonic in $M$ w.r.t. $\leq_i$.

\end{proof}



\section{Proof of Proposition \ref{prop.nm_m}}\label{proof.nm_m}

\begin{proof}
We prove the first part of this lemma by contradiction. We first assume that $Y$ is noise-monotonic in $M$ w.r.t.~the numerical total ordering $\leq$. Let there now be two values $x_1, x_2 \in \{0, 1\}$ such that $\mathds{E}[Y | x_1] \leq \mathds{E}[Y | x_2]$ and $P(Y_{x_1} = 1 \land Y_{x_2} = 0) > 0$ (i.e., the monotonicity assumption is violated). This means that there has to be at least one noise value $u^Y_k$ such that $f^Y(x_1, u^Y_k) = 1$ and $f^Y(x_2, u^Y_k) = 0$. Because $Y$ is assumed to be noise-monotonic, for any noise value $u^Y \geq u^Y_k$ it must hold that $f^Y(x_1, u^Y) = 1$, and for any noise value $u^Y < u^Y_k$ it must hold that $f^Y(x_2, u^Y) = 0$. Consequently, the following inequality holds
\begin{align*}
\mathds{E}[Y | x_1] &= \int_{u^Y \sim P(U^Y)} P(u^Y) \cdot f^Y(x_1, u^Y) \,du^Y \\
&= \int_{u^Y \sim P(U^Y) | u^Y < u^Y_k} P(u^Y) \cdot f^Y(x_1, u^Y) \,du^Y + 1 + \int_{u^Y \sim P(U^Y) | u^Y > u^Y_k} P(u^Y) \cdot 1 \,du^Y \\
&> \int_{u^Y \sim P(U^Y) | u^Y < u^Y_k} P(u^Y) \cdot 0 \,du^Y + 0 + \int_{u^Y \sim P(U^Y) | u^Y > u^Y_k} P(u^Y) \cdot f^Y(x_2, u^Y) \,du^Y \\
&= \int_{u^Y \sim P(U^Y)} P(u^Y) \cdot f^Y(x_2, u^Y) \,du^Y \\
&= \mathds{E}[Y | x_2].
\end{align*}
Therefore, we have reached a contradiction.

We prove the second part of this lemma by counterexample.
Assume that noise variable $U^Y$ can only take the three values $\{1, 2, 3\}$, and that function $f^Y$ is defined as follows
\begin{align*}
    & f^Y(X = 0, U^Y = 1) = 1, 
    f^Y(X = 0, U^Y = 2) = 0,
    f^Y(X = 0, U^Y = 3) = 0, \\
    & f^Y(X = 1, U^Y = 1) = 1, 
    f^Y(X = 1, U^Y = 2) = 0,
    f^Y(X = 1, U^Y = 3) = 1.
\end{align*}
It can be easily inferred that in this SCM, variable $Y$ is monotonic relative to $X$, while it is not noise-monotonic w.r.t.~the numerical total ordering or any other total ordering $\leq_Y$ on $\{0,1\}$.

\end{proof}



\section{Proof of Theorem \ref{thrm.algo_unbias}}\label{proof.algo_unbias}

\begin{proof}
According to Theorem \ref{thrm.id_ase_cf} we have that $\mathrm{cf\text{-}ASE}^{\mathbf{N}}_{a_{i,t}}(y|\tau)_M = \mathrm{cf\text{-}ASE}^{\mathbf{N}}_{a_{i,t}}(y|\tau)_{\hat{M}}$. 
This means that the output of Algorithm \ref{alg.cf-ase} is the same when it takes as input either model $M$ or model $\hat{M}$.
Therefore, to prove Theorem \ref{thrm.algo_unbias} it would suffice to show that the output of Algorithm \ref{alg.cf-ase}, when it takes as input MMDP-SCM $M$, is an unbiased estimator of $\mathrm{cf\text{-}ASE}^{\mathbf{N}}_{a_{i,t}}(y|\tau)_M$.

We begin by formally expressing the output of Algorithm \ref{alg.cf-ase}
\begin{align}\label{eq.outcome}
    O(M, \tau, \mathbf{N}, A_{i,t}, a_{i,t}, Y, y, H) &= \frac{\sum_{\mathbf{u} \in \{\mathbf{u}_1, ..., \mathbf{u}_H\}} P(y | \mathbf{u})_{M^{do(I(\mathbf{u}))}}}{H} - P(y | \tau)_M \nonumber \\
    &= \frac{\sum_{\mathbf{u} \in \{\mathbf{u}_1, ..., \mathbf{u}_H\}} P(y | \mathbf{u})_{M^{do(I(\mathbf{u}))}}  - P(y | \mathbf{u})_M}{H},
\end{align}
where $I(\mathbf{u})$ is defined as in line $7$ of the algorithm. The second step of the equation holds because $\{\mathbf{u}_1, ..., \mathbf{u}_H\} \sim P(\mathbf{u} | \tau)$. 

By Definition \ref{def.cf-ase}, we have that
\begin{align*}
    \mathrm{cf\text{-}ASE}^{\mathbf{N}}_{a_{i,t}}(y|\tau)_M &= P(y_{\tau(A_{i,t})}|\tau;M)_{M^{do(I)}} - P(y|\tau)_M,
\end{align*}
where $I = \{A_{i',t'} := \tau(A_{i',t'})\}_{i' \notin \mathbf{N}, t' > t} \cup \{A_{i',t'} := A_{i',t',a_{i,t}}\}_{i' \in \mathbf{N}, t' > t}$.

Note that for any given noise $\mathbf{u} \sim P(\mathbf{u} | \tau)$, it holds that $A_{i',t', a_{i,t}}(\mathbf{u}) = \tau^h(A_{i',t'})$, where $\tau^h \sim P(\mathbf{V}|\mathbf{u})_{M^{do(A_{i,t} := a_{i,t})}}$. Hence, it follows that $P(y | \mathbf{u})_{M^{do(I)}} = P(y | \mathbf{u})_{M^{do(I(\mathbf{u}))}}$.
We consider next the following equations
\begin{align*}
    \mathds{E}_{\mathbf{u} \sim P(\mathbf{u} | \tau)}[P(y | \mathbf{u})_{M^{do(I(\mathbf{u}))}} - P(y | \mathbf{u})_M] &= \int
    P(y | \mathbf{u})_{M^{do(I(\mathbf{u}))}} \cdot P(\mathbf{u} | \tau) - P(y | \mathbf{u})_M \cdot P(\mathbf{u} | \tau) \,d\mathbf{u} \\
    &= \int P(y | \mathbf{u})_{M^{do(I)}} \cdot P(\mathbf{u} | \tau)  - P(y | \mathbf{u})_M \cdot P(\mathbf{u} | \tau) \,d\mathbf{u} \\
    &= \int P(y | \mathbf{u})_{M^{do(I)}} \cdot P(\mathbf{u} | \tau; M)_{M^{do(I)}}  - P(y | \mathbf{u})_M \cdot P(\mathbf{u} | \tau)_M \,d\mathbf{u} \\
    &= P(y | \tau; M)_{M^{do(I)}} - P(y | \tau)_M \\
    &= P(y_{\tau(A_{i,t})} | \tau; M)_{M^{do(I)}} - P(y | \tau)_M \\
    &= \mathrm{cf\text{-}ASE}^{\mathbf{N}}_{a_{i,t}}(y|\tau)_M,
\end{align*}
where the third step follows from the fact that the distribution of noise variables $\mathbf{U}$ remains the same between $M$ and $M^{do(I)}$. The fifth step follows from the consistency axiom.
We conclude then that $O(M, \tau, \mathbf{N}, A_{i,t}, a_{i,t}, Y, y, H)$ is indeed an unbiased estimator of $\mathrm{cf\text{-}ASE}^{\mathbf{N}}_{a_{i,t}}(y|\tau)_M$.

\end{proof}



\section{Proof of Lemma \ref{lemma.fpse_ase}}\label{proof.fpse_ase}

\begin{proof}

By Definitions \ref{def.ase} and \ref{def.fpse}, it follows that it suffices to show that $P(y_{a^*_{i,t}})_{M^{do(I)}} = P(y_{a^*_{i,t}})_{M_q}$.

Let $t_Y$ denote the time-step of variable $Y$. First, we show that for any noise $\mathbf{u} \sim P(\mathbf{u})$ the value of $S_{t_Y}(\mathbf{u})$ (or equivalently $Y(\mathbf{u})$) is the same in $M^{do(A_{i, t} := a^*_{i,t})}_q$ as in $M^{do(I \cup A_{i, t} := a^*_{i,t})}$. To do so, we use induction in the number of time-steps $t'$.

~\\
\textbf{Base case:} 
By construction of $M_q$, the value of $S_{t+1}(\mathbf{u})$ is the same in $M^{do(A_{i, t} := a^*_{i,t})}_q$ as in $M^{do(A_{i, t} := a^*_{i,t})}$. Furthermore, note that the set of interventions $I$ does not include interventions to variable $S_{t+1}$ or to any of its ancestors. As a result, the value of $S_{t+1}(\mathbf{u})$ is the same in $M^{do(I \cup A_{i, t} := a^*_{i,t})}$ as in $M^{do(A_{i, t} := a^*_{i,t})}$. We conclude then that the value of $S_{t+1}(\mathbf{u})$ is the same in $M^{do(A_{i, t} := a^*_{i,t})}_q$ as in $M^{do(I \cup A_{i, t} := a^*_{i,t})}$.

~\\
\textbf{Induction hypothesis:} 
We make the hypothesis that for some $t' \in [t+1, t_Y)$ the value of $S_{t'}(\mathbf{u})$ is the same in $M^{do(A_{i, t} := a^*_{i,t})}_q$ as in $M^{do(I \cup A_{i, t} := a^*_{i,t})}$.

~\\
\textbf{Induction step:} 
Under the induction hypothesis, we want to show that the value of $S_{t'+1}(\mathbf{u})$ is the same in $M^{do(A_{i, t} := a^*_{i,t})}_q$ as in $M^{do(I \cup A_{i, t} := a^*_{i,t})}$.
For convenience, we slightly abuse our notation and denote as $s^q_{t'+1}$ the value of $S_{t'+1}(\mathbf{u})$ in $M^{do(A_{i, t} := a^*_{i,t})}_q$, and as $s^I_{t'+1}$ the value of $S_{t'+1}(\mathbf{u})$ in $M^{do(I \cup A_{i, t} := a^*_{i,t})}$. 
From the hypothesis, it holds that the value $s_{t'}$ of $S_{t'}(\mathbf{u})$ is the same in $M^{do(A_{i, t} := a^*_{i,t})}_q$ as in $M^{do(I \cup A_{i, t} := a^*_{i,t})}$.

First, we consider the value of $S_{t'+1}(\mathbf{u})$ in $M^{do(I \cup A_{i, t} := a^*_{i,t})}$,
\begin{align}\label{eq.proof_I}
   s^I_{t'+1} = f^{S_{t'+1}}(s_{t'}, 
   \{A_{i',t', a_{i,t}}(\mathbf{u})\}_{i' \in \mathbf{N}},
   \{A_{i',t', a^*_{i,t}}(\mathbf{u})\}_{i' \notin \mathbf{N}},
   u^{S_{t'+1}}). 
\end{align}
 
Next, we consider the value of $S_{t'+1}(\mathbf{u})$ in $M^{do(A_{i, t} := a^*_{i,t})}_q$,
$$
s^q_{t'+1} = f^{S_{t'+1}}_q(pa^{S_{t'+1}}(q), u^{S_{t'+1}}),
$$ 
where $pa^{S_{t'+1}}(q)$ here denotes the value of $Pa^{S_{t'+1}}(q)$ in $M^{do(A_{i, t} := a^*_{i,t})}_q$ given $\mathbf{u}$. 
Note that $Pa^{S_{t'+1}}(q) = S_{t'}$, and since $S_{t'}(\mathbf{u}) = s_{t'}$ in $M^{do(A_{i, t} := a^*_{i,t})}_q$,
we can rewrite the equation of $s^q_{t'+1}$ as
$$
s^q_{t'+1} = f^{S_{t'+1}}_q(s_{t'}, u^{S_{t'+1}}).
$$ 
Based on the definition of function $f_q^{S_{t'+1}}$ we have that
\begin{align}\label{eq.proof_q1}
    s^q_{t'+1} = f^{S_{t'+1}}(s_{t'}, pa^{S_{t'+1}}(g)^e, pa^{S_{t'+1}}(g^*)^*, u^{S_{t'+1}}),
\end{align}
where $pa^{S_{t'+1}}(g^*)^*$ and $pa^{S_{t'+1}}(g)^e$ denote the values of $Pa^{S_{t'+1}}(g^*)_{a^*_{i,t}}$ and $Pa^{S_{t'+1}}(g)_{a_{i,t}}$ in $M$ given $\mathbf{u}$, respectively.
From the definition of subgraph $g^*$ it follows that $Pa^{S_{t'+1}}(g^*) = \{A_{i',t'}\}_{i' \notin \mathbf{N}}$. Eq. \eqref{eq.proof_q1} then can be rewritten as follows
\begin{align}\label{eq.proof_q2}
    s^q_{t'+1} =  f^{S_{t'+1}}(s_{t'}, pa^{S_{t'+1}}(g)^e, \{A_{i',t', a^*_{i,t}}(\mathbf{u})\}_{i' \notin \mathbf{N}}, u^{S_{t'+1}}).
\end{align}
Finally, from the definition of subgraph $g$ it follows that $Pa^{S_{t'+1}}(g) = \{A_{i',t'}\}_{i' \in \mathbf{N}}$. Eq. \eqref{eq.proof_q2} then can be rewritten as follows
\begin{align}\label{eq.proof_q}
    s^q_{t'+1} =  f^{S_{t'+1}}(s_{t'}, \{A_{i',t', a_{i,t}}(\mathbf{u})\}_{i' \in \mathbf{N}}, \{A_{i',t', a^*_{i,t}}(\mathbf{u})\}_{i' \notin \mathbf{N}}, u^{S_{t'+1}}).
\end{align}
From Equations \eqref{eq.proof_I} and \eqref{eq.proof_q}, it follows that $s^q_{t'+1} = s^I_{t'+1}$, and hence the induction step is concluded.

~\\
Based on the induction argument above, for every $\mathbf{u} \sim P(\mathbf{u})$ it holds that the value of $S_{t_Y}(\mathbf{u})$ or equivalently the value of $Y(\mathbf{u})$ is the same in $M^{do(A_{i, t} := a^*_{i,t})}_q$ as in $M^{do(I \cup A_{i, t} := a^*_{i,t})}$, and hence
$$
P(y_{a^*_{i,t}}|\mathbf{u})_{M^{do(I)}} = P(y_{a^*_{i,t}}|\mathbf{u})_{M_q}.
$$
Finally, the distribution of noise variables $\mathbf{U}$ is the same in $M^{do(I)}$ as in $M_q$, and thus we can derive the following equations
\begin{align*}
    \int P(y_{a^*_{i,t}}|\mathbf{u})_{M^{do(I)}} \cdot P(\mathbf{u})_{M^{do(I)}} \,d\mathbf{u} &= \int P(y_{a^*_{i,t}}|\mathbf{u})_{M_q} \cdot P(\mathbf{u})_{M_q}\,d\mathbf{u} \Rightarrow\\
    P(y_{a^*_{i,t}})_{M^{do(I)}} &= P(y_{a^*_{i,t}})_{M_q}.
\end{align*}
    
\end{proof}



\section{Proof of Proposition \ref{prop.fpse_pse}}\label{proof.fpse_pse}

\begin{proof}
By Definition \ref{def.pse}, we have that
\begin{align}\label{eq.pse-pse}
    PSE^g_{x, x^*}(y)_M &= P(y_x)_{M_g} - P(y_{x^*})_{M_g} \nonumber\\
    &= P(y_x)_{M_g} - P(y_{x^*})_M.
\end{align}
The second step holds because the quantity $P(y_{a_{i,t}})$ in the modified model $M_g$ corresponds to $P(y_{a_{i,t}})$ in the original model $M$. By Definition \ref{def.fpse}, we have that
\begin{align}\label{eq.fpse-pse}
    FPSE^{\overline{g}, g^\emptyset}_{x^*, x}(y)_M &= P(y_{x})_{M_q} - P(y_x)_M \nonumber\\
    &= P(y_{x})_{M_g} - P(y_x)_M.
\end{align}
The second step follows from $\mathbf{E}(g^\emptyset) = \emptyset$.
From Section \ref{sec.framework}, we have that
\begin{align}\label{eq.tce-pse}
    TCE_{x, x^*}(y)_M = P(y_x)_M - P(y_{x^*})_M.
\end{align}
By combining Equations \eqref{eq.pse-pse}, \eqref{eq.fpse-pse} and \eqref{eq.tce-pse} we get
\begin{align*}
    PSE^g_{x, x^*}(y)_M = FPSE^{\overline{g}, g^\emptyset}_{x^*, x}(y)_M + TCE_{x, x^*}(y)_M.
\end{align*}

\end{proof}



\end{document}